\newcommand{\otherlabel}[2]{\protected@edef\@currentlabel{#2}\label{#1}}
\newtheorem{theorem}{Theorem}
\newtheorem{lemma}{Lemma}
\newtheorem{proposition}{Proposition}
\newtheorem{definition}{Definition}
\newtheorem{assumption}{Assumption}
\newtheorem{remark}{Remark}
\renewcommand{\vec}[1]{\boldsymbol{#1}}
\DeclareMathOperator{\polylog}{polylog}
  \providecommand{\R}{\mathbb{R}} 
  \def\sign{\@ifnextchar*{\@sgnargscaled}{\@ifnextchar[{\sgnargscaleas}{\@ifnextchar{\bgroup}{\@sgnarg}{\sgn} }}}
  \def\@sgnarg#1{\sgn\rbr{#1}}
  \def\@sgnargscaled#1{\sgn\rbr*{#1}}
  \def\@sgnargscaleas[#1]#2{\sgn\rbr[#1]{#2}}
  \providecommand{\cE}{\mathcal{E}}
  \providecommand{\cF}{\mathcal{F}}
  \providecommand{\cN}{\mathcal{N}}
\newcommand{\speedup}[1]{{\color{gray}(\ifdim #1 pt > 0.3pt #1\else $< #1$\fi{}$\times$)}}
\newsavebox{\@brx}
\newcommand{\llangle}[1][]{\savebox{\@brx}{\(\m@th{#1\langle}\)}%
  \mathopen{\copy\@brx\mkern2mu\kern-0.9\wd\@brx\usebox{\@brx}}}
\newcommand{\rrangle}[1][]{\savebox{\@brx}{\(\m@th{#1\rangle}\)}%
  \mathclose{\copy\@brx\mkern2mu\kern-0.9\wd\@brx\usebox{\@brx}}}
\DeclareMathOperator{\Span}{Span}
\providecommand{\norm}[1]{\left\lVert#1\right\rVert}
  \providecommand{\R}{\mathbb{R}} %
  \providecommand{\Ec}[2]{{\mathbb E}_{#2}\left[#1\right] }%
  \providecommand{\erf}{\operatorname{erf}}
  \providecommand{\cE}{\mathcal{E}}
  \providecommand{\cF}{\mathcal{F}}
  \providecommand{\cN}{\mathcal{N}}
\newcommand{\dE}{\mathbb{E}}
\newcommand{\dP}{\mathbb{P}}
\providecommand{\mycomment}[3]{\todo[caption={},size=footnotesize,color=#1!20]{\textbf{#2: }#3}}%
\providecommand{\inlinecomment}[3]{%
  {\color{#1}#2: #3}}%
\newcommand\commenter[2]%
\newcommand\csname i#1\endcsname[1]{\inlinecomment{#2}{#1}{##1}}
\newcommand\csname #1\endcsname[1]{\mycomment{#2}{#1}{##1}}
  \definecolor{mydarkblue}{rgb}{0,0.08,0.45}
\definecolor{allowedcolor}{RGB}{21,115,17}
\definecolor{noyhatcolor}{RGB}{254,155,30}
\definecolor{optimalcolor}{RGB}{38,254,254}
\definecolor{notallowedcolor}{RGB}{255,0,0}
\definecolor{nodynamicscolor}{RGB}{192,150,192}
\renewcommand{\vec}[1]{\bm{#1}}
\newcommand\Et[1]{\mathbb{E}_t\left[#1\right]}
\title{Online Learning and Information Exponents: On The Importance of Batch size, and Time~/~Complexity Tradeoffs}
\author[1]{Luca Arnaboldi}
\author[1,3]{Yatin Dandi}
\author[1]{Florent Krzakala}
\author[2]{Bruno Loureiro} 
\author[1]{Luca Pesce}
\author[1]{Ludovic Stephan}
\affil[1]{\small Ecole Polytechnique F\'{e}d\'{e}rale de Lausanne, 
Information, Learning and Physics lab. CH-1015 Lausanne, Switzerland.}
\affil[2]{\small Département d’Informatique, École Normale Supérieure - PSL \& CNRS. 45 rue d’Ulm, F-75230 Paris cedex 05, France.}
\affil[3]{\small 
Ecole Polytechnique F\'{e}d\'{e}rale de Lausanne,
Statistical Physics of Computation Laboratory. CH-1015 Lausanne, Switzerland.}
\date{}
\begin{document}

\maketitle

\begin{abstract} 
We study the impact of the batch size $n_b$ on the iteration time $T$ of training two-layer neural networks with one-pass stochastic gradient descent (SGD) on multi-index target functions of isotropic covariates. We characterize the optimal batch size minimizing the iteration time as a function of the hardness of the target, as characterized by the information exponents.
We show that performing gradient updates with large batches $n_b \lesssim d^{\sfrac{\ell}{2}}$ minimizes the training time without changing the total sample complexity, where $\ell$ is the information exponent of the target to be learned \citep{arous2021online} and $d$ is the input dimension. However, larger batch sizes than $n_b \gg d^{\sfrac{\ell}{2}}$ are detrimental for improving the time complexity of SGD. We provably overcome this fundamental limitation via a different training protocol, \textit{Correlation loss SGD}, which suppresses the auto-correlation terms in the loss function. We show that one can track the training progress by a system of low-dimensional ordinary differential equations (ODEs). Finally, we validate our theoretical results with numerical experiments.
\end{abstract}
\section{Introduction}
\label{sec:main:introduction}

Descent-based algorithms, such as Stochastic Gradient Descent (SGD) and its variations, are the backbone of contemporary machine learning. Their simplicity in implementation, efficiency in operation, and notably effective performance in practice highlight their importance. A mathematical understanding of SGD's effectiveness remains a key focus in the field. Recent progress has been particularly noteworthy in the realm of shallow neural networks. A sequence of works demonstrated that optimizing large width two-layer neural networks can be mapped into a convex optimization problem over the space of probability measures of weights, the so-called mean-field analysis \citep{mei2018mean,chizat2018global,rotskoff2018trainability,sirignano2020mean}. 
Following this breakthrough, a large part of the theoretical effort has shifted to describing what class of functions can be efficiently learned by SGD, i.e. time and computational complexities required to learn a given class of functions. This has been, in particular, 
thoroughly analyzed in a series of recent works focusing on isotropic distributions (e.g. Gaussian, spherical or in the hypercube) and targets depending only on a few relevant directions (a.k.a. \emph{multi-index models}). A key result from this literature is that the time complexity of SGD scales with the covariates dimension according to the so-called \emph{information exponent} \citep{arous2021online} for single-index and \emph{leap complexity} \citep{abbe2021staircase,abbe2023sgd} for multi-index targets, sparking increasing interest from the theoretical machine learning community over the last few months \citep{damian2022neural,damian2024smoothing,dandi2023twolayer,bietti2023learning,ba2023learning, moniri2023theory, mousavihosseini2023gradientbased,zweig2023symmetric}. \\
Our work follows this thread, focusing instead on the effect of batch size $n_b$, parallelization, and sample-splitting into the overall complexity required to learn a multi-index target. Instead of looking at data one-by-one, as is common in theoretical studies, we investigate the finite $n_b$ problem, and characterize the time/complexity tradeoff when learning with one-pass SGD. 
Our central goal is to paint a complete picture 
of how fast generalized linear models and two-layer neural networks adapt to the features of training data as a function of $n_b$, and the structure of the target function. 

Our analysis sheds light on a fundamental limitation of one-pass (or online) SGD, namely that for batch sizes larger than the input dimension, the dynamics of the training algorithm is dominated by negative feedback terms that do not permit to reduce the time iterations needed to learn the target. Therefore, we provide a rigorous solution to this fundamental limitation of SGD by considering gradient updates on the correlation loss. 
Our approach, drawing inspiration from the \textit{summary statistics} method employed by \cite{saad.solla_1995_line,arous2021online,ben2022high}, concentrates on the overlaps of neurons with the target subspace and their norms. This differs from recent studies, such as those by \cite{abbe2022merged} and \cite{damian2022neural}, which focus on the full gradient vector.



\section{Setting, contributions, and related works}
\label{sec:main:setting}
\begin{figure}[t]
    \centering
    \hspace{-1cm}
    \includegraphics[width=.8\textwidth]{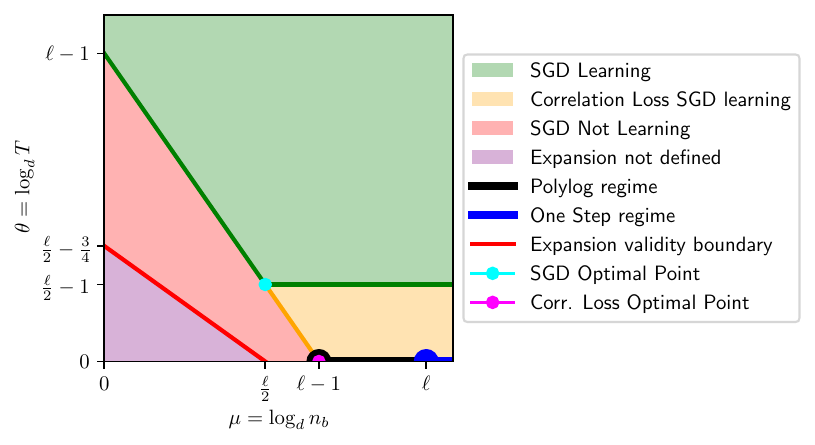}
    \caption{\textbf{Time / Batch size tradeoff for weak recovery:} Phase diagram illustrating different SGD learning regimes as a function of the batch size exponent $\mu=\log_d n_b$ and weak recovery time exponent $\theta=\log_d T$. The analysis is dependent on the target's information exponent $\ell$, this particular plot is valid when \(\ell\ge3\). 
    \textbf{\color{notallowedcolor} Not correlating region:} SGD is not able to achieve weak recovery.
    \textbf{\color{noyhatcolor} Self-interaction regime:} SGD is not able to perform weak recovery, but Correlation loss SGD overcomes this limitation.
    \textbf{\color{allowedcolor} Weak recovery region:} SGD successfully achieve weak recovery.
    Note that it exists an \textbf{\color{optimalcolor} optimal choice} at batch size $n_b =O(d^{\sfrac\ell2})$ that minimizes the number of iterations needed by SGD, and another \textbf{\color{magenta} optimal point} at \(n_b=O(d^{\ell-1})\) for \emph{Correlation Loss}.
    The critical line where \(n_b=\Omega(d^{\ell-1})\) is not addressed by our formal . See details about the other two regions (\textbf{\color{black} Polylog Regime} and \textbf{\color{blue} One-step regime} \citep{dandi2023twolayer}) in Appendix~\ref{app:sec:weakrecoveryglm}.
    } 
    \label{fig:cold_start_pd}
\end{figure}

Consider a two-layer neural network with activation function $\sigma$ and first and second layer weights given by $W\in\mathbb{R}^{p\times d}$ and $\vec{a}\in\mathbb{R}^{p}$ respectively:
\begin{align}
\label{eq:main:newtwork_def}
    f(\vec z) = \frac{1}{p} \sum_{j=1}^p a_j \sigma{( \langle \vec z, \vec w_j \rangle)}\,.
\end{align}
 We are interested in studying the capacity of $f$ to learn from training data $\mathcal{D} = \{(\vec{z}^{\nu}, y^{\nu})_{\nu \in [N]}\in\mathbb{R}^{d+1}\}$. In the following, we work under the following setting.

\paragraph{Data model ---} As hinted in the introduction, we focus on the case the (noisy) labels depend on the covariates only through a projection over a $k$-dimensional subspace:
\begin{align}
\label{eq:main:teacher_def}
     y^\nu = h^\star(W^\star \vec z^\nu)  + \sqrt{\Delta} \xi^\nu, \quad\vec z^\nu \sim \mathcal{N}(0,I_d) 
\end{align}
where $W^\star = \{\vec w^\star_r\}_{r \in [k]}\in \mathbb{R}^{k \times d}$ are the target weights, $h^\star: \mathbb{R}^k \to \mathbb{R}$ is a non-linear activation function, $\xi^\nu \sim \mathcal{N}(0,1)$ is the label noise with variance given by $\Delta\ge0$. We focus on the case where $k = O(1)$ and $d$ is large, i.e. the label only depends on a few directions of a high-dimensional ambient space. The target function $f^{\star}(\vec{z}) = h^{\star}(W^{\star}z)$ is often refereed in the literature as a \emph{multi-index model}. 

Note that the setting above where we assume a generative model for the data and study the capacity of a model to learn is also known as teacher-student model in the literature. We adopt this terminology and refer to $f^\star$ and $f$ as the \textit{teacher} and the \textit{student} functions, respectively. Similarly, we refer to $W^\star$ and $W$ as the \textit{teacher} and \textit{student} weights.

\paragraph{Hardness of the learning task ---} Characterizing what class of targets are efficiently learned by two-layer networks is arguably one of the key question in theoretical machine learning. The pivotal work of \cite{arous2021online} provably describes that for $k=1$, the hardness of the learning task is encoded by a single number, the information exponent $\ell$. More precisely, given the activation $h^\star$ in \eqref{eq:main:teacher_def}, $\ell$ is the lowest degree of the Hermite polynomials $\{\mathrm{He}_j\}_{j \in \mathbb N}$ appearing in the Hermite expansion of $h^\star$. This notion generalizes direction-wise for multi-index models ($k>1$), where $\ell$ is known as leap complexity \citep{abbe2023sgd}. 

\begin{definition}[Information Exponent \citep{arous2021online}]
\label{def:main:information_exponent}
    \begin{align}
        \ell = \min\{j \in \mathbb{N} : \mathbb{E}_{\xi \sim \mathcal{N}(0,1)}\left[ h^\star(\xi) \mathrm{He}_j(\xi) \neq 0\right]\}
    \end{align}
\end{definition}

\paragraph{Training algorithm --- } Given the training data $\mathcal{D}$, we consider the training of $(W,\vec{a})$ under a \emph{sample splitting} scheme: the data is partitioned $\mathcal{D} = \bigcup_{t=1}^{T}\mathcal{D}_{t}$ into $T =  \left \lfloor \sfrac{N}{n_{b}} \right \rfloor$ disjoint batches $\mathcal{D}_{t}$ of size $n_{b}$, which are used, one every iteration, to train the network. We consider a common assumption for the training algorithm that is to decouple the training of the hidden weights $W$ and the second layer weights $\vec a$.
By keeping fixed the second layer weights at initialization $\vec a = \vec a_0$, the hidden layer weights $W$ are estimated using (projected) SGD: 
\begin{align}
    \vec w_{j,t+1} =  \frac{\vec{w}_{j,t} - \gamma \nabla_{\vec w_{j,t}}\ell_t}{\norm{\vec{w}_{j,t} - \gamma \nabla_{\vec w_{j,t}}\ell_t}} \qquad \forall t \in [T], \, \forall j \in [p]
    \label{eq:main:gd_update_weights}
\end{align}
where:
\begin{align} \label{eq:main:projected_sgd}
    \ell_t = \frac{1}{2n_{b}} \sum_{\nu = 1}^{n_b}(y^\nu-f(\vec z^\nu))^2, \qquad \forall t \in [T]
\end{align}
is the empirical risk over a batch of data. Two comments are in place. First, the gradient at each step is computed using the empirical loss given by fresh, previously, unseen samples coming from $\mathcal{D}_{t}$. Each gradient is thus an unbiased estimator of the true gradient, which means that on average this algorithm minimizes the population risk over $W$:
\begin{align}
\label{eq:main:risk}
        \mathcal R = \mathbb{E}_{(\vec z,y)} \left[\frac12 (y-f(\vec z))^2\right]
\end{align}

Second, the spherical projection allow us to focus just on the direction learned by the network, putting aside the effect of the change of the norm of the weights. Note that in equation \eqref{eq:main:gd_update_weights} we have kept the read-out layer $\vec{a}$ fixed. Eventually, the second layer could also be trained with SGD, as the first layer, or even with the Moore-Penrose pseudo-inverse solution; In this paper, however, we consider it fixed and focus on the \textit{feature learning} step, i.e., the recovery of the low-dimensional space spanned by $W^{\star}$.

\paragraph{High-dimensional regime ---} We focus in the high-dimensional regime where $d\to\infty$. Of particular interest is the case where the batch size $n_b$ scales with the dimension $d$. Indeed, in modern machine learning, and in particular in the realm of distributed and federated learning, scenarios with large batches, a single pass, and few iterations often becomes the norm \citep{goyal2017accurate,li2020review} (as for instance when training large language models), further underlining the relevance of this scenario.
%
More precisely, we assume a scaling of the relevant parameters, i.e., learning rate and the batch size, with \(d\), as follows:
\begin{equation}\label{eq:main:gamma_nb_scalings}
\gamma = \gamma_0 d^{-\delta} \quad\text{and}\quad n_b = n_0 d^\mu.
\end{equation}
with \(\mu \ge 0\) and \(\delta\) could possibly be any real value. The exponents $(\delta, \mu)$ characterize the Time / Complexity tradeoff illustrated in the phase diagram (Fig.~\ref{fig:cold_start_pd}). More precisely, the figure shows the time complexity $T = T_0 d^{\theta}$ as a function of the batch size exponent $\mu$. The time exponent $(\theta)$ is linked to the learning rate one $(\delta)$ and the information exponent $(\ell)$, and determining this relation is the main object of analysis of the following sections.  


\paragraph{Weak recovery of the target ---} The central object of our analysis is to characterize the time iterations needed for the SGD dynamics defined in eq.~\eqref{eq:main:projected_sgd} to learn the low-dimensional features $W^\star$. More precisely, we are interested in studying the number of steps to achieve order one correlation with the target weights $W^\star$. We refer to this condition as \textit{weak recovery} of the target subspace, formalized in the following definition.
\begin{definition}[Weak recovery]
\label{def:weak_recovery}
    The \textit{target subspace} $V^\star$ is defined as the span of the rows of the target weights $W^\star$:
    \begin{align}
    \label{eq:main:teacher_subspace_def}
        V^\star = {\rm span}(\vec w^\star_1, \dots, \vec w^\star_k)
    \end{align}
    We define the following weak recovery stopping time for a parameter $\eta \in (0,1)$ independent from $d$:
\begin{equation}
    t^{+}_{\eta} = \operatorname{min}\{t \geq 0: \lVert W W^{\star \top} \lVert_F \geq \eta\}
\end{equation}
\end{definition}

Our key objective is to characterize the largest affordable batch size $n_b$ to achieve weak recovery of the relevant target subspace $V^\star$ while minimizing the training time iterations $T$. Indeed, the updates of one-pass SGD in eq.~\eqref{eq:main:projected_sgd} consist of sums of independent terms that can be parallelized efficiently with decentralized learning protocols.

Our \textbf{main contributions} in this paper are the following:
\begin{itemize}
[itemsep=1pt,leftmargin=1em]
    \item We study how the batch size influences the number of steps required to learn a target function, for different information exponents of the problem. We introduce a schematic phase diagram describing the different learning regimes, see Fig.~\ref{fig:cold_start_pd}. 
    
    \item We show that performing gradient updates with large batch sizes can reduce the training time without changing the total sample complexity to weakly recover the teacher subspace only up to \(n_b \lesssim \Psi(\ell)\) samples per steps, with \(d\) the data dimension and $\ell$ the information exponent of the target. Beyond this limit, larger batch sizes are detrimental for one-pass SGD.
    
     \item We characterize that it is possible to improve over this fundamental limitation of one-pass SGD by using gradient updates on the correlation loss, namely  \textit{Correlation loss SGD}. We provably show that the number of steps needed to weakly correlate with the target with this new training protocol can then be pushed down to \(T=\mathrm{polylog}(d)\) when using batch sizes $n_b = O(d^{\ell -1})$, with $\ell$ the information exponent.
     Additionally, we provide sharp prescription on how to scale the learning rate with batch size and input dimension, in order to achieve the best time-memory tradeoff.
     
    \item We show that the asymptotic training dynamics is described by a system of Ordinary Differential Equations (ODEs) that can be solved exactly. We leverage on the ODE description to characterize the different learning phases of two-layer networks when intialized with non-vanishing initial correlation with the target direction to be learned (warm starts). We also discuss finite \(d\) corrections to the asymptotic dimension-free description. 
    
    \item Finally, we validate and illustrate our theoretical results with numerical experiments.
\end{itemize}

The code to reproduce representative figures are available in the Github repository \href{https://github.com/IdePHICS/batch-size-time-complexity-tradeoffs}{https://github.com/IdePHICS/batch-size-time-complexity-tradeoffs}. We refer to App.~\ref{sec:app:hyperparams} for details on the numerical implementations while the rigorous proofs of the main results are detailed in App.~\ref{sec:app:proofs}. 

\paragraph{Other related works ---} 
The dynamics of Stochastic Gradient Descent (SGD) in two-layer neural networks, particularly when trained on synthetic Gaussian data, have been a topic of interest since the seminal works in the mid-1990s \cite{saad.solla_1995_line,saad1995dynamics,MBiehl_1995,PRiegler_1995}. This area has experienced a resurgence in recent years \cite{tan2023online,goldt2019dynamics,veiga2022phase,arnaboldi2023high, arnaboldi2023escaping,berthier2023learning,arous2021online,paquette2022homogenization, collinswoodfin2023hitting, martin2023impact}.

Many theoretical efforts highlighted the class of functions that are efficiently learned by two layer neural networks. In the context of single-index targets, \cite{arous2021online} introduces the notion of information exponent to quantify the hardness of the learning task. Similarly, for multi-index models,  
\citep{abbe2022merged, abbe2023sgd}, building on their earlier work \citep{abbe2021staircase},
demonstrated how the leap complexity of target functions dictates the amount of training samples needed from two-layer networks in the mean-field limit to learn the target. Note that \cite{abbe2023sgd} also considered the case of $n_b \lesssim O(d)$. A large number of theoretical studies devoted to the understanding of the feature learning regime in two-layer networks often assume an asymptotically vanishing initialization for the second layer weights $\vec a_0$ in eq.~\eqref{eq:main:newtwork_def}, see e.g. \citep{abbe2022merged,berthier2023learning,abbe2023sgd}. Although this assumption is amenable for theoretical characterizations, our analysis provably shows that a careful reasoning on the second layer magnitude is needed to offer a complete portrait of the learning dynamics of SGD. More precisely, we describe a sharp divergence when the batch size $n_b \gg d^{\sfrac{\ell}{2}}$ between the dynamics of SGD when optimizing the MSE loss (vanilla SGD), in contrast to the correlation loss $\Tilde{\ell} = \frac{1}{n_b}\sum_{\nu \in [n_b]} 1-y^{\nu}f(\vec z^{\nu})$ (Correlation loss SGD). The latter training protocol is indeed equivalent to consider an asymptotically vanishing second layer weights $\vec a_0$ at initialization in the optimization routine, e.g. see 
\citep{damian2024smoothing}.

Closer to us, the analysis of the {\it first} gradient descent step with large $n_b$ has been discussed in detail in recent papers \citep{ba2022high, damian2022neural,dandi2023twolayer}. \cite{ba2022high} showed that a single large learning rate gradient step allows to beat kernel methods when the number of training samples is proportional to the input dimension . While their results are limited to single-index target and to a single gradient step, \cite{damian2022neural} further showed that with $n = \omega(d^2)$ samples, two-layer nets can learn multi-index target function with zero first Hermite coefficient ($\ell$=2). \cite{dandi2023twolayer} extended their conditions on the sample complexity to general $\ell\geq 1$, showed this complexity is optimal for single-step learning, and extended the results to higher information exponents. Although motivated from different objectives,  \cite{sclocchi2024different} heuristically sketch a phase diagram for the performance of SGD on realistic datasets as a function of the algorithm's relevant parameters, i.e. batch size and learning rate.

A common assumption in  theoretical studies is to consider
sample-splitting schemes for the training protocol.  At each iteration, the optimization algorithm is ran using a fresh batch of observations of the model, drawn independently of past iterations; this routine has been used extensively in the analysis of iterative algorithms (see e.g. \citep{chandrasekher2021sharp,jain2013low,hardt2014fast,jain2015fast,kwon2019global}).
    
\section{Time / Complexity tradeoffs}
\label{sec:main:cold_start}
In this section, we characterize the intertwined dependence between the batch size and the hardness of the learning task in determining the number of one-pass SGD iterations needed to achieve weak recovery of the teacher subspace as in Definition~\ref{def:weak_recovery}. We offer a detailed picture of the tradeoffs to consider in order to minimize the training iteration time $T$, compactly illustrated in the phase diagram in Fig.~\ref{fig:cold_start_pd}.

\paragraph{Network initialization ---}
We consider random initialization for the hidden layer weights of the network~\eqref{eq:main:newtwork_def}, while the second layer weights are kept fixed:
\begin{align}\label{eq:initial_conditions_unconstrainted}
    \vec w_{j,0}\sim \mathrm{Unif}(\mathbb{S}^{d-1}),\quad a_{j,0} = 1 \qquad j \in [p].
\end{align}
We will refer to this situation as \emph{cold start}, since the initial network correlation with the target directions is vanishing when \(d\to+\infty\). 

\paragraph{Generalized Linear Models ---}
The seminal work of \cite{arous2021online} studies the weak recovery problem for Generalized Linear Models (GLMs), i.e. $p=1$, when learning single-index targets ($k=1$). Starting from randomly initialized networks as defined in~\eqref{eq:main:newtwork_def}, the time iterations needed for one-pass SGD (with one sample per batch) to achieve weak recovery of the target direction respects:
\begin{align}
\label{eq:main:time_complexity_single_index}
    I(\ell) =
    \begin{cases}
        &O(d^{\ell -1} ) \qquad \hspace{0.8em}\text{if $\ell>2$} \\
        &O(d \log{d} ) \qquad \text{if $\ell=2$} \\
        &O(d) \qquad \hspace{2.2em}\text{if $\ell=1$}
    \end{cases}
\end{align}
where $\ell$ is the information exponent of the target $f_\star$. 
\\ As far as weak recovery of the target subspace is concerned, the characterization of multi-index targets follows the same lines of thought just replacing the information exponent by the leap index of the target, e.g. see Definition~\textbf{3} of \cite{dandi2023twolayer} or Definition~\textbf{1} of \cite{abbe2023sgd}. 
Similarly to the information exponent definition, the leap index is the lowest rank of the tensors appearing in the Hermite expansion of the target $f^\star$. 
Therefore, we choose to study in the following the training dynamics for the $p=k=1$ scenario for general batch sizes $n_b$. 
This assumption is useful to provide rigorous guarantees as it largely reduces the complexity of the projected SGD dynamics. However, we argue (supported by numerical simulations in Appendix~\ref{app:multiindex_model}) that the same phenomenology will hold for larger values of $p$ and $k$. 



\subsection{Weak recovery with one-pass SGD}
Consider the gradient descent dynamics defined on the hidden layer weights by eq.~\eqref{eq:main:gd_update_weights}. We focus on the description of the time evolution of the correlation between the network's hidden layer weight and the target direction:
\begin{align}
    m_t= \langle \vec w_t , \vec w^\star \rangle 
\end{align}
Our first main result is to characterize the time to achieve weak recovery of the target direction $\vec w^\star$ as a function of the batch size and the information exponent of the target. We make very weak assumptions on the activation and labeling functions, namely only assuming a sub-polynomial growth:
\begin{assumption}[Polynomial growth]\label{assump:poly_growth}
    The activation function $\sigma$ is differentiable everywhere, except maybe at a finite set of points. Both $\sigma'$ and $f^\star$ are sub-polynomial, i.e. there exists a $k > 0$ and a constant $C$ such that for any $x\in \mathbb R$
    \begin{equation}
        |\sigma'(x)|\leq C(1+x)^k \quad \text{and} \quad |f^\star(x)|\leq C(1+x)^k
    \end{equation}
\end{assumption}

\begin{assumption}[Well-posedness]\label{assump:well_posed}
    Let $(c_k)_{k \geq 0}$ and $(c_k^\star)_{k\geq 0}$ be the Hermite coefficients of $\sigma$ and $h^\star$, respectively. Then $c_\ell \neq 0$, and if $\ell$ is even, then $c_\ell c_\ell^\star > 0$.
\end{assumption}

\begin{assumption}[Initialization]\label{assump:init}
    There exists a $\kappa > 0$ such that $m_0 > \kappa / \sqrt{d}$. Further, if $\ell$ is odd, then $m_0$ is such that
    \[ c_\ell c_\ell^\star m_0 > 0 \]
\end{assumption}
Assumption 2 ensures that the optimization problem is achievable for gradient flow on the population loss $\mathcal{R}$. Indeed, one can show that when $m \approx 0$,
\[ \mathcal{R} = 2(1 - c_\ell c_\ell^\star m^\ell) + o(m^\ell); \]
as a result, if $\ell$ is even and $c_\ell c_\ell^\star < 0$, then $m = 0$ is a local maximum of $\mathcal{R}$ and weak recovery is impossible. When $\ell$ is odd, the point $m=0$ is always a strict saddle, so Assumption \ref{assump:init} that we start on the correct side of the saddle. Under the initialization scheme described by Equation~\ref{eq:initial_conditions_unconstrainted}, the first condition is satisfied with arbitrarily high probability upon decreasing $\kappa$, while the second is a $1/2$-probability event.

We are now in the position to formally state the result:
\begin{theorem} [Projected SGD weak recovery]
\label{thm:main:sgd_weak_recovery}
   Consider the projected SGD algorithm with square loss  (Eqs. \eqref{eq:main:gd_update_weights}, \eqref{eq:main:projected_sgd}), and suppose that Assumptions \ref{assump:poly_growth}-\ref{assump:init} hold. There exist absolute constants $c_\gamma, C_\gamma$ such that if
   \[ \gamma\leq c_\gamma \min\left(1, n_b d^{-\left(\frac\ell2\vee 1\right)} \log(d)^{-C_\gamma}\right), \]
    then for large enough $d$ we have with probability $1-ce^{-c\log(n)^2}$
   \begin{equation}
       t_\eta^+ \leq C\gamma^{-1}d^{\left(\frac{\ell}2 - 1 \right) \vee 0}\log(d).
   \end{equation}
\end{theorem}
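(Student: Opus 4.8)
The plan is to track the scalar process $m_t = \langle \vec w_t, \vec w^\star\rangle$ and show it escapes from the $O(1/\sqrt d)$ regime in at most $C\gamma^{-1} d^{(\ell/2 - 1)\vee 0}\log d$ steps. The first step is to write the one-step update for $m_t$ induced by the projected SGD recursion \eqref{eq:main:gd_update_weights}. Expanding the gradient of $\ell_t$ and using the spherical projection, one gets $m_{t+1} = m_t + \gamma\, g(m_t) + \gamma\, \xi_t + (\text{normalization correction})$, where $g(m)$ is the population drift $g(m) = \mathbb{E}[\,\partial_m \text{(correlation term)}\,]$ and $\xi_t$ is a mean-zero noise term whose conditional variance, crucially, scales like $1/n_b$ times the single-sample variance. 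Using Assumption~\ref{assump:well_posed} and the Hermite expansion, near $m\approx 0$ the drift satisfies $g(m) = \Theta(m^{\ell-1})$ with the correct sign (here Assumption~\ref{assump:init} guarantees we are on the right side of the saddle when $\ell$ is odd, and Assumption~\ref{assump:well_posed} rules out the bad-sign even case). The normalization correction is lower order, of size $O(\gamma^2)$ or $O(\gamma m \cdot \|\nabla\ell_t\|^2)$, and can be absorbed.

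The core of the argument is a one-dimensional stochastic comparison / potential argument. I would define a stopping time for the noise staying controlled — roughly, the event that $|\sum_{s\le t}\xi_s|$ never exceeds a small multiple of the accumulated drift — and show via a martingale concentration bound (Freedman / Azuma-type, using the sub-polynomial growth from Assumption~\ref{assump:poly_growth} to get sub-exponential tails on each increment, hence the $ce^{-c\log(n)^2}$ failure probability) that this event holds with the stated probability provided $\gamma \le c_\gamma n_b d^{-(\ell/2\vee 1)}\log(d)^{-C_\gamma}$. The precise role of this upper bound on $\gamma$ is to ensure that per block of $\approx d$ steps the total noise contribution $\sqrt{d/n_b}\cdot \gamma \cdot (\text{scale})$ is dominated by the deterministic drift at scale $m \sim 1/\sqrt d$, which is $\gamma d^{-(\ell-1)/2}$; balancing these is exactly where the $n_b d^{-\ell/2}$ threshold appears. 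Conditioned on the good noise event, $m_t$ is lower-bounded by the deterministic recursion $\tilde m_{t+1} = \tilde m_t + c\gamma \tilde m_t^{\ell-1}$ started from $\kappa/\sqrt d$; a standard comparison with the ODE $\dot m = c\gamma m^{\ell-1}$ shows this reaches a constant $\eta$ in time $O(\gamma^{-1} (\kappa/\sqrt d)^{-(\ell-2)}) = O(\gamma^{-1} d^{\ell/2 - 1}\log d)$ for $\ell > 2$, in time $O(\gamma^{-1}\log d)$ for $\ell = 2$ (the $\log d$ coming from the exponential escape), and $O(\gamma^{-1})$ for $\ell = 1$ — matching $d^{(\ell/2-1)\vee 0}\log d$.

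The main obstacle I expect is controlling the noise term uniformly over the whole trajectory while $m_t$ is small: the per-step signal is only $\gamma m_t^{\ell-1}$, which is polynomially tiny, so one needs the noise cancellations to be extremely tight over $\mathrm{poly}(d)$ steps. The right way to handle this is a \emph{blocked} analysis: partition $[0,t_\eta^+]$ into phases where $m_t$ roughly doubles (geometric in $\ell=2$, or stays in dyadic scales $[2^{-j}, 2^{-j+1}]$ for $\ell>2$), and within each phase apply a fresh Freedman bound with the variance proxy computed at that scale; because the number of phases is only $O(\log d)$ and the contribution of the first (smallest-$m$) phase dominates the time budget, the union bound over phases costs only a $\log d$ factor and does not degrade the probability. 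A secondary technical point is handling the case where $\gamma$ is capped by the other branch, $\gamma \le c_\gamma$ (relevant when $n_b d^{-(\ell/2\vee 1)}$ is already $\Omega(1)$, i.e.\ very large batch / small dimension regime), where the normalization nonlinearity must be shown not to overshoot past the sphere; this is where the $\min(1,\cdot)$ in the hypothesis is used, and it is routine once the clean-drift picture is established.
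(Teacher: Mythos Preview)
Your proposal is essentially the paper's own strategy: decompose the one-step update of $m_t$ into a drift $\Theta(\gamma m_t^{\ell-1})$, a martingale noise term with per-step variance $O(1/n_b)$, and higher-order normalization corrections; control the martingale uniformly in time via sub-exponential concentration; then compare to the deterministic recursion $\tilde m_{t+1}=\tilde m_t+c\gamma\tilde m_t^{\ell-1}$ started from $\kappa/\sqrt d$. The paper carries this out via a single uniform martingale bound over $[0,t_{\max}]$ and then invokes the comparison argument from \cite{arous2021online}, rather than your dyadic-phase blocking; both routes work, with the paper's being slightly more direct and yours more modular.

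One point is understated in your sketch. You call the constraint $\gamma\le c_\gamma$ a ``secondary technical point'' about normalization overshoot in the large-batch regime. In the paper this constraint is \emph{the} structural feature that separates Theorem~\ref{thm:main:sgd_weak_recovery} (square loss) from Theorem~\ref{thm:main:no_yhat_weak_recovery} (correlation loss): the square-loss gradient contains the self-interaction piece $\hat{\vec g}_t$ with $\Et{\langle\hat{\vec g}_t,\vec w_t\rangle}=c^{\mathrm{sq}}=O(1)$, so the radial component $\langle\vec g_t,\vec w_t\rangle$ is order one regardless of $m_t$. The paper handles this by writing $\vec w_t-\gamma\vec g_t=(1-\gamma\langle\vec g_t,\vec w_t\rangle)\vec w_t+\gamma\vec g_t^\bot$ and absorbing the radial part into a random rescaling $c_t=(1-\gamma\langle\vec g_t,\vec w_t\rangle)^{-1}$, which lies in $[1/2,2]$ precisely when $\gamma\le c_\gamma$ (the event $\cE_t$). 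Your Taylor-expansion picture ultimately arrives at the same place (the $\gamma^2 a^2 m_t$ terms cancel at second order), but the constraint is not about overshoot per se---it is about keeping the random weight-decay factor $(1-\gamma\langle\vec g_t,\vec w_t\rangle)$ bounded away from zero, and this is exactly why square-loss SGD cannot benefit from $\gamma\gg 1$ while correlation-loss SGD can.
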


\begin{table*}[t]
\fontsize{8.5pt}{8.5pt}
\begin{center}
\begin{tabular}{r| c | c | c |l}
   & SGD & SGD & Correlation loss SGD & One step  \\ 
  &with $n_b \lesssim d^{\sfrac{\ell}{2}}$ &with $ d^{\sfrac{\ell}{2}} <\!\!< n_b \lesssim d^{\max (\ell-1,1) }$ &with $n_b=o\left(d^{\max(\ell-1, 1)}\right)$ &with $n_b \!=\! O(d^\ell)$ \\ 
 \hline  $\ell\!=\!1$& $T \!= O(\sfrac{d}{n_b}), N \!=\! O(d)$ & $T\!=\!O(1), N \!=\! O(d)$ & $T\!=\!O(\sfrac{d}{n_b}), N \!=\! O(d)$   & $T\!=\!1, N \!=\! O(d)$\\ 
 \hline  $\ell\!=\!2$ & $T\!=\! O(\sfrac{d \log{d}}{n_b}), N\!=\!O(d \log{d})$ & $T\!=\!O(\log{d}), N \!=\! O(d \log{d})$ & $T \!=\! O( \sfrac{d\log{d}}{n_b}), N \!=\! O(d \log{d})$ & $T\!=\!1, N \!=\! O(d^2)$ \\
 \hline
 $\ell>2$ & $T \!=\! O(\sfrac{d^{\ell -1}}{n_b}), N \!=\! O(d^{\ell-1})$ & $T=O(d^{\sfrac{\ell}{2} - 1}), N \!=\! O(n_b d^{\sfrac{\ell}{2} - 1})$ & $T\!=\!O(\sfrac{d^{\ell-1}}{n_b}), N \!=\! O(d^{\ell-1})$ & $T\!=\!1, N\!=\! O(d^{\ell})$ \\
 \hline
\end{tabular}
\caption{\textbf{Time~/~Complexity tradeoffs:} Number of iterations $T$ and the total number of samples $N$ needed to achieve weak recovery of the target for different training protocols in high dimensions. \textbf{Left:} One-pass SGD of batch size $n_b = d^{\sfrac{\ell}{2}}$, in this regime the optimal time complexity is obtained rescaling by $n_b$ the result of \cite{arous2021online} for $n_b=1$, i.e. by choosing the optimal learning rate $\gamma = O(n_b d^{-\sfrac{\ell}{2}})$. \textbf{Center-left:} One-pass SGD with batch size $d^{\sfrac{\ell}{2}} <\!\!< n_b \lesssim d^{\max (\ell-1,1)}$, for hard problems ($\ell > 2$) the sample complexity is significantly increased with respect to the $n_b=1$ case up to $N = O(n_bd^{\sfrac{\ell}{2}-1})$. The learning rate cannot be increased proportionally to $n_b$ in this region, fixed to be $\gamma = O(1)$. \textbf{Center-Right:} Correlation loss SGD with $n_b = o(d^{\max(\ell-1, 1)})$, this training protocol overcomes the limitation of SGD when $n_b >\!\!>d^{\sfrac{\ell}{2}}$ and $\ell >2$; the total sample complexity is $N=O(d^{\ell-1})$. The learning rate is fixed again to be proportional to the batch size $\gamma = O(n_bd^{-\sfrac{\ell}{2}})$. \textbf{Right:} The target is weakly recovered with one GD step of $n_b=O(d^\ell)$ batch. The learning rate is chosen as $\gamma = O( d^{\sfrac{(\ell-1)}{2}})$ for the One Step routine \cite{dandi2023twolayer}.
} 
\label{table:weak_recovery}
\end{center}
\end{table*}

 \subsection{Illustration of Theorem \ref{thm:main:sgd_weak_recovery}} The phase diagram in Fig.~\ref{fig:cold_start_pd} exemplifies Theorem~\ref{thm:main:sgd_weak_recovery}. We identify three \textit{learning phases}: SGD learning, Correlation Loss SGD learning, and SGD impossible. These regions are explored by varying the batch size and learning rate exponents $\delta,\mu$. Our theory characterizes the optimal learning rate to achieve the lowest possible time iterations of SGD to weakly recover the target direction $\vec w^\star$ when the batch size respects $n_b = o(d^{\ell -1})$:
 \begin{align}
     \delta^\star(\mu) = 
     \begin{cases}
         &\frac{\ell}{2} - \mu  \qquad \text{if} ~ ~ \mu< \sfrac{\ell}{2} \\
         & 0 \qquad \qquad \text{otherwise} 
     \end{cases}
 \end{align}
 
\textbf{Weak recovery region ---} In the region $n_b \lesssim d^{\sfrac{\ell}{2}}$ there is a net benefit in using larger batch sizes in the SGD optimization. 
This section shows a similar phenomenology to \cite{arous2021online}: if we optimally choose the learning rate exponent $\delta^\star(\mu)$, the number of time iterations needed to weakly recover the teacher direction $\vec w^\star$ is simply $T(n_b) = \sfrac{I(\ell)}{n_b}$, rescaling straightforwardly the time complexity of the $n_b=1$ case in eq.~\eqref{eq:main:time_complexity_single_index}. By considering higher values for the learning rate ($\delta < \delta^\star(\mu)$) SGD is not able to weakly recover the signal as the dynamics is dominated by terms contracting the network~/~target correlation to zero, defining the SGD impossible region. Vice versa, if one takes into account lower learning rates ($\delta > \delta^\star(\mu)$), it is certainly possible to weakly-recover the target, but at a higher time complexity cost. 

\textbf{Self-interaction regime --- } Conversely, the region $d^{\sfrac{\ell}{2}} \ll n_b \lesssim d^{\max (\ell-1,1)}$ does not adhere to the same straightforward paradigm. 
Indeed, standard SGD is not able to achieve weak recovery of the teacher direction using $T(n_b)\! =\! \sfrac{I(\ell)}{n_b}$ time iterations, but a simple modification of it - that we call \emph{Correlation Loss SGD} - is able to. The number of steps needed to weakly recover the target with this new training protocol can then be pushed down to \(T=\mathrm{polylog}(d)\) when using batch sizes $n_b = O(d^{\max (\ell-1,1)})$. We refer to the next section for a detailed analysis of this regime.
 
\textbf{One step regime ---} Recent works have discussed the role of one large learning rate gradient descent step (\textit{giant step}) when training of two-layer networks \citep{ba2022high, damian2022neural, dandi2023twolayer}. More precisely, \cite{dandi2023twolayer} sharply characterizes the section $n_b = \Omega(d^{\ell})$ where it is possible to learn the teacher direction in just one step by setting the learning rate to $\delta_{\rm{giant-step}}(\mu) = \frac{1-\ell}{2}$.

\subsection{The self-interaction regime}
Surprisingly, when the learning rate becomes extensive ($\gamma = \omega(1)$), the usual SGD algorithm struggles to achieve weak recovery. This can be explained by writing the gradient update as
\[ \vec{w}_t + \gamma \vec{g}_t = (1 - \gamma\langle \vec{g}_t, \vec{w}_t \rangle)\vec{w}_t + \vec{g}_t^\bot, \]
where $\vec{g}_t, \vec{g}_t^\bot$ are the gradient at time $t$ and its component orthogonal to $\vec{w}_t$, respectively. As a result, projected gradient descent can be seen as a version of spherical SGD with a random weight decay $\gamma \langle \vec{g}_t, \vec{w}_t \rangle$. When $\gamma = \omega(1)$, this weight decay also becomes of order $\omega(1)$, which leads to very unpredictable behavior of the process $(\vec{w}_t)_{t \geq 0}$.

 In this section, we study a modified version for the training protocol, in which the self-interaction term $\langle \vec{g}_t, \vec{w}_t \rangle$ is much smaller; we will refer to this new algorithm as \emph{Correlation loss SGD} (see e.g. \cite{damian2024smoothing}), as it effectively amounts to gradient updates on the correlation loss:
\begin{align} \label{eq:main:correlation_loss}
\Tilde{\ell} = \frac{1}{n_b}\sum_{\nu \in [n_b]} 1-y^{\nu}f(\vec z^{\nu})
\end{align}

The above-described protocol is equivalent to consider a vanishing initialization scale for the second layer weights $\vec a_0$ of the network~\ref{eq:main:newtwork_def}. Such assumptions are often considered in different theoretical efforts (see e.g. \cite{abbe2022merged,berthier2023learning,abbe2023sgd}). However, Fig.~\ref{fig:cold_start_pd} illustrates that a careful analysis of the initialization scale $\vec a_0$ is needed to paint an exhaustive description of the SGD dynamics. Indeed, considering Correlation loss SGD allows to overcome the limitations highlighted by Theorem~\ref{thm:main:sgd_weak_recovery} for projected SGD. In particular, Correlation loss SGD is able to access the yellow region depicted in Fig.~\ref{fig:cold_start_pd} where the time complexity can be reduced again to $\Tilde{T}(n_b) = \sfrac{I(\ell)}{n_b}$ using the optimal learning rate $\Tilde{\delta}^\star(\mu) = \sfrac{\ell}{2} - \mu$ even for $\mu > \sfrac{\ell}{2}$. This is precisely stated in the following theorem. 
\begin{theorem} [\emph{Correlation Loss SGD} weak recovery]
\label{thm:main:no_yhat_weak_recovery}
   Consider the projected SGD algorithm with correlation loss (eqs. \eqref{eq:main:gd_update_weights}, \eqref{eq:main:correlation_loss}), and suppose that Assumptions \ref{assump:poly_growth}-\ref{assump:init} hold. There exists absolute constants $c_\gamma, C_\gamma$ such that if
   \[ \gamma \leq c_\gamma \log(d)^{-C_\gamma} \min\left( n_b d^{-\left(\frac\ell2\vee 1\right)}, \sqrt{\frac{n_b}d} \right) \]
    
   Then if $d$ is large enough, we have with probability $1-ce^{-c\log(n)^2}$
   \begin{equation}
      t_\eta^+ \leq C \max\left(1, \gamma^{-1}d^{\left(\frac{\ell}2 - 1 \right) \vee 0}\log(d)\right).
   \end{equation}
\end{theorem}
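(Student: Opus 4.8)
The plan is to reuse the one‑dimensional ``summary statistic'' machinery behind Theorem~\ref{thm:main:sgd_weak_recovery}, re‑deriving the recursion for $m_t = \langle\vec w_t,\vec w^\star\rangle$ with the correlation‑loss gradient $\tilde{\vec g}_t := -\nabla_{\vec w_t}\tilde\ell_t = \frac1{n_b}\sum_{\nu}y^\nu\sigma'(\langle\vec z^\nu,\vec w_t\rangle)\vec z^\nu$ in place of the square‑loss one. Writing the unnormalized step as $\vec v_t = \vec w_t + \gamma\tilde{\vec g}_t$ and expanding $\|\vec v_t\|^{-1} = (1 + 2\gamma\langle\tilde{\vec g}_t,\vec w_t\rangle + \gamma^2\|\tilde{\vec g}_t\|^2)^{-1/2}$ gives, to leading order,
\begin{equation}\label{eq:prop:recursion}
  m_{t+1} = m_t + \gamma\,\Phi(m_t) \;-\; \tfrac12\gamma^2\,m_t\,\|\tilde{\vec g}_t\|^2 \;+\; \gamma\,\Xi_t \;+\;(\text{higher order}),
\end{equation}
where $\Phi(m) = \E[\langle\tilde{\vec g}_t,\vec w^\star\rangle\mid\mathcal F_t] - m\,\E[\langle\tilde{\vec g}_t,\vec w_t\rangle\mid\mathcal F_t]$ and $\Xi_t$ is a martingale increment. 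Two features of the correlation loss drive the argument. First, by Stein's lemma and the Hermite expansions of $\sigma,h^\star$ (Assumption~\ref{assump:well_posed}), $\Phi(m) = \ell\,\ell!\,c_\ell c_\ell^\star\, m^{\ell-1}(1+o(1))$ near $m=0$. Second --- the gain over Theorem~\ref{thm:main:sgd_weak_recovery} --- the correlation‑loss gradient has no $\sigma\sigma'$ self‑interaction term, so $\E[\langle\tilde{\vec g}_t,\vec w_t\rangle\mid\mathcal F_t] = O(m_t^\ell)$ rather than the $\Theta(1)$ ``random weight decay'' that the square loss produces; hence \eqref{eq:prop:recursion} carries no $\Theta(\gamma)$ deterministic term and we may take $\gamma = \omega(1)$.

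\textbf{Controlling the perturbations.} Next I would bound the three error contributions in \eqref{eq:prop:recursion} against the signal $\gamma\,\ell\,\ell!\,c_\ell c_\ell^\star m^{\ell-1}$. (i) A second‑moment computation (using Assumption~\ref{assump:poly_growth} to truncate polynomial tails, at the cost of $\mathrm{polylog}(d)$ factors) gives $\E[\|\tilde{\vec g}_t\|^2\mid\mathcal F_t] = \Theta(d/n_b) + O(1)$, so the normalization term contributes a deterministic drift $\approx -\tfrac12\gamma^2(d/n_b)\Theta(1)\,m_t$; for the signal to dominate at the bottleneck scale $m\sim\kappa/\sqrt d$ one needs $\gamma\,d^{-(\ell-1)/2}\gtrsim\gamma^2(d^{1/2}/n_b)$, i.e.\ $\gamma\lesssim n_b d^{-(\ell/2\vee 1)}$ --- this is precisely the constraint placing the initialization $m_0$ in the basin of attraction of weak recovery (for $\ell\ge 3$, above the unstable interface between the normalization‑dominated region near the saddle and the signal‑dominated region; for $\ell\le 2$, that the net linear drift coefficient is positive). (ii) The martingale increment obeys $\E[\Xi_t^2\mid\mathcal F_t]\lesssim n_b^{-1}(1+m_t^2)\,\mathrm{polylog}(d)$ by batch averaging of $O(1)$‑variance terms, and a Freedman‑type maximal inequality shows its running sum is negligible over the horizon $\gamma^{-1}d^{(\ell/2-1)\vee 0}\log d$ in the same window of $\gamma$. (iii) A uniform high‑probability bound $\|\tilde{\vec g}_t\|^2\lesssim (d/n_b)\,\mathrm{polylog}(d)$ keeps $\|\vec v_t\|$ bounded away from $0$ step by step provided $\gamma^2 d/n_b = O(1)$, i.e.\ $\gamma\lesssim\sqrt{n_b/d}$ --- the second constraint, ensuring no single batch collapses $m_t$. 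Combining (i)--(iii) yields, on an event of probability $1-ce^{-c\log(n)^2}$, a lower bound of the form $m_{t+1}\ge m_t + \tfrac12\gamma\,\ell\,\ell!\,c_\ell c_\ell^\star\, m_t^{\ell-1}$ (with error terms summably small) for all $t$ with $m_t\le\eta$.

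\textbf{Two‑phase hitting‑time argument and main obstacle.} With \eqref{eq:prop:recursion} under control the conclusion follows the familiar split. In the \emph{saddle phase} $m_t\in[\kappa/\sqrt d,\varepsilon]$ I would stochastically dominate $(m_t)$ from below by the deterministic recursion $\bar m_{t+1} = \bar m_t + \tfrac12\gamma\,\ell\,\ell!\,c_\ell c_\ell^\star\,\bar m_t^{\ell-1}$ (Assumption~\ref{assump:init} guarantees we start on the correct side of the saddle with $m_0>\kappa/\sqrt d$), whose escape time is $\Theta(\gamma^{-1}d^{(\ell/2-1)\vee 0})$ for $\ell\ge 3$, $\Theta(\gamma^{-1}\log d)$ for $\ell=2$, and $\Theta(\gamma^{-1})$ for $\ell=1$; the $\log d$ in the statement absorbs the $\ell=2$ logarithm and the slack needed for the martingale concentration. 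In the \emph{macroscopic phase} $m_t\in[\varepsilon,\eta]$ the drift is $\Omega(\gamma)$ and the iteration is an $O(\gamma^{-1})$‑step discretization of its limiting ODE, contributing a lower‑order term; together these give $t_\eta^+\le C\max(1,\gamma^{-1}d^{(\ell/2-1)\vee 0}\log d)$. The delicate point, exactly as in Theorem~\ref{thm:main:sgd_weak_recovery}, is the saddle phase: one must show the $O(\gamma/\sqrt{n_b})$ martingale fluctuations neither push $m_t$ across the unstable interface nor overwhelm the minute $m^{\ell-1}$ drift over the long horizon $\sim d^{\ell-1}/n_b$, \emph{while simultaneously} the $\gamma^2(d/n_b)m_t$ normalization drift stays subdominant --- it is exactly the intersection of the two windows $\gamma\lesssim n_b d^{-(\ell/2\vee1)}$ and $\gamma\lesssim\sqrt{n_b/d}$ that makes both controls hold at once, and pinning down the exponents $C_\gamma$ in the polylog truncations is the bookkeeping‑heavy part of the argument.
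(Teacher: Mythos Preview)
Your outline matches the paper's argument in the regime $\gamma = O(1)$, but there is a genuine gap when $\gamma = \omega(1)$ --- which is precisely the regime where Theorem~\ref{thm:main:no_yhat_weak_recovery} improves on Theorem~\ref{thm:main:sgd_weak_recovery}. Your recursion for $m_{t+1}$ comes from Taylor-expanding $\|\vec v_t\|^{-1}$, and your ``macroscopic phase as $O(\gamma^{-1})$-step ODE discretization'' both presuppose that the per-step increment is $o(1)$. Once $\gamma m_t^{\ell-1}$ reaches order one --- and this happens strictly before $m_t$ reaches $\eta$ whenever $\gamma \gg \eta^{-(\ell-1)}$ --- the numerator $m_t + \gamma\langle\tilde{\vec g}_t,\vec w^\star\rangle$ and the squared norm $\|\vec v_t\|^2$ are both $\Theta(1 + \gamma^2 m_t^{2(\ell-1)})$ rather than $1+o(1)$, so the expansion is vacuous and the ODE picture collapses (``$O(\gamma^{-1})$ steps'' with $\gamma=\omega(1)$ is fewer than one step). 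Relatedly, your bound (iii), $\|\tilde{\vec g}_t\|^2 \lesssim (d/n_b)\,\mathrm{polylog}(d)$, is false once $m_t$ is macroscopic: the mean of the gradient contributes $\phi(m_t)^2 \sim m_t^{2(\ell-1)} = \Theta(1)$, which dominates $d/n_b$ exactly in the large-batch regime $n_b \gg d$ you are targeting.

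The paper's fix is to introduce an additional stopping time $\tilde t_{\gamma,\varepsilon}^+ = \min\{t:\gamma m_t^{\ell-1}\geq\varepsilon\}$ and to run the Taylor-based differential inequality only up to $t_\eta^+ \wedge \tilde t_{\gamma,\varepsilon}^+$; on that window $\gamma\langle\tilde{\vec g}_t,\vec w_t\rangle = O(\gamma m_t^\ell) = O(\varepsilon m_t)$ stays small, so the good event $|\gamma\langle\tilde{\vec g}_t,\vec w_t\rangle|\leq 1/2$ holds and the expansion is legitimate. If $\gamma\eta^{\ell-1}\leq\varepsilon$ then $\tilde t_{\gamma,\varepsilon}^+\geq t_\eta^+$ and the linearized analysis already concludes. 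Otherwise $\tilde t_{\gamma,\varepsilon}^+$ is hit first, and at that instant one argues \emph{directly} from the ratio $m_{t+1} = \langle\vec v_t,\vec w^\star\rangle/\|\vec v_t\|$ --- no expansion --- that both numerator and denominator are $\Theta(\gamma m_t^{\ell-1})$ (here the condition $\gamma\leq c_\gamma\sqrt{n_b/d}$ is used to control the $d/n_b$ piece of $\|\tilde{\vec g}_t\|^2$), so a \emph{single} further step yields $m_{t+1}\geq\eta'$ for an absolute constant $\eta'>0$. This one-step jump is what produces the $\max(1,\cdot)$ in the statement and is the missing ingredient in your outline.
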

 The derivation of Theorems \ref{thm:main:sgd_weak_recovery} and \ref{thm:main:no_yhat_weak_recovery} generalizes \cite{arous2021online} which studies the $n_b =1$ case. Informally, the result is obtained by analyzing the stability of the equation for the correlation \(m_t\), along with the requirement on the step-size for the suppression of the effects of the noise across time. However, there is a major difficulty introduced by the large stepsize regime: when the gradient updates become larger, the Taylor-inspired bounds used in \cite{arous2021online} become vacuous. We work around this problem by showing that in this regime, there is a \emph{one-step improvement} which jumps directly to meaningful correlation with the target vector. All details can be found in App. \ref{sec:app:proofs}. We provide in Table~\ref{table:weak_recovery} a representative summary of the results in Thms~(\ref{thm:main:sgd_weak_recovery}.~\ref{thm:main:no_yhat_weak_recovery}) characterizing the time/complexity tradeoffs to achieve weak recovery of general single index target $f^\star$.

The theoretical predictions of Thm.~\ref{thm:main:no_yhat_weak_recovery} are evaluated in Fig.~\ref{fig:no_yhat_recovery}. The plot compares the student-teacher weight correlation (\(m_t=\langle \vec w_t, \vec w^\star \rangle\)) achieved by vanilla projected SGD and \emph{Correlation Loss SGD} as a function of time. The teacher activation $h^\star$ is fixed to be the third Hermite polynomial ($\ell = 3$), and the batch size varies, effectively changing the region of the phase diagram considered. In agreement with Theorem~\ref{thm:main:no_yhat_weak_recovery}, the figure shows that \emph{Correlation Loss SGD} is always able to achieve faster weak recovery with respect to SGD. Furthermore, the batch size that can be used with \emph{Correlation Loss SGD} in combination with the optimal learning rate $\Tilde{\delta}^\star(\mu) = \sfrac{\ell}{2} - \mu$ is larger, as presented in the phase diagram of Figure~\ref{fig:cold_start_pd}.

\begin{remark}    Theorem~\ref{thm:main:no_yhat_weak_recovery} does not claim superiority of \emph{Correlation Loss SGD} with respect to plain SGD when trying to fully learn the target, but only for achieving weak-correlation faster (Definition~\ref{def:weak_recovery}). As Figure~\ref{fig:no_yhat_recovery} shows, \emph{Correlation Loss SGD} escapes the initial dynamical plateau faster, but is then limited by a loss function not designed properly to reach the global minimum. In Appendix~\ref{sec:app:hyperparams} we investigate the possibility to combine both the algorithms sketched in Fig.~\ref{fig:no_yhat_recovery}, namely escaping the initialization plateau with Correlation loss SGD and then learn the function with SGD; we refer to this protocol as \textit{Adaptive SGD}. Moreover, as Fig.~\ref{fig:cold_start_pd} and Table~\ref{table:weak_recovery} illustrate, the benefits of using Correlation loss SGD are limited to settings in which $\ell >2$. Indeed, the Self-interaction regime (depicted in yellow in Fig.~\ref{fig:cold_start_pd}) is not present for $\ell \le 2$.
\end{remark}

\begin{figure}[t]
    \centering
    \includegraphics[width=\textwidth]{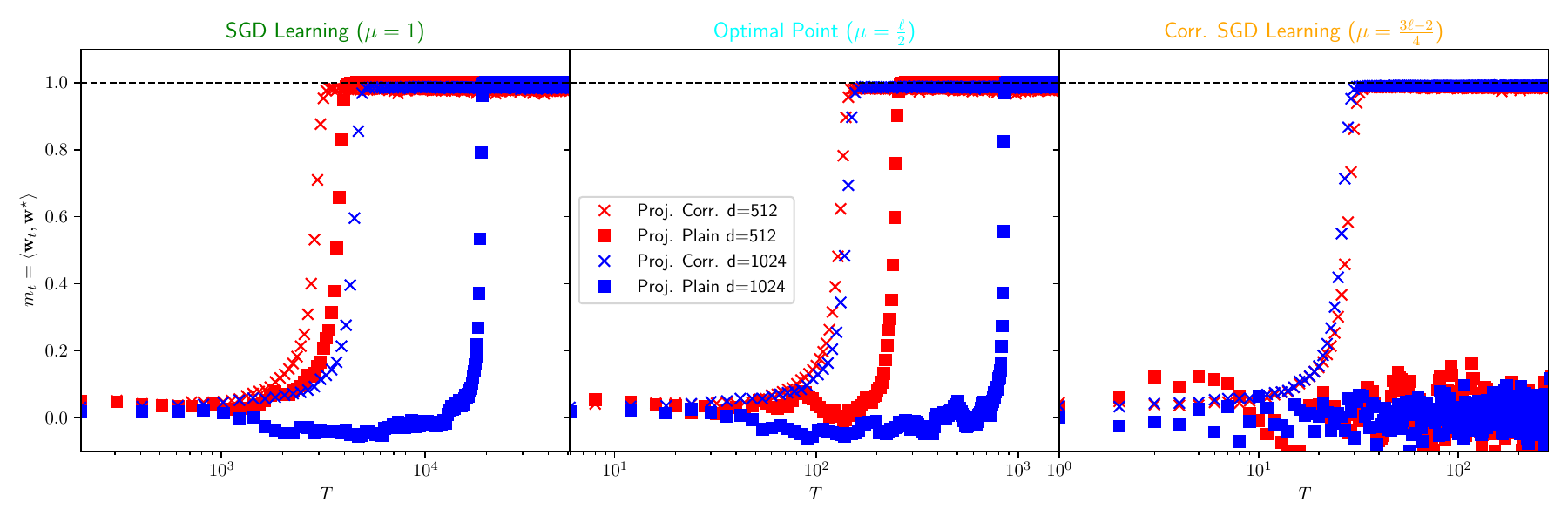}
    \caption{\textbf{{Correlation Loss SGD} weak recovery:} Comparison between the performance of plain SGD and the Correlation Loss SGD, in different regions of the phase diagram, and for different sizes \(d\). The plot shows the test error as a function of the optimization steps. Both the teacher and the student activation functions are fixed to \(\sigma = h^\star =\text{He}_3\), so the information exponent is \(\ell=3\). In all the three plots we vary the value of \(\mu\), while \(\delta = \mu - \sfrac{\ell}{2}\).
    Theorem~\ref{thm:main:no_yhat_weak_recovery} predicts that the \emph{Correlation Loss SGD} weakly recovers the target direction while SGD fails when \(\delta<0\), in accordance to what is shown in the plot. Note that the numbers of steps needed for the target recovery drastically decrease when \(\mu\) becomes large in accordance with Theorems~\ref{thm:main:sgd_weak_recovery},\ref{thm:main:no_yhat_weak_recovery}.
    }
\label{fig:no_yhat_recovery}
\end{figure}
\section{Exact asymptotic description}
\label{sec:main:warm_start}
We now characterize the exact asymptotic description of the dynamics of two-layer networks trained with SGD. In Fig.~\ref{fig:warm_start_pd} we sketch a representative phase diagram as a function of the relevant parameter of the algorithm, i.e. the learning rate and the batch size. The plot identifies different regions of parameters defining the network's learning efficiency.

\paragraph{Sufficient statistics ---}
Our study, like many other efforts \citep{ben2022high,saad.solla_1995_line}, is based on the concentration of the neurons' overlaps with the target subspace and their norms. This approach only requires the knowledge for every optimization step $t \in [T]$ of the above defined overlaps, often referred to as \textit{sufficient statistics}. Let the pre-activations be defined as:
\begin{align}
    \vec \lambda_t = W_t \vec z \qquad  \text{and} \qquad \vec \lambda^\star = W^\star \vec z
\end{align}
Thanks to the Gaussian nature of the data, the pre-activations at any time step $t$ are jointly Gaussian vectors $(\vec{\lambda}_t, {\vec{\lambda}^{\star}})\sim\mathcal{N}(\vec{0}_{p+k}, \Omega_t)$ with covariance
$\Omega_t \in\mathbb{R}^{(p+k)\times (p+k)}
$: 
\begin{equation}
\label{eq:main:covariance_def}
\Omega_t \coloneqq
\begin{pmatrix}
Q_t & M_t\\
{M_{t}^\top} & P
\end{pmatrix}=
\begin{pmatrix}
W_t{W_t}^{\top} & W_t{W^{\star}}^{\top}\\
W^{\star}W_{t}^\top & W^{\star} W^{\star \top}
\end{pmatrix}
\end{equation}
We refer to $M_t,Q_t$ as the \textit{order parameters}.
\begin{figure*}
    \centering
    \includegraphics[width=0.35\textwidth]{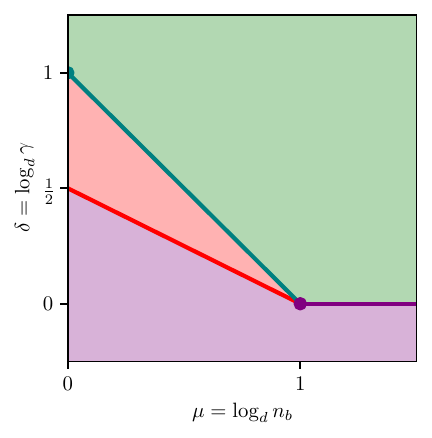}
    \includegraphics[width=0.38\textwidth]{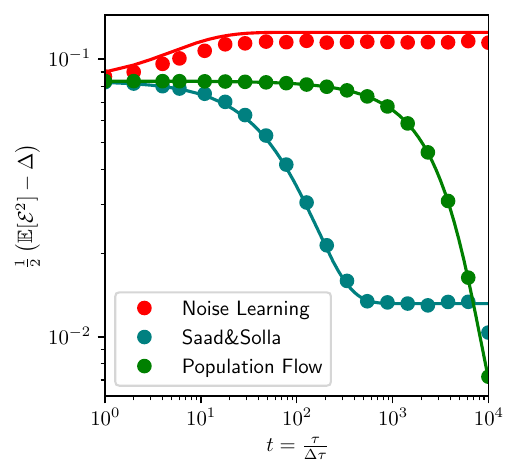}
    \caption{\textbf{Exact asymptotic description:}
        Exact asymptotic characterization of the dynamics of two-layer networks trained with SGD as a function of the batch size ($n_b$) and the learning rate ($\gamma$). \textbf{Left:} Illustration of the different dynamical regimes in a compact phase diagram. \textbf{\color{allowedcolor} Population flow region:} The dynamics is equivalent to population gradient flow. \textbf{\color{notallowedcolor} Noise learning region:} The high-dimensional noise terms dominate the dynamics. 
        \textbf{\color{teal} Saad\&Solla line:}
        The learning dynamics attains a plateau characterized by the noise variance \cite{saad.solla_1995_line}. \textbf{\color{nodynamicscolor} Dynamics not defined:} The deterministic low-dimensional description of the eq.~\eqref{eq:spherical_closed_form_ode} is not valid.
        \textbf{Right:} The figure shows a comparison of numerical simulations (dots) and theoretical prediction (continuous lines) for three instances $(\delta,\mu)$ associated with different learning regimes (identified by the corresponding colors). For both theory and simulations, the test error is plotted as a function of SGD iterations. We consider a matching architectures problem, i.e. \(\ h^\star = \sigma = \erf\) activation, and hidden units \(p=2,k=2\).
    }
    \label{fig:warm_start_pd}
\end{figure*}
\subsection{Closed form equations}
We are now in the position to state our proposition that
provides a set of deterministic ODEs to describe one-pass SGD in high-dimensions.
This portrayal depends ultimately only on the values of the values of the learning rate and the batch size, as quantified by the exponents \((\delta, \mu) \). 
\begin{proposition}
\label{prop:main:exact_asympt}
Consider \(\bar\Omega(t)\) the solution of the system of ordinary differential equations
\begin{equation}
\label{eq:spherical_closed_form_ode}
\begin{split}
    \dod{M_{jr}}{\tau} =& \Psi_{jr}{(\Omega)} - \frac{M_{jr}}{2}\Phi_{jj}{(\Omega)} \\
    \dod{Q_{jl}}{\tau} =& \Phi_{jl}{(\Omega)} - \frac{Q_{jl}}{2}\left(\Phi_{jj}{(\Omega)} + \Phi_{ll}{(\Omega)}\right)
\end{split}
\end{equation}
where we introduced:
\begin{equation}
\label{eq:informal_ODE}
\begin{split}
    \Psi_{jr}(\Omega) =&  \mathbf{1}_{\{\delta\ge0 \bigcap 2\delta+\mu\ge1\}}\frac{\gamma_0}{p}a_j \psi_{jr}  \\
    \Phi_{jl}(\Omega) =&  \mathbf{1}_{\{\delta\ge0 \bigcap 2\delta+\mu\ge1\}}\frac{\gamma_0}{p}\left(a_j^{t}\phi^{\rm{GF}}_{jl}+a_l^{t}\phi^{\rm{GF}}_{lj}\right) \\
        &+\mathbf{1}_{\{\delta+\mu\ge1 \bigcap 2\delta+\mu\le1\}}\frac{\gamma_0^2}{p^2n_0}a_j^{t} a_l^{t} \phi^{\rm{HD}}_{jl} 
\end{split}\end{equation}
and auxiliary integrals bearing expectations over $\mathcal{N}(\vec 0, \Omega)$:
\begin{align*}
    \psi_{jr} =&\mathbb{E}\left[ \sigma'(\lambda_j)\lambda^{\star}_{r} \mathcal{E} \right] \\
    \phi^{\rm{GF}}_{jl}=&\mathbb{E}\left[ \sigma'(\lambda_j)\lambda_{l} \mathcal{E} \right] \\
    \phi^{\rm{HD}}_{jl}=&\mathbb{E}\left[ \sigma'(\lambda_j)\sigma'(\lambda_l)\mathcal{E}^2  \right]  \\
    \mathcal{E} =& g_\star(\vec \lambda^\star) - \frac{1}{p} \sum_{j=1}^p a_j \sigma(\vec \lambda)
    %
\end{align*}
Then, there exists a constant $C$ independent from the input data dimension, such that the discrete stochastic process for the covariance \(\{\Omega_t\}_{t\in\mathbb{N}}\) in eq.~\eqref{eq:main:covariance_def} induced by projected SGD dynamics is approximated by the deterministic covariance matrix \(\bar\Omega(t)\) with precision:
\begin{equation}
\label{eq:main:time_horizon}
    \mathbb{E} \norm{\Omega_t - \bar \Omega(t\Delta \tau)} \le e^{Ct}\sqrt{\Delta \tau}\,
\end{equation}
with
    $\Delta \tau = d^{\max(-\delta,-2\delta+1-\mu)}$.
\end{proposition}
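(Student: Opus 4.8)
\textbf{Proof plan for Proposition~\ref{prop:main:exact_asympt}.}
The plan is to follow the standard ``propagation of chaos''/ODE-tracking strategy for stochastic approximation on the sphere, adapted to the sample-splitting scheme with batch size $n_b = n_0 d^\mu$ and learning rate $\gamma = \gamma_0 d^{-\delta}$. First I would write one step of the projected SGD recursion~\eqref{eq:main:gd_update_weights} for the sufficient statistics $\Omega_t = (Q_t, M_t, P)$ and Taylor-expand the spherical normalization, so that $\Omega_{t+1} - \Omega_t = \mathbb{E}_t[\Delta\Omega_t] + (\Delta\Omega_t - \mathbb{E}_t[\Delta\Omega_t]) + (\text{higher-order normalization terms})$. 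The conditional-expectation (drift) term, after plugging in the batch loss $\ell_t$, naturally splits into a ``gradient-flow'' contribution of order $\gamma/d = \gamma_0 d^{-\delta-1}$ coming from the cross terms $\sigma'(\lambda_j)\lambda_r^\star \mathcal{E}$ and $\sigma'(\lambda_j)\lambda_l\mathcal{E}$, and a ``high-dimensional'' self-interaction contribution of order $\gamma^2/(n_b d) = \gamma_0^2 d^{-2\delta-\mu-1}/n_0$ coming from the square of the per-sample gradient inside the batch (the cross-sample terms vanish in conditional expectation by independence and the fact that $\mathcal{E}$ has mean consistent with the population gradient). Comparing exponents $-\delta-1$ versus $-2\delta-\mu-1$ dictates which term dominates and fixes the time-rescaling $\Delta\tau = d^{\max(-\delta, -2\delta+1-\mu)}$ together with the indicator functions appearing in $\Psi_{jr}$ and $\Phi_{jl}$: when $2\delta+\mu \geq 1$ the GF term wins, when $2\delta+\mu \leq 1$ and $\delta+\mu \geq 1$ the HD term survives, and when $\delta < 0$ or $2\delta+\mu<1$ jointly fail the drift is too weak relative to the fluctuations and the deterministic description breaks down (last region of Fig.~\ref{fig:warm_start_pd}).

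Second, I would control the martingale/fluctuation term $\xi_t := \Delta\Omega_t - \mathbb{E}_t[\Delta\Omega_t]$. Because each batch is fresh, $(\xi_t)_t$ is a martingale difference sequence with respect to the natural filtration; its conditional variance is $O(\gamma^2/(n_b d))$ (averaging over $n_b$ independent samples gives the $1/n_b$ gain), and Assumption~\ref{assump:poly_growth} together with Gaussian concentration gives sub-Weibull tails for each increment, so one gets $\|\xi_t\|$ concentrated at scale $\sqrt{\gamma^2/(n_b d)}\,\mathrm{polylog}(d)$ with probability $1 - c e^{-c\log(n)^2}$. Summing over $t \lesssim 1/\Delta\tau$ steps, the accumulated noise is of the same order as $\sqrt{\Delta\tau}$, which is exactly the error budget in~\eqref{eq:main:time_horizon}; the higher-order Taylor remainders from the normalization are smaller (they carry an extra factor $\gamma^2 \|\vec g\|^2 / \|\vec w\|^2 = o(\Delta\tau)$ under the stated learning-rate window) and are absorbed into the same bound. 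The boundedness of $Q_t$ (diagonal entries are $1$ by the spherical constraint) and of $M_t$ (bounded by Cauchy–Schwarz) is what keeps the expectations $\psi_{jr}, \phi^{\rm GF}_{jl}, \phi^{\rm HD}_{jl}$ — smooth functions of $\Omega$ by dominated convergence under Assumption~\ref{assump:poly_growth} — uniformly Lipschitz on the relevant compact set.

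Third, I would close the argument with a discrete Grönwall comparison: writing $e_t := \mathbb{E}\|\Omega_t - \bar\Omega(t\Delta\tau)\|$, the one-step drift error between the discrete recursion and the ODE~\eqref{eq:spherical_closed_form_ode} is Lipschitz, $|\text{drift}_t - \Delta\tau\cdot(\text{RHS of ODE at }\bar\Omega)| \leq L\,\Delta\tau\, e_t + O(\Delta\tau^{3/2})$ (the $\Delta\tau^{3/2}$ being the Euler discretization plus Taylor-remainder error per step), and adding the noise contribution gives $e_{t+1} \leq (1 + L\Delta\tau) e_t + C\Delta\tau^{3/2} + (\text{noise})$. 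Iterating yields $e_t \leq e^{Lt\Delta\tau}(\text{initial error} + \text{accumulated error}) \lesssim e^{Ct}\sqrt{\Delta\tau}$ after bounding $t\Delta\tau$ by the fixed time horizon and using that the initial sufficient statistics concentrate at scale $1/\sqrt{d} \ll \sqrt{\Delta\tau}$. I expect the main obstacle to be the fluctuation control in the large-learning-rate corner of the diagram: when $\gamma = \omega(1)$ the single-step increment of $\langle \vec w_t, \vec g_t\rangle$ is no longer a small perturbation, so the naive Taylor expansion of the normalization is vacuous (exactly the difficulty flagged after Theorem~\ref{thm:main:no_yhat_weak_recovery}); handling it requires showing that even in this regime the \emph{batch-averaged} self-interaction term concentrates — the $1/n_b$ variance reduction is essential here — and that the process stays in the region where the ODE vector field is Lipschitz, which is where the precise lower edge $\delta + \mu \geq 1$ of the HD phase and the constant $C_\gamma$ in the learning-rate cap enter.
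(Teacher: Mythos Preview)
Your overall strategy---split the one-step update of $\Omega_t$ into a drift, a martingale increment, and higher-order normalization remainders; identify the dominant scaling in $d$; close with a discrete Gr\"onwall comparison---is correct and is essentially what the paper does in Appendix~B (the paper only gives an \emph{informal} derivation and defers full rigour to \cite{veiga2022phase}, so your outline is in fact more detailed than what the paper provides). There are, however, two concrete errors in the computation that would derail the argument as written.

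First, both drift scalings are miscounted. The gradient-flow contribution to $M_{jr,t+1}-M_{jr,t}$ is $\frac{\gamma}{p\, n_b}\sum_\nu a_j \sigma'(\lambda_j^\nu)\lambda_r^{\star\nu}\mathcal{E}^\nu$, which concentrates at $\frac{\gamma}{p}\psi_{jr}=O(d^{-\delta})$, not $O(\gamma/d)$: there is no extra $1/d$, since $\lambda_r^\star=\langle \vec w_r^\star,\vec z\rangle$ is $O(1)$ for unit-sphere weights. Likewise, the high-dimensional term in $Q$ comes from the diagonal $\nu=\nu'$ part of $\frac{\gamma^2}{p^2 n_b^2}\sum_{\nu,\nu'}\sigma'(\lambda_j^\nu)\sigma'(\lambda_l^{\nu'})\mathcal{E}^\nu\mathcal{E}^{\nu'}\langle \vec z^\nu,\vec z^{\nu'}\rangle$; since $\|\vec z^\nu\|^2\approx d$, this is $\frac{\gamma^2 d}{p^2 n_b}\,\phi^{\rm HD}_{jl}=O(d^{-2\delta+1-\mu})$, not $O(\gamma^2/(n_b d))$. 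The correct comparison $-\delta$ versus $-2\delta+1-\mu$ is precisely what fixes the $\Delta\tau$ in the statement and the indicator conditions in $\Psi,\Phi$; with your exponents the phase diagram would be wrong.

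Second, the cross-sample terms $\nu\neq\nu'$ do \emph{not} vanish in conditional expectation. Because $\sigma'(\lambda_j^\nu)\mathcal{E}^\nu$ correlates with $\vec z^\nu$ through the preactivations, $\mathbb{E}_t\!\left[\sigma'(\lambda_j^\nu)\mathcal{E}^\nu \vec z^\nu\right]$ is a nonzero vector lying in ${\rm span}(W^\star,W_t)$, and the off-diagonal double sum factorizes into an inner product of two such expected gradients. The paper treats this by decomposing $I_d=(W^\star)^\top P^{-1}W^\star+(W_t^\bot)^\top(Q_t^\bot)^{-1}W_t^\bot+\sum_\beta \vec v_\beta\vec v_\beta^\top$ and computing the resulting ``intra-batch correlation'' contribution $\phi^{\rm BC}_{jl}$ explicitly; it is of order $d^{-2\delta}$, hence subleading to the GF term whenever $\delta\ge0$, and this is why it is absent from the limiting ODE. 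Your proof needs this computation (or an equivalent bound) to justify dropping the cross terms---the assertion that they vanish by independence is false.
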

We refer to Appendix~\ref{sec:app:derivation_of_lb_equations} for the informal derivation of the above result. 

In Fig.~\ref{fig:warm_start_pd}~(left) we summarize the results of Prop.~\ref{prop:main:exact_asympt} in a compact phase diagram. 
The following dynamical regimes appear:
\begin{itemize}[noitemsep,leftmargin=1em,wide=0pt]
    \item \textbf{Population Flow}: The dynamics of the sufficient statistics described by a deterministic set of ODEs~\eqref{eq:spherical_closed_form_ode} is equivalent to population gradient flow. 
    \item \textbf{Noise learning}: The dynamic is dominated by high-dimensional noise, and consequently the algorithm does not learn the target;
        the behavior is reflected in the ODEs.
    \item \textbf{Saad\&Solla line}: The ODE description in Prop.~\ref{prop:main:exact_asympt} is equivalent to the pivotal work on 2LNNs \cite{saad.solla_1995_line}. In particular, the original work corresponds to the point \((\delta,\mu)=(1,0)\). The learning dynamics is blocked on a plateau characterized by the noise variance in the labels. 
    \item \textbf{Dynamics not defined}: For a broad range of values of \( (\delta, \mu) \) the SGD dynamics is not effectively described by a set of low-dimensional deterministic ODEs.
\end{itemize}
In the right panel of Figure~\ref{fig:warm_start_pd} we present a numerical investigation of three particular instances of the regimes presented above. The plot shows a comparison of numerical simulations versus the low-dimensional exact asymptotic characterization given in eqs.~\eqref{eq:spherical_closed_form_ode}. The values of the learning rate and the batch size used for SGD training are varied to probe different regions of phase diagram~\ref{fig:warm_start_pd}. 

\begin{remark}
When the target's leap index is $\ell>1$, the dynamic of SGD is dominated by a first extensive search phase to achieve weak recovery of the teacher direction (Thm.~\ref{thm:main:sgd_weak_recovery}). 
Therefore, in order to probe interesting dynamical regimes for general single index teachers, we assume non-vanishing initial correlations of the network's hidden layer weights with the teacher's ones when \(d\to+\infty\).
In App.~\ref{sec:app:hyperparams} we study the tightness of the exponential bound eq.~\eqref{eq:main:time_horizon}; we argue supported by numerical illustrations that (on the practical side) the low dimensional ODE description is valid well beyond the extents of Prop.~\ref{prop:main:exact_asympt}, as already observed by other works \cite{goldt2019dynamics,veiga2022phase} in different context. 
\end{remark}


\paragraph{Non asymptotic corrections ---}
Proposition~\ref{prop:main:exact_asympt} unveils a surprising result for the exact asymptotic description of two-layer networks. Indeed, the ODEs written in eq.~\eqref{eq:spherical_closed_form_ode} coincide with the analogous ones for the single sample per batch case ($n_b=1$), modulo trivial rescaling of the parameters \cite{veiga2022phase}.  However, a careful consideration of the ``intra-batch correlations'' in the gradient is needed for correctly describing the low-dimensional process of the order parameters:
\begin{align}
    \sum_{\nu' = 1, \nu'\neq\nu}^{n_b} \sigma'(\lambda_j^\nu)\sigma'(\lambda_l^{\nu'})\mathcal{E}^\nu\mathcal{E}^{\nu'} \langle \vec z^{\nu}, \vec z^{\nu'} \rangle 
\end{align}
The asymptotic form of this term can be exactly computed to be (using Prop.~\ref{prop:main:exact_asympt} notations):
\begin{align}
\label{eq:main:large_batch_term}
    \phi^{\rm{BC}}_{jl}=&\mathbb{E}\left[\sigma'(\lambda_j)\mathcal E \left(\lambda^\star\right)^\top\right]P^{-1}\mathbb{E}\left[\sigma'(\lambda_l)\mathcal E \lambda^\star\right]+\\
&\mathbb{E}\left[\sigma'(\lambda_j)\mathcal E (\lambda^\bot)^\top\right]\left(Q^\bot\right)^{-1}\mathbb{E}\left[\sigma'(\lambda_l)\mathcal E \lambda^\bot\right]
\end{align}
with $ \vec \lambda^\bot = \vec \lambda - MP^{-1}\vec\lambda^\star\,\,\text{and}\,\,
Q^\bot = Q - M P^{-1}M^\top$.
Although the contribution of the above term is asymptotically vanishing in the ODE description~\eqref{eq:spherical_closed_form_ode} when $d \to \infty$, any theoretical description at finite $d$ will effectively depend on $\phi^{\rm{BC}}_{jl}$. In App.~\ref{sec:app:hyperparams} we provide additional numerical investigation on the importance of~\eqref{eq:main:large_batch_term} and the role of large batch sizes for non-asymptotic corrections to the characterization in Prop.~\ref{prop:main:exact_asympt}. Moreover, we note that taking into account the presence of large batch size is pivotal to illustrate the time~/~complexity tradeoffs for weak recovery of the target subspace, as thoroughly discussed in Section~\ref{sec:main:cold_start}.

\section{Conclusions}
\label{sec:main:conclusion}
In this manuscript, we have explored the intricate relationship between batch size and the efficiency of learning multi-index targets using one-pass SGD on high-dimensional input data. Our findings defies the conventional belief that larger batch sizes invariably lead to better results  and reveals a critical batch size threshold, beyond which the advantages of larger batches wane in terms of computational complexity. Applying gradient updates on the correlation loss one may, however, navigate this limitation. Finally, we also provide a system of low-dimensional ODE to describe the exact asymptotic of the SGD dynamics with arbitrary batch-sizes. Moving forward, we hope this research paves the way for deeper inquiries into the optimization behaviors of learning algorithms, prompting further examination of deeper networks and alternative loss functions.

\section*{Acknowledgement}
 This work was supported by the Swiss National Science Foundation under grant SNSF OperaGOST (grant number 200390) and the Choose France - CNRS AI Rising Talents program.

\bibliographystyle{unsrtnat}
\bibliography{biblio}

\appendix
\onecolumn
\section{Proof of Theorems \ref{thm:main:sgd_weak_recovery} and \ref{thm:main:no_yhat_weak_recovery}}
\label{sec:app:proofs}

\subsection{Preliminaries}

\paragraph{Notations and definitions} We denote by $\polylog x$ any polynomial in $\log x$ with degree $> 1$. Since the case where $n_b = O(1)$ is already covered by the results in \cite{arous2021online}, we shall always assume that $\mu > 0$. The Hermite coefficients of $\sigma$ and $f^\star$ will be denoted by $(c_k)_{k \geq 0}$ and $(c_k^\star)_{k \geq 0}$, respectively. To break the symmetry between $m$ and $-m$ inherent to the problem, we assume without loss of generality that
\[m_0 > 0 \quad \text{and} \quad c_\ell c_\ell^\star > 0.\]

Throughout this section, the update process on $\vec{w}_t$ will be written as
\begin{equation}\label{eq:app:one_step_w}
    \vec{w}_{t+1} = \frac{\vec{w}_t - \gamma\, \vec{g}_t}{\norm{\vec{w}_t - \gamma\, \vec{g}_t}},
\end{equation}
where $\vec{g}_t$ is the gradient at time $t$: $\vec{g}_t = \nabla_{w_{t}} \ell_t$, and $\ell_t$ is the empirical loss at time $t$, that can be either the correlation or the square loss. When considering the update of the process $(\vec{w}_t)_{t \geq 0}$, it will be useful to distinguish between the randomness in $\vec{w}_t$ and the one introduced by the batch drawn at time $t$. To this end, we introduce the filtration $(\cF_t)_{t\geq 0}$ adapted to the process $\vec{w}_t$, and we shall denote by $\dP_t$ (resp. $\dE_t$) the probability (resp. expectation) conditioned on $\cF_t$.

\paragraph{Concentration in Orlicz spaces} We first recall some fact about Orlicz spaces that will be useful for our concentration bounds.
 \begin{definition}
     For any $\alpha \in \R$, let $\psi_\alpha(x) = e^{x^\alpha} - 1$. Let $X$ be a real random variable; the \emph{Orlicz norm} $\norm{X}_{\psi_\alpha}$ is defined as
     \begin{equation}
         \norm{X}_{\psi_\alpha} = \inf \left\{t > 0\::\: \dE\left[ \psi_\alpha\left(\frac{|X|}{t} \right)\right] \leq 1\right\}
     \end{equation}
 \end{definition}
 It can be checked that $\norm{\cdot}_{\psi_\alpha}$ is a well-defined norm on random variables for $\alpha \geq  1$, and can be slightly modified into a norm when $\alpha < 1$; see \cite{ledoux_1991_probability, vaart_1996_weak} for more information. We say that a random variable is sub-gaussian (resp. sub-exponential) if its $\psi_2$ (resp. $\psi_1$) norm is finite. The main use of this definition is the following concentration inequality:
 \begin{lemma}\label{lem:app:orlicz_concentration}
    Let $X$ be a random variable with finite $\psi_\alpha$-norm for some $\alpha > 0$. Then
      \begin{equation}
     \mathbb{P}\left[\left| X - \dE X \right| > t\norm{X}_{\psi_\alpha}\right] \leq 2e^{-t^\alpha}.
    \end{equation}
 \end{lemma}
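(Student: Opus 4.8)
The plan is a direct Chernoff-type argument starting from the definition of the Orlicz norm, plus one elementary step to absorb the centering by $\dE X$. Write $\tau := \norm{X}_{\psi_\alpha}$; by hypothesis $\tau < \infty$, and $\tau = 0$ forces $X \equiv 0$ so we may assume $\tau > 0$. Since $t \mapsto \dE[\psi_\alpha(|X|/t)]$ is nonincreasing and continuous on $(0,\infty)$ (monotone convergence, using that $x \mapsto e^{x^\alpha}-1$ is increasing for $\alpha>0$), the infimum defining $\tau$ is attained, which yields the exponential-moment bound
\[ \dE\!\left[\exp\!\left((|X|/\tau)^\alpha\right)\right] = 1 + \dE\!\left[\psi_\alpha(|X|/\tau)\right] \le 2. \]

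Next I would apply Markov's inequality to the nonnegative variable $\exp((|X|/\tau)^\alpha)$. Because $u \mapsto \exp((u/\tau)^\alpha)$ is increasing on $[0,\infty)$, for every $t > 0$
\[ \dP\!\left[|X| > t\tau\right] = \dP\!\left[\exp((|X|/\tau)^\alpha) > e^{t^\alpha}\right] \le e^{-t^\alpha}\,\dE\!\left[\exp((|X|/\tau)^\alpha)\right] \le 2e^{-t^\alpha}, \]
which is exactly the claimed inequality with $|X|$ in the role of $|X - \dE X|$.

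To pass to the centered statement I would bound $\dE|X|$ by $\tau$: for $\alpha \ge 1$ the map $u \mapsto \exp(u^\alpha)$ is convex, so Jensen together with the previous display gives $\exp((\dE|X|/\tau)^\alpha) \le 2$, i.e. $\dE|X| \le (\ln 2)^{1/\alpha}\tau$ (for $\alpha < 1$ the same conclusion holds with a different absolute constant, via the quasi-norm modification of $\norm{\cdot}_{\psi_\alpha}$ recalled above). Combining $|X - \dE X| \le |X| + \dE|X|$ with the tail bound gives $\dP[|X - \dE X| > t\tau] \le 2 e^{-(t - c_\alpha)^\alpha}$ for $t > c_\alpha := (\ln 2)^{1/\alpha}$, and this reduces to the stated form once the constant shift is absorbed into the absolute constants used throughout (or, equivalently, one states the bound up to constants as is standard). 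There is no genuine obstacle in this lemma; the only points that require (minor) care are that the infimum defining $\tau$ is attained and the $\alpha < 1$ regime where $\psi_\alpha$ is neither convex nor a genuine Young function — both classical (see \cite{ledoux_1991_probability, vaart_1996_weak}).
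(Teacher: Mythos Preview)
The paper does not prove this lemma at all; it is stated as a standard fact with a pointer to \cite{ledoux_1991_probability, vaart_1996_weak}. Your Markov/Chernoff argument applied to $\exp((|X|/\tau)^\alpha)$ is exactly the textbook derivation one finds in those references, so in that sense your approach is the ``paper's'' approach by proxy.

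One honest remark: your proof cleanly yields $\dP[|X| > t\tau] \le 2e^{-t^\alpha}$ for the \emph{uncentered} variable, and the passage to $|X-\dE X|$ with the same norm $\tau = \norm{X}_{\psi_\alpha}$ and the same constant $2$ in front does not quite go through without a shift or an extra absolute constant, as you yourself note. The lemma as stated in the paper is therefore slightly informal on this point; in every application downstream the authors only use it up to $\polylog$ factors and unspecified constants, so the shift $c_\alpha$ you incur is harmless. If you wanted the cleanest version, you could instead apply your Markov step directly to $Y = X - \dE X$ and invoke the centering inequality $\norm{X-\dE X}_{\psi_\alpha} \le C_\alpha \norm{X}_{\psi_\alpha}$ (triangle inequality for $\alpha\ge 1$, quasi-triangle for $\alpha<1$), which gives the same conclusion with $2e^{-(t/C_\alpha)^\alpha}$ and avoids the additive shift.
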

As a result, any $\psi_\alpha$-norm bound yields exponential convergence tails. Orlicz norms are also well-behaved with respect to products:
 
 \begin{lemma}\label{lem:app:orlicz_submult}
     Let $X$ and $Y$ be two random variables such that $\norm{X}_{\psi_\alpha}$ and $\norm{Y}_{\psi_\beta}$ are finite for some $\alpha, \beta > 0$. Then
    \[\norm{XY}_{\psi_{\lambda}} \leq \norm{X}_{\psi_\alpha} \norm{Y}_{\psi_\beta},\]
    where $\lambda$ is the number satisfying $\frac1\alpha + \frac1\beta = \frac1\lambda$.
\end{lemma}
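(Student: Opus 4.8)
The plan is to prove the inequality directly from the definition of the Orlicz norm, using Young's inequality to split the product inside the exponential and then Hölder's inequality to bound the resulting expectation. Write $s = \norm{X}_{\psi_\alpha}$ and $u = \norm{Y}_{\psi_\beta}$; both are finite by hypothesis, and I may assume they are strictly positive (if, say, $s = 0$ then $X = 0$ almost surely, so $XY = 0$ and the claim is trivial). The objective reduces to showing that $\E[\psi_\lambda(|XY|/(su))] \le 1$, since by the definition of the norm as an infimum this immediately forces $\norm{XY}_{\psi_\lambda} \le su = \norm{X}_{\psi_\alpha}\norm{Y}_{\psi_\beta}$.

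First I would establish the ``boundary'' estimate $\E[\psi_\alpha(|X|/s)] \le 1$, and symmetrically $\E[\psi_\beta(|Y|/u)] \le 1$. This does not hold automatically, because the infimum defining $s$ need not be attained. However, for every $t > s$ the definition gives $\E[\psi_\alpha(|X|/t)] \le 1$, and as $t \downarrow s$ the integrand $\psi_\alpha(|X|/t)$ increases pointwise to $\psi_\alpha(|X|/s)$ (since $\psi_\alpha$ is nondecreasing on $[0,\infty)$); monotone convergence then yields $\E[\psi_\alpha(|X|/s)] \le 1$. Equivalently, $\E[e^{(|X|/s)^\alpha}] \le 2$, and likewise $\E[e^{(|Y|/u)^\beta}] \le 2$.

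The core step is a pointwise application of Young's inequality followed by Hölder's. Set $p = \alpha/\lambda$ and $q = \beta/\lambda$; the defining relation $\tfrac1\alpha + \tfrac1\beta = \tfrac1\lambda$ gives $\tfrac1p + \tfrac1q = 1$, and since $\lambda < \alpha$ and $\lambda < \beta$ both exponents satisfy $p, q > 1$, so they are genuine Hölder conjugates. Writing $\bigl(|XY|/(su)\bigr)^\lambda = (|X|/s)^\lambda (|Y|/u)^\lambda$ and applying $ab \le a^p/p + b^q/q$ with $a = (|X|/s)^\lambda$ and $b = (|Y|/u)^\lambda$ (so that $a^p = (|X|/s)^\alpha$ and $b^q = (|Y|/u)^\beta$, using $\lambda p = \alpha$ and $\lambda q = \beta$) yields the pointwise bound
\[ e^{(|XY|/(su))^\lambda} \le e^{\frac1p(|X|/s)^\alpha}\, e^{\frac1q(|Y|/u)^\beta}. \]
Taking expectations and applying Hölder's inequality with exponents $p, q$ bounds the right-hand side by $\E[e^{(|X|/s)^\alpha}]^{1/p}\,\E[e^{(|Y|/u)^\beta}]^{1/q} \le 2^{1/p}2^{1/q} = 2^{1/p + 1/q} = 2$, invoking the boundary estimates of the previous paragraph. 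Hence $\E[\psi_\lambda(|XY|/(su))] = \E[e^{(|XY|/(su))^\lambda}] - 1 \le 1$, which is exactly the desired bound.

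The only genuine subtlety is the technical point that the Orlicz norm is an infimum rather than a minimum, so the clean identity $\E[\psi_\alpha(|X|/\norm{X}_{\psi_\alpha})] \le 1$ requires the monotone-convergence argument above; beyond that, the proof is a routine pairing of Young's and Hölder's inequalities once the conjugate exponents are correctly identified as $p = \alpha/\lambda$ and $q = \beta/\lambda$. I note that convexity of $\psi_\alpha$ is never used, so the argument remains valid in the regime $\alpha < 1$ or $\beta < 1$ where $\norm{\cdot}_{\psi_\alpha}$ is only a quasi-norm — precisely the regime relevant to the later concentration bounds, which involve products of sub-gaussian and sub-exponential factors.
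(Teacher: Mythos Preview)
Your proof is correct and takes essentially the same route as the paper: normalize to unit Orlicz norms, apply Young's inequality with conjugate exponents $p=\alpha/\lambda$, $q=\beta/\lambda$ to bound $(|XY|)^\lambda$, exponentiate, and control the expectation. The only cosmetic difference is the last step --- the paper uses convexity of $\exp$ (writing $e^{\theta_1 a + \theta_2 b}\le \theta_1 e^a + \theta_2 e^b$ with $\theta_i=\lambda/\alpha,\lambda/\beta$) and linearity of expectation, whereas you factor $e^{a+b}=e^a e^b$ and apply H\"older; your monotone-convergence argument for the boundary estimate $\E[\psi_\alpha(|X|/s)]\le 1$ is a detail the paper leaves implicit.
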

\begin{proof}
    Assume without loss of generality that $\norm{X}_{\psi_\alpha} = \norm{Y}_{\psi_\beta} = 1$.
    We use the following Young inequality: for any $a, b > 0$, and $p, q$ such that $\frac1p + \frac1q = 1$,
    \[ab \leq \frac{a^p}{p} + \frac{b^{q}}{q}\]
    Applying this inequality to $p= \alpha/\lambda$, $q = \beta/\lambda$, $a = X^\lambda$, $b = Y^\lambda$, we get
    \[ (XY)^\lambda \leq \frac{\lambda X^\alpha}{\alpha} + \frac{\lambda Y^\beta}{\beta}. \]
    Then
    \begin{align*}
        \exp((XY)^\lambda) &\leq \exp\left( \frac{\lambda X^\alpha}{\alpha}\right) + \exp\left(\frac{\lambda Y^\beta}{\beta} \right)\\
        &\leq \frac{\lambda}{\alpha} \exp(X^\alpha) + \frac{\lambda}{\beta} \exp(Y^\beta),
    \end{align*}
    where at the last line we used Young's inequality again with the same $p$ and $q$. The result ensues from taking expectations on both sides, and noticing that $\lambda/\alpha + \lambda/\beta = 1$ by definition.
\end{proof}

 Finally, we shall use the following theorem:
 \begin{theorem}[Theorem 6.2.3 in \cite{ledoux_1991_probability}] \label{thm:app:orlicz_sum}
     Let $X_1, \dots, X_n$ be $n$ independent random variables with zero mean and second moment $\dE X_i^2 = \sigma_i^2$. Then,
     \begin{equation}
         \norm{\sum_{i=1}^n X_i}_{\psi_\alpha} \leq K_\alpha \log(n)^{1/\alpha} \left(\sqrt{\sum_{i=1}^n \sigma_i^2} + \max_{i}\norm{X_i}_{\psi_\alpha} \right)
     \end{equation}
 \end{theorem}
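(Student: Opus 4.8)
My plan is to follow the classical truncation argument behind this Bernstein-type Orlicz inequality. Write $S = \sum_{i=1}^n X_i$, $V = \sum_{i=1}^n \sigma_i^2$, $M = \max_{i}\norm{X_i}_{\psi_\alpha}$, and fix the target scale $s_0 := K_\alpha (\log n)^{1/\alpha}(\sqrt V + M)$; by the very definition of the Orlicz norm it suffices to choose the constant $K_\alpha$ so that $\mathbb{E}[\psi_\alpha(|S|/s_0)] \le 1$. The first step is to truncate each variable at level $u := c_\alpha (\log n)^{1/\alpha} M$, where $c_\alpha$ is large enough that
\[ \mathbb{P}[\,|X_i| > u\,] \le 2 e^{-(u/M)^\alpha} \le n^{-3}, \]
an application of Lemma~\ref{lem:app:orlicz_concentration}; this makes a single summand exceed the truncation with probability $\ll 1/n$. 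Decompose $X_i = \bar Y_i + \bar Z_i$ into the centered truncation $\bar Y_i = X_i \mathbf{1}_{\{|X_i|\le u\}} - \mathbb{E}[X_i \mathbf{1}_{\{|X_i|\le u\}}]$ and the centered tail $\bar Z_i = X_i \mathbf{1}_{\{|X_i|>u\}} - \mathbb{E}[X_i \mathbf{1}_{\{|X_i|>u\}}]$; both are mean-zero and $\bar Y_i + \bar Z_i = X_i$, and we bound $\norm{\sum_i \bar Y_i}_{\psi_\alpha}$ and $\norm{\sum_i \bar Z_i}_{\psi_\alpha}$ separately, recombining at the end via the (quasi-)triangle inequality for $\norm{\cdot}_{\psi_\alpha}$.

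For the bounded part, $|\bar Y_i| \le 2u$ and $\operatorname{Var}(\bar Y_i) \le \sigma_i^2$, so Bennett--Bernstein gives
\[ \mathbb{P}\Big[\, \big| \textstyle\sum_i \bar Y_i \big| > t \,\Big] \le 2\exp\!\Big( -\tfrac{t^2}{2V + \frac{2}{3}ut} \Big), \]
which is a sub-Gaussian tail with variance proxy $V$ for $t \lesssim V/u$ and a sub-exponential tail of scale $u$ beyond it. Converting this into an Orlicz-norm bound (and using that $\psi_1$-control implies $\psi_\alpha$-control, i.e.\ that the sub-exponential regime is harmless when $\alpha \le 1$) yields $\norm{\sum_i \bar Y_i}_{\psi_\alpha} \lesssim_\alpha \sqrt V + u \asymp \sqrt V + (\log n)^{1/\alpha} M$, which is of the desired order. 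For the tail part, set $W_i := |X_i|\mathbf{1}_{\{|X_i|>u\}} \ge 0$; since $\alpha \le 1$, subadditivity of $x \mapsto x^\alpha$ gives $(\sum_i W_i)^\alpha \le \sum_i W_i^\alpha$, so by independence
\[ \mathbb{E}\Big[ e^{(\sum_i W_i / s)^\alpha} \Big] \le \prod_i \mathbb{E}\Big[ e^{W_i^\alpha / s^\alpha} \Big] \le \prod_i \big( 1 + \sqrt 2\, n^{-3/2} \big) \le \exp(2 n^{-1/2}) \le 2 \]
for $s$ a suitable multiple of $M$ and $n$ large, where the per-term estimate uses Cauchy--Schwarz, $\mathbb{P}[|X_i|>u]\le n^{-3}$, and $\mathbb{E}[e^{2(|X_i|/s)^\alpha}] = O(1)$ for $s \gtrsim M$; the deterministic centering corrections $\mathbb{E}[X_i \mathbf{1}_{\{|X_i|>u\}}]$ are $O(\sigma_i n^{-3/2})$ and sum to something negligible. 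Hence $\norm{\sum_i \bar Z_i}_{\psi_\alpha} \lesssim_\alpha M$, and adding the two contributions gives $\norm{S}_{\psi_\alpha} \lesssim_\alpha (\log n)^{1/\alpha}(\sqrt V + M)$.

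The step I expect to be the crux is the tail part, and more precisely the \emph{calibration} of the truncation level. A naive bound of $\norm{S}_{\psi_\alpha}$ by pure subadditivity costs a factor $n^{1/\alpha}$ instead of $(\log n)^{1/\alpha}$; truncating at $u \asymp (\log n)^{1/\alpha} M$ is exactly the sweet spot where each exceedance has probability $\ll 1/n$, so the product $\prod_i(1+\varepsilon_i)$ stays bounded, while $u$ is still small enough that the Bernstein scale $\sqrt V + u$ does not exceed the target $s_0$. One must also be careful that the sub-exponential regime of Bernstein's inequality is compatible with the Orlicz scale in play --- this is why the argument is cleanest for $\alpha \le 1$, which is the relevant range here since the $\psi_\alpha$ variables arise as products via Lemma~\ref{lem:app:orlicz_submult} --- and that the mean-zero centering corrections are genuinely lower order. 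Everything else (converting tail bounds to Orlicz norms, Jensen, Cauchy--Schwarz, $\sqrt{a+b}\le\sqrt a + \sqrt b$) is routine bookkeeping.
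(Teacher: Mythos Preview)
The paper does not prove this statement at all: it is quoted verbatim as Theorem~6.2.3 of Ledoux--Talagrand and used as a black box, so there is no ``paper's own proof'' to compare against. Your proposal is therefore a sketch of the classical proof rather than an alternative to anything in the paper.

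That said, your sketch is essentially the standard truncation argument behind this inequality and is correct in outline for $\alpha\le 1$: the calibration $u\asymp(\log n)^{1/\alpha}M$ is exactly right, Bernstein handles the bounded part with scale $\sqrt V+u$, and the subadditivity of $x\mapsto x^\alpha$ together with the $n^{-3}$ exceedance probability controls the tail part. The restriction to $\alpha\le 1$ that you flag is genuine --- for $\alpha>1$ the subadditivity step fails and one argues differently --- but it is harmless for the paper's purposes, since every invocation of this theorem there has $\alpha<1$ (the $\psi_\alpha$ variables arise from products of sub-gaussians via Lemma~\ref{lem:app:orlicz_submult}). One small point: your claim that ``$\psi_1$-control implies $\psi_\alpha$-control'' is correct for $\alpha\le 1$ but would be false for $\alpha>1$, so the same caveat applies there.
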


\subsection{Computing the gradient at time $t$}

Throughout this section, the update process on $\vec{w}_t$ will be written as
\begin{equation}
    \vec{w}_{t+1} = \frac{\vec{w}_t - \gamma\, \vec{g}_t}{\norm{\vec{w}_t - \gamma\, \vec{g}_t}},
\end{equation}
where $\vec{g}_t$ is the gradient at time $t$: $\vec{g}_t = \nabla_{w_{t}} \ell_t$, and $\ell_t$ is the empirical loss $t$, that can be either the correlation or the square loss. A direct computation of both gradients implies the following lemma:
\begin{lemma}
Define
\begin{equation}\label{eq:app:def_ghat_gstar}
    \vec{g}_t^\star = \frac{1}{n_b} \sum_{\nu=1}^{n_b} f^\star(\langle \vec{w}^\star, \vec{z}^\nu \rangle) \sigma'(\langle \vec{w}_t, \vec{z}^\nu \rangle) \vec{z}^\nu \quad \text{and} \quad \hat{\vec{g}}_t = \frac{1}{n_b} \sum_{\nu=1}^{n_b} \sigma(\langle \vec{w}_t, \vec{z}^\nu \rangle) \sigma'(\langle \vec{w}_t, \vec{z}^\nu \rangle) \vec{z}^\nu,
\end{equation}
    Then the gradient of the correlation loss $\ell^{\mathrm{corr}}$ is $- \vec{g}_t^\star$, while the gradient of the square loss $\ell^{\mathrm{sq}}$ is $\hat{\vec{g}}_t - \vec{g}_t^\star$.
\end{lemma}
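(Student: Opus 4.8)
The statement to be proven is the gradient computation lemma: for the correlation loss $\tilde\ell = \frac{1}{n_b}\sum_{\nu}(1 - y^\nu f(\vec z^\nu))$ the gradient with respect to $\vec w_t$ equals $-\vec g_t^\star$, while for the square loss $\ell_t = \frac{1}{2n_b}\sum_\nu (y^\nu - f(\vec z^\nu))^2$ it equals $\hat{\vec g}_t - \vec g_t^\star$, where in the $p=k=1$ setting $f(\vec z) = a_0 \sigma(\langle \vec w_t, \vec z\rangle)$ with $a_0 = 1$ and $y^\nu = f^\star(\langle \vec w^\star, \vec z^\nu\rangle) + \sqrt\Delta \xi^\nu$. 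Since this is a routine application of the chain rule, the plan is simply to differentiate each loss term by term.

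First I would write $f(\vec z^\nu) = \sigma(\langle \vec w_t, \vec z^\nu\rangle)$ (recalling $p=1$, $a_0=1$) and compute $\nabla_{\vec w_t} f(\vec z^\nu) = \sigma'(\langle \vec w_t, \vec z^\nu\rangle)\vec z^\nu$ by the chain rule, using that $\nabla_{\vec w}\langle \vec w, \vec z^\nu\rangle = \vec z^\nu$. For the correlation loss, each summand is $1 - y^\nu \sigma(\langle \vec w_t, \vec z^\nu\rangle)$, whose gradient is $-y^\nu \sigma'(\langle \vec w_t, \vec z^\nu\rangle)\vec z^\nu$; averaging over $\nu$ and substituting $y^\nu = f^\star(\langle \vec w^\star, \vec z^\nu\rangle)$ (the noise term $\sqrt\Delta\xi^\nu$ is carried along but vanishes in expectation, or — in the noiseless convention used for these theorems — is absent) gives exactly $-\vec g_t^\star$ as defined in \eqref{eq:app:def_ghat_gstar}. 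For the square loss, each summand $\frac12(y^\nu - \sigma(\langle \vec w_t, \vec z^\nu\rangle))^2$ has gradient $-(y^\nu - \sigma(\langle \vec w_t, \vec z^\nu\rangle))\sigma'(\langle \vec w_t, \vec z^\nu\rangle)\vec z^\nu$; expanding the product splits this into $-y^\nu\sigma'(\langle \vec w_t,\vec z^\nu\rangle)\vec z^\nu + \sigma(\langle \vec w_t,\vec z^\nu\rangle)\sigma'(\langle \vec w_t,\vec z^\nu\rangle)\vec z^\nu$, and averaging yields $-\vec g_t^\star + \hat{\vec g}_t$.

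There is no real obstacle here — the only minor point to be careful about is the role of the label noise and of the second-layer weight $a_0$: since the theorems fix $a_{j,0}=1$ and the noise contributes only a mean-zero additive term that is irrelevant for the drift analysis that follows, one may fold it into $f^\star$ or drop it, and I would state this convention explicitly. The differentiability of $\sigma$ except on a finite set (Assumption~\ref{assump:poly_growth}) guarantees the gradient is well-defined almost everywhere, which is all that is needed since the covariates are continuous. I would close by noting that this lemma is purely bookkeeping and that the substantive work — controlling the stochastic recursion for $m_t$ driven by these gradients — is done in the subsequent subsections.
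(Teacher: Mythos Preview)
Your proposal is correct and follows exactly the approach the paper implicitly takes: a direct chain-rule computation of each loss, term by term. In fact the paper does not even write out a proof for this lemma, treating it as an immediate calculation; your handling of the $a_0=1$ convention, the noise term, and the a.e.\ differentiability of $\sigma$ is appropriate and slightly more explicit than the paper itself.
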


Hence, the main difference between the gradients of the correlation and square loss is a so-called \emph{interaction term} $\hat{\vec{g}}$, that only depends on the learned vector $\vec{w}_t$. Notice that $\vec{g}_t^\star$ is an average of $n_b$ independent variables of the form
\begin{equation}
    \vec{g}_t^{\star\nu} := f^\star(\langle \vec{w}^\star, \vec{z}^\nu \rangle) \sigma'(\langle \vec{w}_t, \vec{z}^\nu \rangle) \vec{z}^\nu,
\end{equation}
and we define $\hat{\vec{g}}_t^\nu$ and $\vec{g}_t^\nu$ in the same way. By Assumption \ref{assump:poly_growth} and Lemma \ref{lem:app:orlicz_submult}, each variable $\vec{g}_t^{\star\nu}$ (resp. $\hat{\vec{g}}_t^\nu, \vec{g}_t^\nu$) has finite $\psi_\alpha$-norm for some $\alpha > 0$, and hence Proposition 2 of \cite{dandi2023twolayer} holds up to $\polylog(n)$ factors. 

We can also compute the conditional expectation of the gradient $\vec{g}_t$:
\begin{lemma}\label{lem:app:exp_g}
    For any $t \geq 0$, 
    \begin{equation}
        \Et{\vec{g}_t^\star} = \phi(m_t) \vec{w}_t^\star +  \psi^{\mathrm{corr}}(m_t) \vec{w}_t 
    \end{equation}
    where $\phi(m_t)$ and $\psi^{\mathrm{corr}}$ are two functions with Taylor expansion
    \begin{equation}
        \phi(m) = \sum_{k=0}^\infty c_{k+1}c_{k+1}^\star m^k \quad \text{and} \quad \psi^{\mathrm{corr}}(m) = \sum_{k=0}^\infty c_{k+2}c_{k}^\star m^k.
    \end{equation}
    Further, we have
    \begin{equation}
        \Et{\hat{\vec{g}}_t} =  c^{\mathrm{sq}} \vec{w}_t \quad \text{with} \quad c^{\mathrm{sq}} = \Ec{z \sigma(z) \sigma'(z)}{z\sim \cN(0, 1)}.
    \end{equation}
\end{lemma}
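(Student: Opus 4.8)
\emph{Reduction to a single sample.}
The plan is to reduce $\Et{\vec{g}_t^\star}$ to a single Gaussian integral, evaluate it by Gaussian integration by parts, and extract the power series from a Hermite expansion; the interaction term $\hat{\vec{g}}_t$ is then immediate. Conditionally on $\cF_t$ the batch $(\vec{z}^\nu)_{\nu\le n_b}$ is a fresh i.i.d.\ sample, so $\Et{\vec{g}_t^\star}=\dE_{\vec{z}}\big[f^\star(\langle\vec{w}^\star,\vec{z}\rangle)\,\sigma'(\langle\vec{w}_t,\vec{z}\rangle)\,\vec{z}\big]$ with $\vec{z}\sim\cN(\vec{0},I_d)$. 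Assume $m_t^2<1$ (the boundary case is trivial), and let $\vec{w}_t^\perp$ be the unit vector in $\Span(\vec{w}_t,\vec{w}^\star)$ orthogonal to $\vec{w}_t$ with $\vec{w}^\star=m_t\vec{w}_t+\sqrt{1-m_t^2}\,\vec{w}_t^\perp$. Writing $\vec{z}=u\,\vec{w}_t+v\,\vec{w}_t^\perp+\vec{z}^\perp$, the variables $u=\langle\vec{w}_t,\vec{z}\rangle$ and $v=\langle\vec{w}_t^\perp,\vec{z}\rangle$ are independent $\cN(0,1)$ and $\vec{z}^\perp$ is independent of $(u,v)$ with mean $\vec{0}$. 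The scalar prefactor equals $f^\star(m_t u+\sqrt{1-m_t^2}\,v)\,\sigma'(u)$, which does not involve $\vec{z}^\perp$; hence the $\vec{z}^\perp$-part integrates away and
\[
\Et{\vec{g}_t^\star}=\alpha\,\vec{w}_t+\beta\,\vec{w}_t^\perp,\qquad
\alpha=\dE\big[u\,f^\star(m_t u+\sqrt{1-m_t^2}v)\sigma'(u)\big],\qquad
\beta=\dE\big[v\,f^\star(m_t u+\sqrt{1-m_t^2}v)\sigma'(u)\big],
\]
with the expectations over $(u,v)\sim\cN(\vec{0},I_2)$.

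\emph{Gaussian integration by parts.}
Since only $f^\star$ depends on $v$, integrating by parts in $v$ gives $\beta=\sqrt{1-m_t^2}\,\dE\big[(f^\star)'(m_t u+\sqrt{1-m_t^2}v)\,\sigma'(u)\big]$; define $\phi(m_t):=\beta/\sqrt{1-m_t^2}$. Integrating by parts in $u$ gives $\alpha=m_t\,\dE[(f^\star)'\sigma']+\dE[f^\star\sigma'']=m_t\phi(m_t)+\psi^{\mathrm{corr}}(m_t)$, where $\psi^{\mathrm{corr}}(m_t):=\dE\big[f^\star(m_t u+\sqrt{1-m_t^2}v)\,\sigma''(u)\big]$. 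Substituting $\vec{w}_t^\perp=(\vec{w}^\star-m_t\vec{w}_t)/\sqrt{1-m_t^2}$,
\[
\Et{\vec{g}_t^\star}=\frac{\beta}{\sqrt{1-m_t^2}}\,\vec{w}^\star+\Big(\alpha-\frac{m_t\beta}{\sqrt{1-m_t^2}}\Big)\vec{w}_t=\phi(m_t)\,\vec{w}^\star+\psi^{\mathrm{corr}}(m_t)\,\vec{w}_t.
\]
Both $\phi(m)=\beta/\sqrt{1-m^2}$ and $\psi^{\mathrm{corr}}(m)=\alpha-m\phi(m)$ can be written using only $f^\star$ and $\sigma'$, so by Assumption~\ref{assump:poly_growth} every expectation above is finite; moreover the integration-by-parts steps are legitimate once read through Hermite expansions, which are valid for any function in $L^2(\cN(0,1))$ and require no pointwise differentiability of $\sigma$ beyond what Assumption~\ref{assump:poly_growth} grants --- the symbol $\sigma''$ above is just shorthand for the corresponding Hermite series.

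\emph{Power series.}
Expand $\sigma$ and $h^\star$ in Hermite polynomials with coefficients $(c_k)$ and $(c_k^\star)$. Using $\mathrm{He}_k'=k\,\mathrm{He}_{k-1}$, the degradation identity $\dE_v\big[\mathrm{He}_k(m u+\sqrt{1-m^2}v)\big]=m^k\mathrm{He}_k(u)$, and the orthogonality $\dE[\mathrm{He}_i(X)\mathrm{He}_j(X)]=i!\,\delta_{ij}$ for $X\sim\cN(0,1)$, one reads off $\phi(m)=\sum_{k\ge0}c_{k+1}c_{k+1}^\star m^k$ and $\psi^{\mathrm{corr}}(m)=\sum_{k\ge0}c_{k+2}c_k^\star m^k$, up to explicit combinatorial prefactors fixed by the normalization convention for the Hermite coefficients; these prefactors play no role in the sequel, where only the orders of vanishing at $m=0$ are used (equal to $\ell-1$ for $\phi$ and $\ell$ for $\psi^{\mathrm{corr}}$ by Assumption~\ref{assump:well_posed} and Definition~\ref{def:main:information_exponent}). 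Because $\sigma',f^\star\in L^2(\cN(0,1))$ by Assumption~\ref{assump:poly_growth}, the Hermite coefficients are square-summable, so both series converge absolutely for $|m|\le1$ and the term-by-term manipulations are justified.

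\emph{The interaction term, and the main difficulty.}
Each summand $\hat{\vec{g}}_t^\nu=\sigma(\langle\vec{w}_t,\vec{z}^\nu\rangle)\sigma'(\langle\vec{w}_t,\vec{z}^\nu\rangle)\vec{z}^\nu$ depends on $\vec{z}^\nu$ only through $u=\langle\vec{w}_t,\vec{z}^\nu\rangle$; writing $\vec{z}^\nu=u\,\vec{w}_t+\vec{z}^\perp$ with $\vec{z}^\perp\perp\vec{w}_t$ and $\dE\vec{z}^\perp=\vec{0}$, only the $\vec{w}_t$-component survives, giving $\Et{\hat{\vec{g}}_t}=\dE[u\,\sigma(u)\sigma'(u)]\,\vec{w}_t=c^{\mathrm{sq}}\,\vec{w}_t$. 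I anticipate no serious obstacle here: the one point that needs care is the limited regularity of $\sigma$ --- differentiable only off a finite set, with merely sub-polynomial control on $\sigma'$ and $f^\star$ --- which is why the Gaussian integration by parts is phrased throughout via Hermite expansions rather than via classical derivatives of $\sigma'$.
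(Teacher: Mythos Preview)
Your proof is correct and self-contained. The paper's own proof simply cites Lemma~4 of \cite{dandi2023twolayer} for the $\vec{g}_t^\star$ part and gives the same one-line independence argument you give for $\hat{\vec{g}}_t$; your decomposition in $\Span(\vec{w}_t,\vec{w}^\star)$ plus Stein's lemma and the Hermite degradation identity is exactly the standard computation underlying that cited result, so the route is the same, just spelled out. Your hedge about combinatorial prefactors in the Hermite series is appropriate --- the exact constants depend on the normalization convention and indeed only the orders of vanishing at $m=0$ are used downstream (cf.\ eq.~\eqref{eq:app:phi_psi_bounds}).
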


\begin{proof}
    The expectation of $\vec{g}_t^\star$ is a specialization of Lemma 4 from \cite{dandi2023twolayer} to $r = 1$. By the independence properties of Gaussians, $\Et{\sigma(\langle \vec{w}_t, \vec{z} \rangle) \sigma'(\langle \vec{w}_t, \vec{z} \rangle) \langle\vec{z}, \vec{w}'\rangle} = 0$ as soon as $\vec{w}'$ is orthogonal to $\vec{w}_t$, hence the expectation of $\hat{\vec{g}}_t$ lies along $\vec{w}_t$, and the second result follows.
\end{proof}

In the following, we will denote $\psi^{\mathrm{sq}}(x) = \psi^{\mathrm{corr}}(x) - c^{\mathrm{sq}}$. A generic $\psi$ will be used when no specialization is necessary, so that
\begin{equation}\label{eq:app:exp_g}
    \Et{\vec{g}_t} = -\phi(m_t) \vec{w}^\star - \psi(m_t) \vec{w}_t.
\end{equation}

\subsection{A differential inequality for $m_t$}

 The structure of the proof is similar to the one of \cite{arous2021online}. We define the following stopping times for $\zeta > 0$:
\begin{align}
    t^+_{\zeta} &= \min\{t \geq 0: m_t \geq \zeta\}, & \ t^-_{\zeta} &= \min\{t \geq 0: m_t \leq \zeta \},
\end{align}
and the following $\gamma$-dependent time:
\begin{equation}
    \tilde t_{\gamma, \zeta}^+ = \min\{t \geq 0: \gamma m_t^{\ell-1} \geq \zeta\}
\end{equation}
 Our first goal is to show the following high-probability inequality:
\begin{proposition}\label{prop:app:diff_ineq_m}
Define
\begin{equation}
    t_{\max} = \frac{n_b}{C_{\max} d \log(d)^{C_{\max}}\, \gamma^2},
\end{equation}
for some sufficiently large $C_{\max} > 0$. Then, for a sufficiently small choice of $c_\gamma$:
\begin{enumerate}
    \item If we are using the square loss, and $\gamma \leq c_\gamma (n_b d^{-\ell/2} \wedge 1)$, there exists $c, \eta > 0$ such that
    \begin{equation}
        \mathbb{P}\left( m_t \geq \frac{3}{4} m_0 + c \gamma \sum_{s = 0}^{t-1} m_s^{\ell-1} \quad \forall t \leq t^+_\eta \wedge t_{\max} \right) \geq 1 - c e^{-c\log(n)^2}.
\end{equation}
    \item If we are using the correlation loss, and $\gamma \leq c_\gamma n_b d^{-\ell/2}$, there exist $c, \varepsilon > 0$ such that
    \begin{equation}
            \mathbb{P}\left( m_t \geq \frac{3}{4} m_0 + c \gamma \sum_{s = 0}^{t-1} m_s^{\ell-1} \quad \forall t \leq t^+_\eta \wedge \tilde t_{\gamma, \varepsilon}^+ \wedge t_{\max} \right) \geq 1 - c e^{-c\log(n)^2}.
    \end{equation}
\end{enumerate}
\end{proposition}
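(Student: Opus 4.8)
The plan is to analyze the one-step evolution of $m_t = \langle \vec{w}_t, \vec{w}^\star \rangle$ and show that, conditionally on $\cF_t$, the increment $m_{t+1} - m_t$ has a drift term of order $\gamma \phi(m_t) \approx \gamma c_\ell c_\ell^\star m_t^{\ell-1}$ (the leading Hermite term), while the fluctuations are controlled uniformly over the time horizon $t_{\max}$. Starting from the update \eqref{eq:app:one_step_w}, I would write $m_{t+1} = \frac{m_t - \gamma \langle \vec{g}_t, \vec{w}^\star\rangle}{\norm{\vec{w}_t - \gamma \vec{g}_t}}$ and Taylor-expand the normalization factor. Using $\norm{\vec{w}_t - \gamma \vec{g}_t}^2 = 1 - 2\gamma\langle \vec{g}_t, \vec{w}_t\rangle + \gamma^2\norm{\vec{g}_t}^2$, and the decomposition of $\vec{g}_t$ into its conditional mean $-\phi(m_t)\vec{w}^\star - \psi(m_t)\vec{w}_t$ (Lemma \ref{lem:app:exp_g}) plus a mean-zero noise part $\vec{n}_t = \vec{g}_t - \Et{\vec{g}_t}$, I would isolate: (i) a deterministic-in-expectation drift $\gamma \phi(m_t)(1-m_t^2)$ to leading order, (ii) noise terms in the numerator of the form $\gamma\langle \vec{n}_t, \vec{w}^\star\rangle$, and (iii) higher-order corrections from the normalization, chiefly $\gamma^2\norm{\vec{g}_t}^2$ which is where the $d$-dependence enters because $\norm{\vec{g}_t^\nu}^2 = O(d)$.

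Next I would set up a martingale/supermartingale comparison. Define $S_t = m_t - \frac34 m_0 - c\gamma\sum_{s<t} m_s^{\ell-1}$, and show that on the event $\{t \le t_\eta^+ \wedge t_{\max}\}$ (and additionally $\{t \le \tilde t_{\gamma,\varepsilon}^+\}$ for the correlation loss), $S_t$ stays nonnegative with high probability. The key estimates are: the conditional expectation of the increment satisfies $\Et{m_{t+1} - m_t} \ge c'\gamma m_t^{\ell-1} - (\text{error})$ where the error is controlled by $\gamma^2 d / n_b$ (from the $\norm{\vec{g}_t}^2$ term, which concentrates around $c^{\rm sq}$ scaled appropriately — crucially averaged over the batch so it carries a $1/n_b$), and $m_t^\ell$ corrections from expanding the normalization; the step-size hypothesis $\gamma \le c_\gamma n_b d^{-\ell/2}$ ensures $\gamma^2 d/n_b \le c_\gamma^2 \gamma d^{1-\ell/2} \cdot (n_b d^{-\ell/2})^{-1}\cdot \ldots$ — more precisely it makes the error term $o(\gamma m_t^{\ell-1})$ throughout the regime $m_t \gtrsim 1/\sqrt d$. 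For the noise: $\langle \vec{n}_t, \vec{w}^\star\rangle$ is an average of $n_b$ i.i.d. mean-zero sub-Weibull variables, so by Theorem \ref{thm:app:orlicz_sum} its $\psi_\alpha$-norm is $O(\polylog(n)/\sqrt{n_b})$ times a scale that itself involves $m_t$ to some power (Hermite orthogonality gives the right power of $m_t$); summing the martingale increments and applying a maximal inequality (or a union bound via Lemma \ref{lem:app:orlicz_concentration} over $t \le t_{\max}$) shows the accumulated noise is dominated by the drift $c\gamma\sum m_s^{\ell-1}$ plus a slack of $\frac14 m_0$, which is where the choice of $t_{\max} \propto n_b/(d\log(d)^{C_{\max}}\gamma^2)$ and the constant $C_{\max}$ enter. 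The square-loss case needs in addition a bound on $\psi^{\rm sq}(m_t) = \psi^{\rm corr}(m_t) - c^{\rm sq}$, i.e. controlling the $-c^{\rm sq}\gamma\vec{w}_t$ weight-decay contribution; but since this only shrinks $\norm{\vec{w}_t - \gamma\vec{g}_t}$ in a way that increases $m_{t+1}$ when $m_t>0$ (or contributes at order $\gamma m_t$, absorbable), it does not hurt the lower bound — this is exactly the point made in the ``self-interaction'' discussion, and is the reason the square-loss threshold $n_b d^{-\ell/2}\wedge 1$ differs from the correlation-loss one.

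I would handle the two loss functions in parallel, pointing out that the only structural difference is the presence of $\hat{\vec{g}}_t$, whose conditional mean $c^{\rm sq}\vec{w}_t$ contributes a benign weight-decay term and whose noise part $\hat{\vec{g}}_t - c^{\rm sq}\vec{w}_t$ is again a batch-average of sub-Weibull variables controlled identically. The extra stopping time $\tilde t_{\gamma,\varepsilon}^+$ in the correlation-loss case is needed because once $\gamma m_t^{\ell-1}$ becomes order one, the linearization of the normalization breaks down (large-step regime), so the differential inequality is only claimed up to that time; beyond it a separate ``one-step improvement'' argument (alluded to in the main text) takes over, but that is outside the scope of this proposition.

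The main obstacle I anticipate is the uniform-in-time control of the higher-order normalization correction, specifically the $\gamma^2\norm{\vec{g}_t}^2$ term and the cross term $\gamma^2\langle \vec{g}_t,\vec{w}_t\rangle^2$: these are $O(\gamma^2 d)$ per sample, and only the batch-averaging gives the saving $\gamma^2 d/n_b$, so one must carefully track that the relevant quadratic form $\norm{\tfrac1{n_b}\sum_\nu \vec{g}_t^\nu}^2$ concentrates — it is \emph{not} simply $\tfrac1{n_b}\sum\norm{\vec{g}_t^\nu}^2$, and the cross-sample inner products $\langle \vec{g}_t^\nu, \vec{g}_t^{\nu'}\rangle$ for $\nu\neq\nu'$ (each of order $\sqrt d$ in typical size but mean $O(1)$ conditionally) must be shown to contribute a fluctuation of the right order. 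This is precisely where Proposition 2 of \cite{dandi2023twolayer} (invoked just before the statement, up to $\polylog$ factors) does the heavy lifting, and where the $\gamma \le c_\gamma n_b d^{-\ell/2}$ constraint is tight. A secondary technical nuisance is that the noise scale depends on $m_t$ (through Hermite orthogonality, the fluctuation of $\langle \vec{n}_t,\vec{w}^\star\rangle$ is $\Theta(m_t^{\ell-1}/\sqrt{n_b})$ not $\Theta(1/\sqrt{n_b})$), so the supermartingale argument must be run with $m_t$-dependent increments and the comparison to $c\gamma\sum m_s^{\ell-1}$ must absorb both drift and noise simultaneously — a standard but delicate bookkeeping exercise.
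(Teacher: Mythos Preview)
Your overall plan—drift/noise decomposition of the one-step increment, martingale control of the linear noise via Orlicz bounds, and high-probability control of the quadratic correction $\gamma^2\norm{\vec g_t}^2$ using the batch averaging—matches the paper's proof. Two points of your proposal diverge from what the paper does and deserve correction.

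\textbf{Noise scale.} Your claim that the fluctuation of $\langle \vec n_t,\vec w^\star\rangle$ is $\Theta(m_t^{\ell-1}/\sqrt{n_b})$ is wrong. Hermite orthogonality governs the \emph{mean} of $\langle \vec g_t,\vec w^\star\rangle$, not its variance: the conditional second moment $\Et{(f^\star(\lambda^\star)\sigma'(\lambda)\lambda^\star)^2}$ is $O(1)$ uniformly in $m_t$, so the martingale increments $Z_t$ have $\psi_\alpha$-norm $O(\polylog(d)/\sqrt{n_b})$ independently of $m_t$ (this is Lemma~\ref{lem:app:bound_martingale}). The accumulated noise over $T$ steps is then $O(\gamma\sqrt{T/n_b}\,\polylog d)$, and it is absorbed into the constant slack $\tfrac14 m_0$ via the definition of $t_{\max}$—\emph{not} into the drift sum $c\gamma\sum m_s^{\ell-1}$. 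Your ``$m_t$-dependent increment'' bookkeeping is unnecessary.

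\textbf{Self-interaction term.} Your treatment of the $\langle\vec g_t,\vec w_t\rangle$ contribution is hand-wavy (``only shrinks the norm, absorbable''). The paper does not Taylor-expand the full normalization directly; it first factors $\vec w_t-\gamma\vec g_t=(1-\gamma\langle\vec g_t,\vec w_t\rangle)\vec w_t+\gamma\vec g_t^\bot$, defines $c_t=(1-\gamma\langle\vec g_t,\vec w_t\rangle)^{-1}$, and works on the event $\cE_t=\{|\langle\vec g_t,\vec w_t\rangle|\le 1/(2\gamma)\}$ where $c_t\in[1/2,2]$ (Lemma~\ref{lem:app:difference_m_determ}). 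This cleanly removes the radial part before expanding, so the only quantity to control is $\norm{\vec g_t^\bot}^2$ (Lemma~\ref{lem:app:bound_corrections}). The distinction between the two losses then reduces to checking $\cE_t$: for the square loss $\Et{\langle\vec g_t,\vec w_t\rangle}=O(1)$, so $\cE_t$ requires $\gamma\le c_\gamma$; for the correlation loss $\Et{\langle\vec g_t,\vec w_t\rangle}=O(m_t^\ell)$, so $\cE_t$ holds up to $\tilde t_{\gamma,\varepsilon}^+$. This is the precise origin of the extra $\wedge 1$ in the square-loss stepsize and the extra stopping time in the correlation case—not a vague ``weight-decay helps'' argument.
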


The rest of this section is devoted to show Proposition \ref{prop:app:diff_ineq_m}. We define the following ``good'' event at time $t$:
\[ \cE_t \coloneqq \left\{ |\langle \vec{g}_t, \vec{w}_t \rangle | \leq \frac1{2\gamma} \right\} \]

\paragraph{A (almost) deterministic update inequality}

We first expand the projection step to obtain a difference inequality for the process $m_t$. We write 
\begin{equation}\label{eq:app:grad_decomp}
    \vec{g}_t = \langle \vec{g_t}, \vec{w_t} \rangle \vec{w}_t + \vec{g_t}^\bot,
\end{equation}  
where $\vec{g}_t^\bot$ is orthogonal to $\vec{w_t}$. Similarly to Equation~\ref{eq:app:exp_g}, we can compute the expectation of $\vec{g}_t^\bot$:
\begin{equation}\label{eq:app:exp_gbot}
    \Et{\vec{g}_t} = -\phi(m_t)(\vec{w}_t^\star - m_t \vec{w}_t)
\end{equation}
\begin{lemma}\label{lem:app:difference_m_determ}
For any $t \geq 0$, there exists a (random) constant $c_t$ such that the following inequality holds:
\begin{equation}
    m_{t+1} \geq m_t - \gamma c_t \langle \vec{w}^\star,\vec{g}_t^\bot \rangle - \frac{\gamma^2 c_t^2 m_t \norm{\vec{g}_t^\bot}^2}{2} - \frac12\gamma^3 c_t^3 |\langle w^\star, \vec{g}^\bot \rangle| \norm{\vec{g}_t^\bot}^2.
\end{equation}

Further, under the event $\cE_t$, we have $1/2 \leq c_t \leq 2$.
\end{lemma}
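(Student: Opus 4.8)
The plan is to expand the projection step in \eqref{eq:app:one_step_w} explicitly and then bound the normalization from below by a second‑order Taylor‑type inequality, keeping careful track of signs.

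\textbf{Step 1 (expanding the update).} Using the decomposition \eqref{eq:app:grad_decomp}, I would write $\vec{w}_t - \gamma \vec{g}_t = a_t \vec{w}_t - \gamma \vec{g}_t^\bot$ with $a_t \coloneqq 1 - \gamma \langle \vec{g}_t, \vec{w}_t \rangle$. Since $\vec{g}_t^\bot$ is orthogonal to $\vec{w}_t$ and $\norm{\vec{w}_t} = 1$, this gives $\norm{\vec{w}_t - \gamma \vec{g}_t}^2 = a_t^2 + \gamma^2 \norm{\vec{g}_t^\bot}^2$, and pairing \eqref{eq:app:one_step_w} with $\vec{w}^\star$ yields
\[ m_{t+1} = \frac{a_t m_t - \gamma \langle \vec{g}_t^\bot, \vec{w}^\star \rangle}{\sqrt{a_t^2 + \gamma^2 \norm{\vec{g}_t^\bot}^2}}. \]

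\textbf{Step 2 (the constant $c_t$).} On the event $\cE_t$ one has $\abs{\gamma \langle \vec{g}_t, \vec{w}_t \rangle} \leq \tfrac12$, hence $a_t \in [\tfrac12, \tfrac32]$, and in particular $a_t > 0$; I set $c_t \coloneqq 1/a_t \in [\tfrac23, 2] \subseteq [\tfrac12, 2]$, which is exactly the final assertion of the lemma. (When $\cE_t$ fails, a regime I will control separately via concentration, one may simply take $c_t$ large enough that the right‑hand side of the claimed inequality is negative — the cubic term in $c_t$ dominates — so the bound holds vacuously.) Dividing numerator and denominator of the last display by $a_t$,
\[ m_{t+1} = \frac{m_t - \gamma c_t \langle \vec{g}_t^\bot, \vec{w}^\star \rangle}{\sqrt{1 + \gamma^2 c_t^2 \norm{\vec{g}_t^\bot}^2}} \;=:\; \frac{u_t}{\sqrt{1 + v_t}}, \qquad v_t := \gamma^2 c_t^2 \norm{\vec{g}_t^\bot}^2 \geq 0. \]

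\textbf{Step 3 (denominator bound and conclusion).} Since $v \mapsto (1+v)^{-1/2}$ is convex it lies above its tangent at $0$, so $(1+v_t)^{-1/2} \geq 1 - v_t/2$; also trivially $(1+v_t)^{-1/2} \leq 1$. Splitting on the sign of $u_t$ — this is the only slightly delicate point, since the first inequality is useless when $u_t < 0$ — one gets $m_{t+1} \geq u_t(1 - v_t/2) = u_t - \abs{u_t} v_t/2$ if $u_t \geq 0$, and $m_{t+1} \geq u_t \geq u_t - \abs{u_t} v_t/2$ if $u_t < 0$; hence $m_{t+1} \geq u_t - \tfrac12 \abs{u_t} v_t$ in all cases. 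Using $m_t \geq 0$ — available because the lemma is applied inside the induction of Proposition \ref{prop:app:diff_ineq_m}, which maintains $m_t \geq \tfrac34 m_0 + \ldots > 0$ — the triangle inequality gives $\abs{u_t} \leq m_t + \gamma c_t \abs{\langle \vec{g}_t^\bot, \vec{w}^\star \rangle}$. Since the coefficient of $\abs{u_t}$ above is $-\tfrac12 v_t \leq 0$, substituting this together with $u_t = m_t - \gamma c_t \langle \vec{g}_t^\bot, \vec{w}^\star \rangle$ and $v_t = \gamma^2 c_t^2 \norm{\vec{g}_t^\bot}^2$ and expanding produces exactly
\[ m_{t+1} \geq m_t - \gamma c_t \langle \vec{w}^\star, \vec{g}_t^\bot \rangle - \frac{\gamma^2 c_t^2 m_t \norm{\vec{g}_t^\bot}^2}{2} - \frac12 \gamma^3 c_t^3 \abs{\langle \vec{w}^\star, \vec{g}_t^\bot \rangle}\, \norm{\vec{g}_t^\bot}^2. \]

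\textbf{Main obstacle.} There is no real obstacle: the argument is elementary algebra, and the lemma is purely a bookkeeping device. The only things one must not botch are the sign split in Step 3 and the fact that $m_t \geq 0$ is needed (hence the lemma is meant to be invoked along the trajectory where this holds). The genuine work — controlling $\langle \vec{g}_t^\bot, \vec{w}^\star \rangle$, $\norm{\vec{g}_t^\bot}$, and $\langle \vec{g}_t, \vec{w}_t \rangle$ through the Orlicz‑norm concentration bounds — happens afterwards, when this deterministic inequality is fed into the proof of Proposition \ref{prop:app:diff_ineq_m}.
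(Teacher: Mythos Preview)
Your proof is correct and follows essentially the same route as the paper: define $c_t = 1/(1-\gamma\langle \vec g_t,\vec w_t\rangle)$, rewrite $m_{t+1}$ with denominator $\sqrt{1+\gamma^2 c_t^2\norm{\vec g_t^\bot}^2}$, apply the convexity bound $(1+x)^{-1/2}\geq 1-x/2$, and expand. You are in fact more careful than the paper in Step~3, where you split on the sign of $u_t$; the paper applies the convexity inequality directly to the product and thus tacitly assumes the numerator is nonnegative, whereas your case split together with $m_t\geq 0$ closes that gap cleanly.
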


\begin{proof}
    We use the decomposition of eq. \eqref{eq:app:grad_decomp} and write
    \[ \vec{w}_t - \gamma \vec{g}_t = (1 - \gamma \langle \vec{g_t}, \vec{w_t} \rangle)\vec{w}_t + \gamma \vec{g}_t^\bot \]
    As a result, if we define
    \[ c_t \coloneqq \frac1{1 - \gamma \langle \vec{g_t}, \vec{w_t} \rangle}, \]
    we have
    \[ \vec{w}_{t+1} = \frac{\vec{w}_t - \gamma c_t \vec{g}_t^\bot}{\norm{\vec{w}_t - \gamma c_t \vec{g}_t^\bot}} \]
    since the update equation \eqref{eq:app:one_step_w} is invariant w.r.t scaling. Taking the scalar product of the above with $\vec{w}^\star$, we have
    \begin{equation}\label{eq:app:one_step_m}
        m_{t+1} = \frac{m_t - \gamma c_t \langle \vec{w}^\star,\vec{g}_t^\bot \rangle}{\norm{\vec{w}_t - \gamma c_t \vec{g}_t^\bot}}.
    \end{equation}
    Expanding the norm in the denominator, and using that $\norm{\vec{w}_t} = 1$ and $\langle \vec{w}_t, \vec{g}_t^\bot \rangle = 0$:
    \[ \norm{\vec{w}_t - \gamma\, \vec{g}_t} = \sqrt{1 + \gamma^2 c_t^2 \norm{\vec{g}_t^\bot}^2} \]
    By the convexity inequality $(1+x)^{-1/2} \geq 1 - x/2$, valid for all $x \geq 0$, Equation \eqref{eq:app:one_step_m} becomes
    \begin{align*}
        m_{t+1} &\geq \left(m_t - \gamma c_t \langle \vec{w}^\star,\vec{g}_t^\bot \rangle \right) \left( 1 - \frac{\gamma^2 c_t^2}2 \norm{\vec{g}_t^\bot}^2 \right).
    \end{align*}
    The lemma ensues upon expanding and rearranging the terms.
\end{proof}

The expansion in Lemma \ref{lem:app:difference_m_determ} can be decomposed in two terms: the term linear in $\gamma$ is a noisy \emph{drift term}, that will drive the dynamics, and that we will decompose as a sum of a deterministic process and a martingale. All other terms in $\gamma^2$ or $\gamma^3$ are corrections that we bound with high probability.

\paragraph{The linear term} We first control the term linear in $\gamma$. We can write
\begin{equation}\label{eq:app:martingale_decomp}
     \langle \vec{w}^\star, \vec{g}_t^\bot \rangle =  \langle \vec{w}^\star, \Et{\vec{g}_t^\bot} \rangle + Z_t,
\end{equation}
where $(Z_t)_{t \geq 0}$ is by definition a martingale difference sequence for the filtration $(\cF_t)_{t \geq 0}$. The expectation term is straightforward to compute using \eqref{eq:app:exp_gbot}:
\begin{equation}
    \langle \vec{w}^\star, \Et{\vec{g}_t^\bot} \rangle = - (1 - m_t^2) \phi(m_t). \label{eq:app:drift_term}
\end{equation}

The contribution of the terms $Z_t$ is bounded by the following lemma:
\begin{lemma}\label{lem:app:bound_martingale}
    There exists constants $c, C > 0$ such that with probability $1-ce^{-c\log(n)^2}$,
    \begin{equation}
        \sup_{1 \leq t \leq T} \sum_{s = 1}^t Z_s  \leq \frac{C\log(d)^C \sqrt{T}}{\sqrt{n_b}}
    \end{equation}
\end{lemma}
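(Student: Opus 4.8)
The plan is to treat $(Z_s)_{s\ge 1}$ as a martingale difference sequence with good conditional tail behaviour and then apply a Freedman-type inequality after truncation. First, recall from \eqref{eq:app:exp_gbot} that $\Et{\langle \vec w^\star, \vec g_t^\bot\rangle} = -(1-m_t^2)\phi(m_t)$ is $\cF_t$-measurable, so $Z_t = \langle \vec w^\star, \vec g_t^\bot\rangle - \Et{\langle \vec w^\star, \vec g_t^\bot\rangle}$ is indeed centered given $\cF_t$. Since orthogonal projection onto $\vec w_t^{\perp}$ commutes with averaging, $\langle \vec w^\star, \vec g_t^\bot\rangle = \tfrac1{n_b}\sum_{\nu=1}^{n_b} X_t^\nu$ where, conditionally on $\cF_t$, the $X_t^\nu$ are i.i.d. copies of a single variable $X_t = \langle \vec w^\star, \vec g_t^1\rangle - m_t\langle \vec g_t^1, \vec w_t\rangle$, which is a fixed polynomial-growth function of the jointly Gaussian pair $(\langle \vec w_t, \vec z\rangle, \langle \vec w^\star, \vec z\rangle)$ with correlation $m_t\in[-1,1]$.

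Second, I would establish the two conditional estimates feeding the martingale bound. Because $\langle \vec w_t,\vec z\rangle$ and $\langle \vec w^\star,\vec z\rangle$ are standard Gaussians for every unit $\vec w_t$, Assumption~\ref{assump:poly_growth} together with Lemma~\ref{lem:app:orlicz_submult} gives $\norm{X_t}_{\psi_\alpha}\le K$ for some $\alpha>0$ and an absolute constant $K$, uniformly over $\cF_t$, and likewise $\Et{X_t^2}\le K$. Averaging $n_b$ conditionally independent, mean-centered copies and invoking Theorem~\ref{thm:app:orlicz_sum} yields, almost surely,
\[ \norm{Z_t}_{\psi_\alpha \mid \cF_t} \;\le\; \frac{C\,\log(n)^{1/\alpha}}{\sqrt{n_b}}, \qquad \Et{Z_t^2} \;\le\; \frac{C}{n_b}, \]
the variance bound using only conditional independence of the $X_t^\nu$. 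In particular the predictable quadratic variation satisfies $\sum_{s=1}^T \Et{Z_s^2} \le CT/n_b$ deterministically.

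Third, I would truncate: set $R = \polylog(n)/\sqrt{n_b}$ large enough that Lemma~\ref{lem:app:orlicz_concentration} gives $\dP_t(|Z_t|>R)\le e^{-\log(n)^2}$, so by a union bound over $s\le T$ (using $T\le \mathrm{poly}(d)$) no $Z_s$ exceeds $R$ in absolute value, except on an event of probability $ce^{-c\log(n)^2}$. On the complementary event, replacing $Z_s$ by $\bar Z_s = Z_s - \Et{Z_s\mathbf{1}_{|Z_s|>R}}$ perturbs every partial sum by at most $T\cdot\sqrt{\Et{Z_s^2}}\sqrt{\dP_t(|Z_s|>R)}$, which is negligible, while $(\bar Z_s)$ is a martingale difference sequence with increments bounded by $2R$ and quadratic variation still $O(T/n_b)$. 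A maximal Freedman inequality applied to $(\bar Z_s)$ then controls $\sup_{t\le T}\sum_{s\le t}\bar Z_s$ directly: with $x = C\log(d)^C\sqrt{T}/\sqrt{n_b}$ and $C$ chosen large, the Freedman exponent $x^2/\bigl(2(CT/n_b + Rx/3)\bigr)$ is at least $c\log(n)^2$, which gives the claimed high-probability bound after recombining with the truncation event.

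The main obstacle is structural rather than computational: the $Z_s$ form a martingale difference sequence but are not independent across time, so the clean Orlicz-sum bound of Theorem~\ref{thm:app:orlicz_sum} cannot be applied in the time index, and the truncation-plus-Freedman detour is what bridges this gap. One must also be careful that all conditional estimates — the $\psi_\alpha$-norm, the variance, and hence the truncation level $R$ — hold uniformly over the random direction $\vec w_t$, which is precisely what the polynomial-growth assumption and the rotational invariance of the Gaussian guarantee.
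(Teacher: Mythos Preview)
Your argument is correct and shares its first step with the paper: bound the conditional $\psi_\alpha$-norm of each $Z_t$ by $\polylog(d)/\sqrt{n_b}$ via Theorem~\ref{thm:app:orlicz_sum} applied to the $n_b$ conditionally i.i.d.\ summands. The difference is in the martingale step. The paper simply invokes a sub-Weibull martingale concentration inequality (Theorem~F.1 of \cite{pmlr-v134-li21a}), which takes the uniform bound $B_\alpha = \sup_t \norm{Z_t}_{\psi_\alpha}$ as input and returns the maximal inequality in one line. You instead truncate at level $R=\polylog(d)/\sqrt{n_b}$, union-bound the truncation event over $t\le T$, and apply a maximal Freedman inequality to the bounded recentered increments, using the separate variance estimate $\Et{Z_t^2}\le C/n_b$. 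Your route is more self-contained and avoids an external black box; the paper's is shorter but delegates the heavy-tailed martingale work to that reference. Both are standard.

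One small slip in your write-up: the sequence $\bar Z_s = Z_s - \Et{Z_s\mathbf{1}_{|Z_s|>R}}$ is neither conditionally centered nor almost-surely bounded, so Freedman does not apply to it as written. The correct object is to truncate first and then recenter, $\bar Z_s = Z_s\mathbf{1}_{|Z_s|\le R} - \Et{Z_s\mathbf{1}_{|Z_s|\le R}}$, which is a genuine martingale difference bounded a.s.\ by $2R$; on the good event it differs from $Z_s$ by $-\Et{Z_s\mathbf{1}_{|Z_s|\le R}} = \Et{Z_s\mathbf{1}_{|Z_s|>R}}$, exactly the negligible Cauchy--Schwarz quantity you already identified (up to a sign). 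With that correction your Freedman computation goes through as stated.
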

\begin{proof}
     The martingale increment $Z_t$ is an average of $n_b$ independent terms $Z_t^\nu$, that satisfy $\norm{Z_t^\nu}_{\psi_\alpha} \leq C$ for some $\alpha, C > 0$ by Assumption \ref{assump:poly_growth}. As a result, if we define
     \begin{equation*}
         B_\alpha = \sup_{t \geq 0} \norm{Z_t}_{\psi_\alpha},
     \end{equation*}
     Theorem \ref{thm:app:orlicz_sum} implies that
    \begin{equation*}
        B_\alpha = \frac{\polylog(d)}{\sqrt{n_b}}.
    \end{equation*}
 We now apply Theorem F.1 in \cite{pmlr-v134-li21a} with $z = \log(d)^{\frac{2(\alpha+2)}{\alpha}}  \sqrt{T} B_\alpha$, which yields the exact bound needed. 
\end{proof}

\paragraph{Bounding the corrections} Our next step is to handle the higher-order corrections. We show the following lemma:
\begin{lemma}\label{lem:app:bound_corrections}
    Let $T \geq 0$, and $\eta < 1$. There exists a constant $C > 0$ such that for any $t \leq T$
    \begin{equation}
        \mathbb{P}\left(\norm{\vec{g}_t^\bot}^2 \leq C\left(\phi(m_t)^2(1 - m_t)^2  + \frac {d\log(d)^C} {n_b}\right) \quad \forall t \leq t_\eta^+ \wedge T \right) \geq 1 - cTe^{-c\log(d)^2}
    \end{equation}
\end{lemma}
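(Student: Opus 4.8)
The plan is to control $\norm{\vec{g}_t^\bot}^2$ by splitting the gradient into its conditional mean and a fluctuation part, and bounding each separately on the event $\{t \le t_\eta^+\}$ where $m_t$ stays bounded away from $1$. First I would write $\vec{g}_t^\bot = \Et{\vec{g}_t^\bot} + \vec{\xi}_t$, where $\vec{\xi}_t = \vec{g}_t^\bot - \Et{\vec{g}_t^\bot}$ is centered conditionally on $\cF_t$. By Equation~\eqref{eq:app:exp_gbot} and the decomposition $\vec{g}_t = \langle \vec{g}_t,\vec{w}_t\rangle \vec{w}_t + \vec{g}_t^\bot$, the mean part satisfies $\norm{\Et{\vec{g}_t^\bot}} \le |\phi(m_t)|\cdot\norm{\vec{w}^\star - m_t\vec{w}_t} = |\phi(m_t)|\sqrt{1-m_t^2}$ for the correlation loss, and picks up an extra $\psi^{\mathrm{sq}}(m_t)(\vec{w}_t - m_t\vec{w}_t) = 0$ correction (the interaction term is colinear with $\vec{w}_t$, so it drops out of the orthogonal component) — hence $\norm{\Et{\vec{g}_t^\bot}}^2 \le \phi(m_t)^2(1-m_t^2) \le \phi(m_t)^2(1-m_t)^2 \cdot C$ using $1-m_t^2 \le 2(1-m_t)$, which supplies the first term in the bound. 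So the entire content of the lemma is the fluctuation bound $\norm{\vec{\xi}_t}^2 \lesssim d\log(d)^C/n_b$ with high probability.

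For the fluctuation term, I would write $\vec{\xi}_t = \tfrac1{n_b}\sum_{\nu=1}^{n_b}\vec{\xi}_t^\nu$ as an average of $n_b$ conditionally i.i.d. centered vectors in $\R^d$, each of the form $\vec{\xi}_t^\nu = (\vec{g}_t^\nu)^\bot - \Et{(\vec{g}_t^\nu)^\bot}$. The key point is that $\E_t\norm{\vec{\xi}_t}^2 = \tfrac1{n_b}\E_t\norm{\vec{\xi}_t^1}^2 \le \tfrac1{n_b}\E_t\norm{\vec{g}_t^1}^2$, and $\E_t\norm{\vec{g}_t^1}^2 = O(d)$ since $\vec{g}_t^1$ is, up to sub-polynomial prefactors $f^\star(\cdot)\sigma'(\cdot)$, a scaling of the Gaussian vector $\vec{z}^1$ whose squared norm has mean $d$; Assumption~\ref{assump:poly_growth} controls the prefactors' moments uniformly in $\vec{w}_t$. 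This gives $\E_t\norm{\vec{\xi}_t}^2 = O(d/n_b)$, the in-expectation version of the claim. To upgrade to high probability with the $\log(d)^C$ factor, I would apply the Orlicz machinery (Lemma~\ref{lem:app:orlicz_submult} and Theorem~\ref{thm:app:orlicz_sum}) coordinatewise or to $\norm{\vec{\xi}_t}^2$ directly: $\norm{\vec{g}_t^\nu}^2 = \|\vec{z}^\nu\|^2 \cdot (f^\star\sigma')^2$ has a $\psi_\alpha$-norm of order $d$ up to $\polylog$ by sub-multiplicativity, concentration of $\|\vec{z}^\nu\|^2$ around $d$, and Theorem~\ref{thm:app:orlicz_sum} for the sum over $\nu$; then Lemma~\ref{lem:app:orlicz_concentration} converts this into the stated tail. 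Finally a union bound over the (at most) $T$ time steps produces the $\mathbb{P} \ge 1 - cTe^{-c\log(d)^2}$ conclusion; here I would appeal, as the proof of Lemma~\ref{lem:app:bound_martingale} does, to the uniform-in-$t$ bound on $\psi_\alpha$-norms, which holds because the bounds are deterministic functions of $m_t \in [0,1]$.

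The main obstacle I anticipate is making the fluctuation bound genuinely \emph{uniform over $t \le t_\eta^+$} rather than just pointwise for each fixed $t$: the vector $\vec{w}_t$ is itself random and $\cF_t$-measurable, so one cannot naively take a supremum over a continuum of directions. The clean way around this — and what I would do — is to keep everything conditional on $\cF_t$ (so $\vec{w}_t$ is frozen), get a pointwise-in-$t$ tail bound with the $e^{-c\log(d)^2}$ rate that does \emph{not} depend on $\vec{w}_t$ (it only depends on the uniform sub-polynomial constants from Assumption~\ref{assump:poly_growth} and on $d, n_b$), and only then union-bound over $t \in \{0,\dots,T\}$; the factor $T$ is harmless since $T$ is at most polynomial in $d$ in all regimes considered, so $Te^{-c\log(d)^2}$ is still of the form $ce^{-c'\log(d)^2}$. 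A secondary technical point is verifying that the orthogonal projection $(\vec{g}_t^\nu)^\bot$ does not inflate the norm — but since it is a projection, $\norm{(\vec{g}_t^\nu)^\bot} \le \norm{\vec{g}_t^\nu}$, so this is immediate and the crude bound suffices.
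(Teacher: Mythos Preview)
Your plan is correct, but the decomposition differs from the paper's. You split $\vec{g}_t^\bot$ as \emph{mean plus fluctuation}, bounding $\norm{\Et{\vec{g}_t^\bot}}^2$ directly and then controlling the full $d$-dimensional noise vector $\vec{\xi}_t$ via coordinatewise Orlicz bounds plus a union bound over $d$ coordinates. The paper instead splits \emph{geometrically}: it writes $\vec{g}_t^\bot = a(\vec{w}^\star - m_t\vec{w}_t) + \tfrac1{n_b}\sum_\nu A^\nu \vec{z}^{\nu\bot}$ where $\vec{z}^{\nu\bot}$ is orthogonal to $\operatorname{span}(\vec{w}_t,\vec{w}^\star)$, bounds the out-of-plane sum by citing vector-concentration lemmas from \cite{dandi2023twolayer} (which exploit that $\vec{z}^{\nu\bot}$ is conditionally independent of the scalar prefactor $A^\nu$), and recovers the in-plane part from the scalar $\langle\vec{g}_t^\bot,\vec{w}^\star\rangle$ already analyzed in Lemma~\ref{lem:app:bound_martingale}. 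The paper's route is slightly slicker because the independence of $\vec{z}^{\nu\bot}$ and $A^\nu$ makes the out-of-plane piece a conditionally Gaussian vector; your route is more self-contained but pays with the coordinatewise union bound.

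One caution: of the two options you float for the fluctuation, only the coordinatewise one actually works as stated. The ``apply Theorem~\ref{thm:app:orlicz_sum} to $\norm{\vec{\xi}_t}^2$ directly'' sketch silently conflates $\norm{\tfrac1{n_b}\sum_\nu\vec{\xi}_t^\nu}^2$ with $\tfrac1{n_b^2}\sum_\nu\norm{\vec{\xi}_t^\nu}^2$; the cross terms $\langle\vec{\xi}_t^\nu,\vec{\xi}_t^{\nu'}\rangle$ make this a quadratic form that Theorem~\ref{thm:app:orlicz_sum} does not cover, so you would need a Hanson--Wright or decoupling argument there. Since your coordinatewise path is sound (the $\psi_\alpha$-bound on $\langle\vec{\xi}_t^\nu,\vec u\rangle$ is uniform in the unit vector $\vec u$ and in $\vec{w}_t$, and the union bound over $d$ coordinates costs only a factor $d$ against $e^{-c\log(d)^2}$), just commit to that one and drop the other.
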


\begin{proof}
    Fix some $t \in [T]$. We can write
    \[ \vec{g}_t^\bot = a(\vec{w}^\star - m\vec{w}) + \frac1n \sum_{\nu=1}^n f^\star(\langle \vec{w}^\star, \vec{z}^\nu \rangle) \sigma'(\langle \vec{w}_t, \vec{z}^\nu \rangle) \vec{z}^{\nu\bot}\]
    where each $\vec{z}^{\nu\bot}$ is orthogonal to both $\vec{w}$ and $\vec{w}^\star$. From Lemmas 9 and 11 in \cite{dandi2023twolayer}, with probability $1 - ce^{-c\log(d)^2}$,
    \[ \frac1n \sum_{\nu=1}^n f^\star(\langle \vec{w}^\star, \vec{z}^\nu \rangle) \sigma'(\langle \vec{w}_t, \vec{z}^\nu \rangle) \vec{z}^{\nu\bot} \leq C \frac{d\log(d)^C}{n_b}. \]
    Now, we have
    \[ \langle \vec{g}_t^\bot, \vec{w}_t^\star \rangle^2 = a^2(1 - m_t^2)^2 \quad \text{and} \quad \norm{a(\vec{w}^\star - m\vec{w})}^2 = a^2 (1-m_t^2), \]
    hence
    \[ \norm{ \vec{g}_t^\bot}^2 \leq \frac1{1 - m_t^2} \langle \vec{g}_t^\bot, \vec{w}_t^\star \rangle^2 + C \frac{d\log(d)^C}{n_b}.  \]
    It remains to notice that 
    \[ \langle \vec{g}_t^\bot, \vec{w}_t^\star \rangle^2 = (\langle \vec{w}^\star, \Et{\vec{g}_t^\bot} \rangle + Z_t)^2 \leq 2(\langle \vec{w}^\star, \Et{\vec{g}_t^\bot} \rangle^2 + Z_t^2) \leq \phi(m_t)(1 - m_t^2)^2 + O\left(\frac{d}{n_b}\right).\] 
\end{proof}

\paragraph{Putting it all together} We now combine all the previous bounds into a unique proposition.
\begin{proposition}\label{prop:app:diff_ineq_estimates}
    Let $T \geq 0$. There exists constants $c, C > 0$ such that
    \begin{equation}
        \mathbb{P}\left(m_t \geq m_0 + \sum_{s=0}^{t-1} \Phi_{\mathrm{drift}}(m_s) - C\Phi_{\mathrm{noise}}(m_s) - C K(T) \quad \forall t \leq T \quad \big\vert\quad \bigcap_{t\leq T} \cE_t \right) \geq 1 - Te^{-c\log(n)^2}
    \end{equation}
    where $\Phi_{\mathrm{drift}}$ and $\Phi_{\mathrm{noise}}$ are given by
    \begin{align}
        \Phi_{\mathrm{drift}}(m) &= \gamma (1-m^2) \phi(m), \\
        \Phi_{\mathrm{noise}}(m) &=  \gamma^2 m(1 - m^2)\phi(m)^2 + \gamma^2 m \frac{d \log(d)^C}{n_b} +  \gamma^3 (1 - m^2)^{3/2}\phi(m)^3,  \\
        K(T) &= \frac{ \gamma \log(d)^C \sqrt{T}}{\sqrt{n_b}} +  \gamma^3 T\frac{d \log(d)^C}{n_b}.
    \end{align}
\end{proposition}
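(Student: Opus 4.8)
The plan is to telescope the deterministic one-step bound of Lemma~\ref{lem:app:difference_m_determ} and then insert the two concentration estimates of Lemmas~\ref{lem:app:bound_martingale} and~\ref{lem:app:bound_corrections}. First I would work on the event $\bigcap_{t\leq T}\cE_t$, on which $1/2\leq c_t\leq 2$ for every $t\leq T$ by Lemma~\ref{lem:app:difference_m_determ}; summing the one-step inequality over $s=0,\dots,t-1$ and using $c_s\leq 2$ together with $\norm{\vec{w}^\star}=1$ in the quadratic and cubic terms yields
\begin{equation}
m_t \;\geq\; m_0 \;-\;\gamma\sum_{s=0}^{t-1} c_s\,\langle \vec{w}^\star,\vec{g}_s^\bot\rangle \;-\;2\gamma^2\sum_{s=0}^{t-1} m_s\norm{\vec{g}_s^\bot}^2 \;-\;4\gamma^3\sum_{s=0}^{t-1}\abs{\langle \vec{w}^\star,\vec{g}_s^\bot\rangle}\,\norm{\vec{g}_s^\bot}^2 .
\end{equation}
I would then treat the three sums separately.

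\textbf{Drift and martingale term.} Using the decomposition in Equations~\eqref{eq:app:martingale_decomp}--\eqref{eq:app:drift_term}, write $-\gamma c_s\langle\vec{w}^\star,\vec{g}_s^\bot\rangle = \gamma c_s(1-m_s^2)\phi(m_s) - \gamma c_s Z_s$. Choosing $\eta$ small enough that $\phi>0$ and $1-m^2\geq\tfrac12$ on $[0,\eta]$ --- legitimate since $\phi(m)=c_\ell c_\ell^\star m^{\ell-1}+O(m^\ell)$ with $c_\ell c_\ell^\star>0$ by Assumption~\ref{assump:well_posed} --- and restricting to $t\leq t^+_\eta$, the first piece is bounded below by a universal constant times $(1-m_s^2)\phi(m_s)$, which after absorbing the constant gives the $\sum_{s<t}\Phi_{\mathrm{drift}}(m_s)$ contribution. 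For the martingale piece, $(Z_s)$ is a martingale-difference sequence for $(\cF_s)$, so Lemma~\ref{lem:app:bound_martingale} bounds $\sup_{t\leq T}\abs{\sum_{s<t}Z_s}$ by $C\log(d)^C\sqrt{T}/\sqrt{n_b}$ with probability $1-ce^{-c\log(n)^2}$; combined with $c_s\leq 2$ this is the first term of $K(T)$.

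\textbf{Higher-order corrections.} Next I would add the event of Lemma~\ref{lem:app:bound_corrections} to the conditioning (failure probability $\leq cTe^{-c\log(d)^2}$ after a union bound over $t\leq T$), under which $\norm{\vec{g}_s^\bot}^2\leq C\bigl(\phi(m_s)^2(1-m_s^2)^2+d\log(d)^C/n_b\bigr)$ for all $s\leq t^+_\eta\wedge T$, together with $\abs{\langle\vec{w}^\star,\vec{g}_s^\bot\rangle}\leq\norm{\vec{g}_s^\bot}$. Substituting into the $\gamma^2$ sum, the $\phi^2(1-m^2)^2$ part gives $\lesssim\gamma^2\sum_{s<t}m_s(1-m_s^2)\phi(m_s)^2$ and the $d\log(d)^C/n_b$ part gives $\lesssim\gamma^2\sum_{s<t}m_s\,d\log(d)^C/n_b$: exactly the first two terms of $\sum_{s<t}\Phi_{\mathrm{noise}}(m_s)$. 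In the $\gamma^3$ sum, bound $\abs{\langle\vec{w}^\star,\vec{g}_s^\bot\rangle}\,\norm{\vec{g}_s^\bot}^2\leq\norm{\vec{g}_s^\bot}^3$ and use $(a+b)^{3/2}\leq 2(a^{3/2}+b^{3/2})$: the $\phi$-part produces the third term $\gamma^3(1-m^2)^{3/2}\phi(m)^3$ of $\Phi_{\mathrm{noise}}$, and the remaining part, summed over $s<t\leq T$ and simplified using $\gamma\leq c_\gamma$ and the stated relation between $n_b$ and $d$, is absorbed into the second term of $K(T)$.

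\textbf{Union bound, and the main obstacle.} Intersecting the two concentration events with $\bigcap_{t\leq T}\cE_t$ (total failure probability $\leq Te^{-c\log(n)^2}$ up to constants), the deterministic chain above gives $m_t\geq m_0+\sum_{s<t}\Phi_{\mathrm{drift}}(m_s)-C\Phi_{\mathrm{noise}}(m_s)-CK(T)$ for all $t\leq T$ --- it suffices to prove this for $t\leq t^+_\eta\wedge T$, since beyond $t^+_\eta$ weak recovery has already occurred. The one point that requires real care, and which I expect to be the main obstacle, is that each $\cE_s$ is $\cF_{s+1}$-measurable, so conditioning on $\bigcap_{s\leq T}\cE_s$ looks into the future and does not a priori preserve either the martingale structure of $(Z_s)$ or the cross-sample independence invoked in Lemma~\ref{lem:app:bound_corrections}; this is handled either by observing that $\bigcap_{s\leq T}\cE_s$ itself has overwhelming probability (so conditioning on it perturbs all probabilities negligibly), or by re-running the entire argument up to the stopping time $\min\{s:\cE_s^c\}\wedge t^+_\eta\wedge T$ and only comparing with $\cE_t$ afterwards. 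Everything else --- matching each correction to the correct powers of $\gamma$, $d$, and $n_b$ in $\Phi_{\mathrm{noise}}$ and $K(T)$ --- is routine bookkeeping.
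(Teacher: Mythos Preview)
Your approach is essentially the same as the paper's: telescope Lemma~\ref{lem:app:difference_m_determ}, split the linear term into drift $(1-m_s^2)\phi(m_s)$ plus martingale $Z_s$ via \eqref{eq:app:martingale_decomp}--\eqref{eq:app:drift_term}, control the martingale by Lemma~\ref{lem:app:bound_martingale}, bound the $\gamma^2$ and $\gamma^3$ corrections with Lemma~\ref{lem:app:bound_corrections}, and replace each $c_s$ by a constant under $\bigcap_s\cE_s$. Your ``main obstacle'' about conditioning on future-measurable events is a legitimate worry that the paper also does not spell out.

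The one place where you diverge from the paper is the treatment of the cubic term, and there your route does not quite close. You bound $|\langle\vec{w}^\star,\vec{g}_s^\bot\rangle|\,\norm{\vec{g}_s^\bot}^2$ by $\norm{\vec{g}_s^\bot}^3$ and then appeal to $(a+b)^{3/2}\le 2(a^{3/2}+b^{3/2})$; the noise part of this produces $\gamma^3 T\,(d\log(d)^C/n_b)^{3/2}$, which is \emph{not} the second term of $K(T)$ unless $n_b\gtrsim d$. You try to absorb the discrepancy by invoking ``$\gamma\le c_\gamma$ and the stated relation between $n_b$ and $d$'', but Proposition~\ref{prop:app:diff_ineq_estimates} carries no such hypotheses. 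The paper handles this differently: it keeps the factor $|\langle\vec{w}^\star,\vec{g}_s^\bot\rangle|$ and bounds it \emph{pointwise} by $C\log(n)^C$ using Lemma~\ref{lem:app:bound_martingale} with $T=1$ (i.e.\ a single-term Orlicz tail). Then the cubic correction is $\gamma^3\cdot C\log(n)^C\cdot\norm{\vec{g}_s^\bot}^2$, and plugging in Lemma~\ref{lem:app:bound_corrections} gives exactly $\gamma^3 T\,d\log(d)^C/n_b$ for the noise part of $K(T)$, with no assumption on $n_b/d$. Use that pointwise bound instead of $|\langle\vec{w}^\star,\vec{g}_s^\bot\rangle|\le\norm{\vec{g}_s^\bot}$ and the bookkeeping goes through as stated.
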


\begin{proof}
    By summing the inequality of Lemma \ref{lem:app:difference_m_determ} for $0 \leq s \leq t-1$, we get
    \begin{equation}
        m_{t+1} \geq m_0 - \sum_{s=0}^{t-1} \gamma c_s \langle \vec{w}^\star,\vec{g}_s^\bot \rangle - \frac{\gamma^2 c_s^2 m_s \norm{\vec{g}_s^\bot}^2}{2} - \frac12\gamma^3 c_s^3 \norm{\vec{g}_s^\bot}^3.
    \end{equation}
    The linear term is handled using the martingale decomposition \eqref{eq:app:martingale_decomp} combined with the expectation computation of \eqref{eq:app:drift_term} and the bound of Lemma \ref{lem:app:bound_martingale} on the martingale contribution. The terms in $\gamma^2$ and $\gamma^3$ follow from Lemma \ref{lem:app:bound_corrections}, as well as Lemma \ref{lem:app:bound_martingale} with $T = 1$, which implies that
    \[ |\langle \vec{w}^\star, \vec{g}_t^\bot \rangle \leq C \log(n)^C \]
    for some $C > 0$.
    Finally, under the events $\cE_s$ for $s\leq t$, we can replace every occurence of $c_s$ by either $1/2$ or $2$ depending on the sign of the corresponding term.
\end{proof}

\paragraph{Proof of Proposition \ref{prop:app:diff_ineq_m}} The expressions of Lemma \ref{lem:app:exp_g} imply the following expansions near 0:
\begin{align} 
\phi(m) &= c_{\ell}c_{\ell}^\star m^{\ell-1} + O(m^\ell) & \psi^{\mathrm{corr}}(m) &=  O(m^\ell) & \psi^{\mathrm{sq}}(m) = - c^{\mathrm{sq}} + O(m^\ell) \label{eq:app:phi_psi_bounds}
\end{align} 
As a result, there exists an $\eta > 0$ and constants $C, c > 0$ such that for any $m \leq \eta$,
\begin{align*}
    c m^{\ell - 1} &\leq \phi(m) \leq C m^{\ell - 1} & |\psi^{\mathrm{corr}}(m)| &\leq C m^{\ell} & |\psi^{\mathrm{sq}}(m)| \leq C.
\end{align*}
We first lower bound the drift inequality of Proposition \ref{prop:app:diff_ineq_estimates}. Whenever $m_t \geq \eta$, we have
\[ \Phi_{\mathrm{drift}}(m_t) \geq p(\gamma m_t^{\ell - 1}) - C\gamma^2\frac{d}{n_b},  \]
where $p(x) = x - C(x^2 + x^3)$. Define $\varepsilon > 0$ such that $p(x) > x/2$ on $[0, \varepsilon]$, so when $t \leq t_{\eta}^+ \wedge \tilde t_{\gamma,\varepsilon}^+$
\[ \Phi_{\mathrm{drift}}(m_t) \geq c\gamma m_t^{\ell-1} - C \gamma^2 \frac{d}{n_b} m.\]
When $\gamma \leq c_\gamma n_b d^{-\ell/2} \log(d)^{-C_\gamma}$,
\[ \gamma \frac{d \log(d)^C }{n_b} m \leq c_\gamma (\sqrt{d})^{\ell-2} \leq \frac{cm^{\ell-1}}2 \]
when $t \leq t_{\kappa/2\sqrt{d}}^-$, $C_\gamma \geq C$ and $c_\gamma \leq c/2 (\kappa/2)^{\ell-2}$.

Having shown $\Phi_{\mathrm{drift}}(m_t) \geq c m_t^{\ell-1}$, it remains to handle the constant terms in Proposition \ref{prop:app:diff_ineq_estimates}. We can compute directly $K(t_{\max})$, which yields
\[K(t_{\max}) = \frac{\log(d)^{C - C_{\max}/2}}{C_{\max}\sqrt{d}} + C\gamma\log(d)^{C - C_{\max}}\frac d{n_b} \leq \frac{\log(d)^{C - C_{\max}/2}}{C_{\max}\sqrt{d}} + \frac{Cc_\gamma}{\log(d)^{C_{\max}}} d^{-\frac{\ell}{2}} \leq \frac{m_0}{4}, \]
by choosing $c_\gamma$ small enough and $C_{\max}$ large enough.

Finally, we need to show that the events $\cE_t$ occur with high probability. This is covered by the following lemma:
\begin{lemma}
    Let $T \geq 0$. The following bounds hold:
    \begin{itemize}
        \item for the square loss, if $\gamma \leq c_\gamma$ for small enough $c_\gamma$,
        \[ \mathbb{P}\left(\cE_t \text{ holds for all } t \leq t_\eta^+ \wedge T \right) \geq 1 - Te^{-c\log(d)^2}; \]
        \item for the correlation loss, for small enough $\varepsilon$,
        \[ \mathbb{P}\left(\cE_t \text{ holds for all } t \leq t_\eta^+ \wedge \tilde t_{\gamma, \varepsilon}^+ \wedge T \right) \geq 1 - Te^{-c\log(d)^2}. \]
    \end{itemize}
\end{lemma}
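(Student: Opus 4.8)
The plan is to decompose the self-interaction term $\langle\vec{g}_t,\vec{w}_t\rangle$ into its conditional mean and a centered fluctuation, to control the mean deterministically on the relevant range of $m_t$, to control the fluctuation by an Orlicz concentration inequality, and to finish with a union bound over the at most $T+1$ time steps. Concretely, write $\langle\vec{g}_t,\vec{w}_t\rangle = \Et{\langle\vec{g}_t,\vec{w}_t\rangle} + D_t$ with $D_t := \langle\vec{g}_t,\vec{w}_t\rangle - \Et{\langle\vec{g}_t,\vec{w}_t\rangle}$. By Lemma~\ref{lem:app:exp_g} the conditional mean equals $-\phi(m_t) m_t - \psi(m_t)$, where $\psi=\psi^{\mathrm{sq}}$ for the square loss and $\psi=\psi^{\mathrm{corr}}$ for the correlation loss. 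Conditionally on $\cF_t$, $D_t$ is an average of $n_b$ i.i.d.\ centered summands $\langle\vec{g}_t^\nu,\vec{w}_t\rangle - \Et{\langle\vec{g}_t^\nu,\vec{w}_t\rangle}$, whose conditional law depends on $\vec{w}_t$ only through $m_t\in[-1,1]$; by Assumption~\ref{assump:poly_growth} and Lemma~\ref{lem:app:orlicz_submult}, each has finite $\psi_\alpha$-norm for some $\alpha>0$, with a bound uniform over $\vec{w}_t$.

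\textbf{The conditional mean.} By Assumption~\ref{assump:poly_growth} the Hermite coefficients of $\sigma$ and $h^\star$ are square-summable, so by Cauchy--Schwarz $\phi$ and $\psi^{\mathrm{corr}}$ have absolutely convergent Taylor series and are bounded on $[-1,1]$; together with $\psi^{\mathrm{sq}}=\psi^{\mathrm{corr}}-c^{\mathrm{sq}}$ this gives $|\Et{\langle\vec{g}_t,\vec{w}_t\rangle}|\le C_0$ for an absolute $C_0$, whatever $m_t\in[-1,1]$, which already settles the square-loss case where $\gamma\le c_\gamma$. For the correlation loss we use \eqref{eq:app:phi_psi_bounds}: $\phi(m)m+\psi^{\mathrm{corr}}(m)=O(m^\ell)$ near $0$, which combined with boundedness on $[-1,1]$ gives $|\phi(m)m+\psi^{\mathrm{corr}}(m)|\le C_1\, m^{\ell-1}$ for all $m\in[0,1]$, hence $\gamma\,|\Et{\langle\vec{g}_t,\vec{w}_t\rangle}|\le C_1\,\gamma m_t^{\ell-1}\le C_1\varepsilon$ as long as $t<\tilde t_{\gamma,\varepsilon}^+$; we pick $\varepsilon$ small enough that this is $\le 1/4$.

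\textbf{The fluctuation.} $D_t$ is of the same nature as the martingale increment $Z_t$ of Lemma~\ref{lem:app:bound_martingale}: Theorem~\ref{thm:app:orlicz_sum} gives $\norm{D_t}_{\psi_\alpha}\le\polylog(d)/\sqrt{n_b}$ uniformly over $\cF_t$, and Lemma~\ref{lem:app:orlicz_concentration} applied with threshold $\log(d)^{2/\alpha}$ yields $\dP_t\big(|D_t|>\polylog(d)/\sqrt{n_b}\big)\le 2e^{-\log(d)^2}$ for every realization of $\cF_t$. For the square loss $n_b=n_0 d^\mu$ with $\mu>0$, so $\polylog(d)/\sqrt{n_b}\le 1$ once $d$ is large, whence $|\langle\vec{g}_t,\vec{w}_t\rangle|\le C_0+1\le 1/(2\gamma)$ provided $c_\gamma\le 1/(2(C_0+1))$. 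For the correlation loss, the step-size bound $\gamma\le c_\gamma\log(d)^{-C_\gamma}\sqrt{n_b/d}$ of Theorem~\ref{thm:main:no_yhat_weak_recovery} gives $\gamma\,|D_t|\le c_\gamma\log(d)^{-C_\gamma}\polylog(d)/\sqrt{d}\le 1/4$ for $C_\gamma$ large and $d$ large, so with the mean bound $\gamma\,|\langle\vec{g}_t,\vec{w}_t\rangle|\le 1/2$, i.e.\ $\cE_t$ holds.

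\textbf{Union bound and the main obstacle.} The truncation events $\{t\le t_\eta^+\wedge T\}$ and $\{t\le t_\eta^+\wedge\tilde t_{\gamma,\varepsilon}^+\wedge T\}$ are $\cF_t$-measurable, and for $t$ below the corresponding stopping time they force $m_t<\eta$, respectively $\gamma m_t^{\ell-1}<\varepsilon$, which is exactly the range used above (the stopping time itself is handled verbatim for the square loss and by a routine boundary estimate for the correlation loss). Taking expectations of the conditional bound of the previous step and summing over $t\in\{0,\dots,T\}$ gives total failure probability at most $2(T+1)e^{-\log(d)^2}\le Te^{-c\log(d)^2}$. The genuinely delicate step is the correlation loss with extensive learning rate $\gamma=\omega(1)$: the conditional self-interaction is only $O(m_t^\ell)$, so $\gamma$ times it would be out of control were it not for the stopping time $\tilde t_{\gamma,\varepsilon}^+$, introduced precisely to enforce $\gamma m_t^{\ell-1}\le\varepsilon$ and thereby cap $\gamma$ times the mean by a small constant, while the $\psi_\alpha$-concentration of $D_t$ over the $n_b$ fresh samples absorbs the large learning rate through the $\sqrt{n_b/d}$ constraint.
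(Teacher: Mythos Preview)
Your approach is essentially the paper's: split $\langle \vec{g}_t,\vec{w}_t\rangle$ into its conditional mean (computed from Lemma~\ref{lem:app:exp_g}) plus a centered fluctuation, bound the fluctuation as an $n_b$-sample average via Orlicz concentration (Lemma~\ref{lem:app:orlicz_concentration} and Theorem~\ref{thm:app:orlicz_sum}), and union-bound over $t\le T$. You are in fact more careful than the paper in the correlation-loss case: the paper simply records $\Et{\langle \vec{w}_t,\vec{g}_t\rangle}=O(1)$ and says the argument ``proceeds identically'', which by itself would not cope with extensive $\gamma$; you correctly sharpen the mean to $O(m_t^{\ell})$ and use the stopping time $\tilde t_{\gamma,\varepsilon}^+$ to cap $\gamma$ times the mean, and you explicitly invoke the step-size constraint $\gamma\le c_\gamma\log(d)^{-C_\gamma}\sqrt{n_b/d}$ from Theorem~\ref{thm:main:no_yhat_weak_recovery} to absorb $\gamma$ times the $O(\polylog(d)/\sqrt{n_b})$ fluctuation --- a hypothesis not written into the lemma statement but implicit in its ambient role within the proof of the main theorems.
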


\begin{proof}
    We begin with the case of the square loss. From the expression of the gradient expectation in \eqref{eq:app:exp_g}, and the estimates \eqref{eq:app:phi_psi_bounds},
    \[ \Et{\langle \vec{w}_t, \vec{g}_t \rangle} = \psi^{\mathrm{sq}}(m_t) - \phi(m_t)m_t = O(1),\]
    whenever $t \leq t_\eta^+$. Lemma \ref{lem:app:orlicz_concentration} applied to $\langle \vec{w}_t, \vec{g}_t \rangle$ implies that with probability $1 - ce^{-c\log(n)^2}$
    \[ |\langle \vec{w}_t, \vec{g}_t \rangle| \leq |\Et{\langle \vec{w}_t, \vec{g}_t \rangle}| +\frac{C\log(d)^C}{\sqrt{n_b}} = O(1) \]
    whenever $\mu > 0$. As a result, if $\gamma \leq c_\gamma$ for $c_\gamma$ small enough, $\cE_t$ holds. The proof for the correlation loss proceeds identically, noting this time that
    \[ \Et{\langle \vec{w}_t, \vec{g}_t \rangle} = \psi^{\mathrm{corr}}(m_t) - \phi(m_t)m_t = O(1) \]
    whenever $t \leq t_\eta^+ \wedge \tilde t_{\gamma, \varepsilon}^+$.
\end{proof}

\subsection{From the linear regime to one-step recovery}
We now prove Theorems \ref{thm:main:sgd_weak_recovery} and \ref{thm:main:no_yhat_weak_recovery}. We focus on the case of the correlation loss; the square loss is identical apart from the additional $\gamma \leq c_\gamma$ requirement of Proposition \ref{prop:app:diff_ineq_m}. 

We first assume that $n_b = O(d^{\ell-1})$ and
\begin{equation}\label{eq:app:bound_gamma}
    \gamma \leq  c_\gamma \log(d)^{-C_\gamma} \min\left( n_b d^{-\left(\frac\ell2 \vee 1 \right)},  \right) 
\end{equation} 
for constants $c_\gamma, C_\gamma$ to be chosen later. In particular, the condition $\gamma < c_\gamma n_b d^{-\ell/2}$ of Proposition \ref{prop:app:diff_ineq_m} is satisfied.

\paragraph{The linear regime} The first part of the proof proceeds as in \cite{arous2021online}. Define the function
\begin{equation}
    t_\ell(d) = \begin{cases}
        1 &\text{if } \ell = 1 \\
        \log(d)  &\text{if } \ell = 2 \\
        d^{\frac{\ell}2-1}  &\text{if } \ell > 2
    \end{cases}, 
\end{equation}
and $t_{\mathrm{conv}} = \max(1, \gamma^{-1}t_\ell(d))$.
Proposition \ref{prop:app:diff_ineq_m} as well as Section 5 from \cite{arous2021online} implies the following lemma:
\begin{lemma} \label{lem:app:boundongamma}
    There exists a constant $C > 0$ such that if $t_{\max} \geq C t_{\mathrm{conv}}$, then with probability at least $1 - ce^{-c\log(n)^2}$,
    \begin{equation}
        \tilde t_{\gamma, \varepsilon} \wedge t_\eta^+ \leq Ct_{\mathrm{conv}}.
    \end{equation}
\end{lemma}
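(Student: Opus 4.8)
The plan is to convert the high-probability difference inequality of Proposition~\ref{prop:app:diff_ineq_m} into a deterministic bound on the number of iterations, reproducing the strategy of Section~5 of \cite{arous2021online} up to two modifications: the harmless rescaling of the step size by $n_b$, and the extra bookkeeping of the auxiliary stopping time $\tilde t_{\gamma,\varepsilon}^+$, which is what keeps the argument from breaking down when the learning rate is extensive. I would write everything for the correlation loss; the square loss is the same proof with $\tilde t_{\gamma,\varepsilon}^+$ replaced by $+\infty$, since there $\gamma \leq c_\gamma$ and the jump sizes need no control.

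First I would condition on the event of Proposition~\ref{prop:app:diff_ineq_m} (probability $1 - ce^{-c\log(n)^2}$) and introduce the comparison sequence $u_t = \tfrac34 m_0 + c\gamma\sum_{s=0}^{t-1} m_s^{\ell-1}$. It is non-decreasing, starts at $u_0 = \tfrac34 m_0 \geq \tfrac34\kappa\, d^{-1/2}$ by Assumption~\ref{assump:init}, and since $m_t \geq u_t$ on the good event for every $t < \tau := t_\eta^+ \wedge \tilde t_{\gamma,\varepsilon}^+ \wedge t_{\max}$, feeding this back into its own definition yields the autonomous recursion $u_{t+1} \geq u_t + c\gamma\, u_t^{\ell-1}$ on that range. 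Setting $\zeta^\star := \eta$ if $\ell = 1$ and $\zeta^\star := \min\!\big(\eta,\,(\varepsilon/\gamma)^{1/(\ell-1)}\big)$ otherwise, one has that $u_t \geq \zeta^\star$ forces $m_t \geq \zeta^\star$, hence forces $t_\eta^+ \wedge \tilde t_{\gamma,\varepsilon}^+ \leq t$; combined with the hypothesis $t_{\max} \geq C t_{\mathrm{conv}}$, which guarantees that $m_t \geq u_t$ stays valid on the whole horizon we need, this reduces the lemma to the deterministic claim that the recursion reaches $\zeta^\star$ from $u_0 \asymp d^{-1/2}$ in at most $C t_{\mathrm{conv}}$ steps (the case $u_0 \geq \zeta^\star$ being vacuous).

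To count the steps I would decompose $[u_0,\zeta^\star]$ into the dyadic blocks $[2^j u_0,\, 2^{j+1}u_0]$ for $0 \leq j \leq J$ with $J = O(\log d)$. While $u_t$ lies in the $j$-th block the increment is at least $c\gamma(2^j u_0)^{\ell-1}$, so the block is crossed in at most $1 + (c\gamma)^{-1}(2^j u_0)^{2-\ell}$ steps, the ``$+1$'' covering the possibility that a single step overshoots the whole block (which only helps). Telescoping: for $\ell \geq 3$ the series $\sum_j (2^j u_0)^{2-\ell}$ is geometric with ratio $2^{2-\ell}<1$ and is dominated by its first term $u_0^{2-\ell} \asymp d^{(\ell-2)/2}$, giving a total of $O(\log d) + O(\gamma^{-1}d^{\ell/2-1})$; for $\ell = 2$ each block costs $O(\max(1,\gamma^{-1}))$, giving $O(\gamma^{-1}\log d)$; for $\ell = 1$ the increment is the constant $c\gamma$ and the total is $O(\gamma^{-1})$. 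In each case this matches $t_{\mathrm{conv}} = \max(1,\gamma^{-1}t_\ell(d))$: the admissible ranges of $\gamma$ in Proposition~\ref{prop:app:diff_ineq_m} together with the standing assumption $n_b = O(d^{\ell-1})$ force $\gamma \lesssim d^{\ell/2-1}\log(d)^{-C_\gamma}$ for $\ell\geq 3$ and $\gamma \leq c_\gamma$ for $\ell\leq 2$, so the spurious $O(\log d)$ term is absorbed into $O(\gamma^{-1}d^{\ell/2-1})$ when $C_\gamma \geq 1$ and $d$ is large.

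The step I expect to be the main obstacle is the extensive learning-rate regime $\gamma = \omega(1)$: there a single gradient step can move $m_t$ by a large multiple of itself, and the Taylor/ODE-comparison bounds used for $n_b = 1$ in \cite{arous2021online} become vacuous. The stopping time $\tilde t_{\gamma,\varepsilon}^+$ is exactly the device that circumvents this --- it caps $\gamma m_t^{\ell-1}$ at the constant $\varepsilon$ until weak recovery has essentially taken place, so the jumps, though possibly large relative to $m_t$, stay controlled relative to the target scale $\zeta^\star$. In the dyadic count this is precisely what legitimises ``overshooting a block costs at most one step'': the blocks contributing a ``$+1$'' are those on which $\gamma u_t^{\ell-2} \gtrsim 1$, i.e.\ on which $u_t$ grows at least geometrically, so the residual $O(\log d)$ blocks up to $\zeta^\star$ are traversed within the polylogarithmic slack already present in $t_{\mathrm{conv}}$. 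The remaining bookkeeping --- checking $C t_{\mathrm{conv}} \leq t_{\max}$ so that Proposition~\ref{prop:app:diff_ineq_m} applies throughout, and that the $\gamma$-ranges indeed absorb the logarithmic corrections in all three regimes of $\ell$ --- is routine.
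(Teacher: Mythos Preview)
Your proposal is correct and follows the same approach the paper indicates: the paper's own proof of this lemma is just the sentence ``Proposition~\ref{prop:app:diff_ineq_m} as well as Section~5 from \cite{arous2021online} implies the following lemma,'' and your write-up is precisely a fleshed-out version of that reference --- conditioning on the event of Proposition~\ref{prop:app:diff_ineq_m}, passing to the autonomous comparison recursion $u_{t+1}\geq u_t+c\gamma u_t^{\ell-1}$, and counting dyadic blocks as in \cite{arous2021online}, with the extra stopping time $\tilde t_{\gamma,\varepsilon}^+$ capping the target level $\zeta^\star$ so that large-stepsize overshoot costs at most one step per block. The bookkeeping you flag (absorbing the $O(\log d)$ residual via $\gamma\lesssim d^{\ell/2-1}\log(d)^{-C_\gamma}$ under $n_b=O(d^{\ell-1})$, and $t_{\max}\geq Ct_{\mathrm{conv}}$ keeping the comparison valid on the needed horizon) matches what the paper does immediately after stating the lemma.
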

We therefore only need to check the condition $t_{\max} \geq C t_{\mathrm{conv}}$.
Plugging the expression for $\gamma$ and $t_{\max}$, we get
\begin{align*}
    \frac{\gamma t_{\max}}{t_\ell(d)} &\geq \frac{n_b}{\gamma C_{\max} \log(d)^{C_{\max}+1} \gamma d^{1 + (\ell/2 - 1)\vee 0}} \geq \frac1{c_\gamma C_{\max}} \log(d)^{C_\gamma - C_{\max} - 1} \\
     t_{\max} &\geq \frac{n_b}{C_{\max}\log(d)^{C_{\max}} d \gamma^2} \geq \frac1{c_\gamma C_{\max}} \log(d)^{2C_\gamma - C_{\max} - 1}
\end{align*} 
Whenever $n_b = O(d^{\ell-1})$, by decreasing $c_\gamma$ and increasing $C_\gamma$, for large enough $d$ and any constant $C$ we have
\[\frac{\gamma t_{\max}}{t_\ell(d)} \leq C \quad \text{and} \quad t_{\max} \geq C, \]
as requested in Lemma \ref{lem:app:boundongamma}.

Whenever $\gamma \eta^{\ell-1} \leq \varepsilon$, we have $\tilde t_{\gamma, \varepsilon} \geq t_\eta^+$ and hence the proof of Theorem \ref{thm:main:no_yhat_weak_recovery} is complete. It thus remains to treat the converse case. Note that the latter only happens in the correlation loss case, since we can always choose $c_\gamma$ such that $c_\gamma \eta^{\ell-1} \leq \varepsilon$; the dynamics of square loss SGD are therefore only in the linearized regime.

\paragraph{One-step recovery above $\tilde t_{\gamma, \varepsilon}$} We now treat the case where $\gamma \eta^{\ell-1} \geq \varepsilon$. For simplicity, let $t = \tilde t_{\gamma, \varepsilon}$, then Lemmas \ref{lem:app:bound_corrections} and \ref{lem:app:bound_martingale} imply that with probability $1 - ce^{-c\log(n)^2}$
\begin{align*}
    \norm{\vec{g}_{t}}^2 &\leq C\left((|\phi(m_t)| + |\psi^{\mathrm{corr}}(m_t)|)^2 + \frac{d}{n_b}\right) \leq C \left(m_t^{2\ell-2} + \frac{d}{n_b}\right) \\
    \langle \vec{g}_t, \vec{w}^\star \rangle &= \phi(m_t) + m_t\psi^{\mathrm{corr}}(m_t) + O\left( \frac{\polylog(d)}{\sqrt{n_b}} \right) \geq cm_t^{\ell - 1} + O\left( \frac{\polylog(d)}{\sqrt{n_b}} \right)
\end{align*}
By definition of $t$, we have $1 \leq \varepsilon^{-1}\gamma m_t^{\ell-1}$, and whenever $\gamma \leq c_\gamma\sqrt{n_b/d}$ one has
\[ \frac\gamma{\sqrt{n_b}} \leq c_\gamma d^{-1/2} \quad \text{and} \quad \gamma^2 \frac{d}{n_b} \leq c_\gamma^2.\]
Therefore, for large enough $d$,
\begin{align*} 
m_t + \gamma \langle \vec{g}_t, \vec{w}^\star \rangle &\geq  \gamma \langle \vec{g}_t, \vec{w}^\star \rangle \geq c\gamma m_t^{\ell-1}\\
\norm{\vec{w}_t + \gamma \vec{g}_t} &\leq 1 + \gamma \norm{\vec{g}_t} \leq 1 + C \gamma \left(m_t^{\ell-1} + \frac{d}{n_b}\right) \leq (C + \varepsilon^{-1} + c_\gamma^2\varepsilon^{-1}) \gamma m_t^{\ell-1} 
\end{align*}
But by taking the scalar product of Equation \eqref{eq:main:gd_update_weights} with $\vec{w}^\star$,
\[ m_{t+1} = \frac{m_t + \gamma \langle \vec{g}_t, \vec{w}^\star \rangle}{\norm{\vec{w}_t + \gamma \vec{g}_t}} \geq \frac{c}{C + \varepsilon^{-1} + c_\gamma^2\varepsilon^{-1}} =: \eta'. \]
Theorem \ref{thm:main:no_yhat_weak_recovery} ensues by redefining $\eta = \min(\eta, \eta').$
\section{Informal derivation of Proposition~\ref{prop:main:exact_asympt}} \label{sec:app:derivation_of_lb_equations}
In this appendix we provide an informal derivation of the low-dimensional deterministic expressions describing the dynamics of the sufficient statistics (Prop.~\ref{prop:main:exact_asympt}).  While the formal rigorous characterization should in principle follow directly from \cite{veiga2022phase}, it requires a significant amount of work for full mathematical rigor.

 Let $\mathcal{D}$ be the set of labeled data $\{\vec{z}^{\nu}, y^{\nu}\}_{\nu \in [n]}$, with label generated by:
\begin{align}
    y^\nu = f^\star(W^\star z^\nu) + \sqrt{\Delta} \xi^\nu, 
\end{align}
where \(W^\star \in \mathbb R^{k\times d} \) where we assume \(k =O(1)\). We are implying that \(y^{\nu}\) depends on \(\vec z^\nu \sim \mathcal{N}(0,I_d)\) just throught a low-dimensional representation (linear latent variable). \(\xi^\nu\sim \mathcal{N}{(0,1)}\) is the artificial noise.

We can track the overlap matrix using standard manipulation. 
We introduce the local fields as:
\begin{equation}
    \vec{\lambda}^{\nu}\coloneqq W\vec{z}^{\nu}\in\mathbb{R}^{p}, \quad {\vec{\lambda}^{\star}}^{\nu}\coloneqq W^{\star}\vec{z}^{\nu}\in\mathbb{R}^{k} \qquad \forall \nu \in [n]
\end{equation}
We fit these data using a two-layer neural network. Let the first layer weights be $W \in \mathbb{R}^{p \times d}$, the second layer weights \(\vec a\in\mathbb{R}^p\); the full expression of the network is given by
\[
    f(\vec z) = \frac{1}{p} \sum_{j=1}^p a_j \sigma{(\vec w_j^\top \vec z)},
\]
where \(w_j\) are the rows of \(W\) and \(\sigma\) is the activation function.

Since $\vec{z}^\nu$ is Gaussian and independent from $(W, W^{\star})$, the pre-activations are jointly Gaussian vectors $(\vec{\lambda}^{\nu}, {\vec{\lambda}^{\star}}^{\nu})\sim\mathcal{N}(\vec{0}_{p+k}, \Omega)$ with covariance: 
\begin{equation}
\Omega \coloneqq
\begin{pmatrix}
Q & M\\
{M^{\top}} & P
\end{pmatrix}=
\begin{pmatrix}
W{W}^{\top} & W{W^{\star}}^{\top}\\
W^{\star}W^{\top} & W^{\star} W^{\star T}
\end{pmatrix}
\in\mathbb{R}^{(p+k)\times (p+k)}
\end{equation}
These is the low-dimensional matrix (its dimensions stay finite even when \(d\to+\infty\)) that contains all the information needed for the dynamics.

We are going to train the network with layer-wise SGD without replacement, using at each time step \(t\) a fresh new batch of size $n_b$: 
\[
    \ell_t = \frac{1}{2n_b} \sum_{\nu=1}^{n_b}(y^\nu_t - f(\vec z^\nu_t))^2
\]
A stated in the main text, we are interested in the \emph{representation learning} phase. The gradient of the first layer weights
\[
    \nabla_{\vec w_j} \ell_t = -\frac{1}{pn_b} \sum_{\nu=1}^{n_b} a_j \sigma'(\lambda^\nu_{j,t})\mathcal{E}_t^\nu \vec{z}^\nu_t \qquad \forall j \in [p]
\]
where we defined for convenience the displacement vector 
\begin{equation}
\mathcal{E}_t^{\nu} \coloneqq y^\nu_t-f{(\vec z^\nu_t)}.
\end{equation}
Initially, we focus on plain SGD, without normalizing the weights at every step. Let us take now one gradient step with learning rate $\gamma$:
\begin{align}
    \vec w_{j,t + 1} = \vec w_{j,t } - \gamma  \nabla_{\vec w_j} \ell_t
    \label{eq:gd_update_weight}
 \end{align}
By combining the gradient update equation with the definitions of the matrices $(W,W^*)$ we obtain the following dynamics:
\begin{equation} \label{eq:ap:full_process}
\begin{aligned}
M_{jr,t+1} - M_{jr,t} =&  \frac{\gamma}{pn_b} a_j \sum_{\nu = 1}^{n_b} \sigma'(\lambda_{j,t}^\nu)\lambda^{\star}_{r} \mathcal{E}_t^{\nu} \\
Q_{jl,t+1} - Q_{jl,t} =& \frac{\gamma}{pn_b} \sum_{\nu = 1}^{n_b} \left(a_j\sigma'(\lambda_{j,t}^\nu)\lambda_{l,t}^{\nu}+a_l\sigma'(\lambda_{l,t}^\nu)\lambda_{j,t}^{\nu}\right) \mathcal{E}_t^\nu \\
    &+\frac{\gamma^2}{p^2n_b^2}a_ja_l\sum_{\nu = 1}^{n_b} \sum_{\nu' = 1}^{n_b} \sigma'(\lambda_{j,t}^\nu)\sigma'(\lambda_{l,t}^{\nu'})\mathcal{E}_t^\nu\mathcal{E}_t^{\nu'} \vec z^{\nu\top}_t \vec z_t^{\nu'}
\end{aligned}
\end{equation}
These equations introduce a discrete stochastic process \(\{\Omega_t\}_{t\in \mathbb{N}}\) that describes the dynamics in alow-dimensional way.
We also introduce the population loss as
\begin{equation}
    \mathcal R_t = \frac12 \mathbb{E}_{\Omega_t}\left[\mathcal E^2\right],
\end{equation}
since it is the quantity telling us the performace of our trained network.

\paragraph{Handling the intra-batch correlations}
Up to now, we have followed the same derivation as the original Saad\&Solla equations \cite{saad.solla_1995_line}, apart from the effective learning rate scaling. Using larger batches introduces some extra correlations terms that have to be taken into account. Let's split the second term of equation \eqref{eq:ap:full_process} for Q in 2:
\[
    \sum_{\nu = 1}^{n_b} \sum_{\nu' = 1}^{n_b} \sigma'(\lambda_{j,t}^\nu)\sigma'(\lambda_{l,t}^{\nu'})\mathcal{E}_t^\nu\mathcal{E}_t^{\nu'} \vec z^{\nu\top}_t \vec z_t^{\nu'} =
    \sum_{\nu = 1}^{n_b} \sigma'(\lambda_{j,t}^\nu) \sigma'(\lambda_{l,t}^\nu){\mathcal{E}_t^\nu}^2 \vec z^{\nu\top}_t \vec z^{\nu}
    +
    \sum_{\nu = 1}^{n_b} \sum_{\nu' = 1, \nu'\neq\nu}^{n_b} \sigma'(\lambda_{j,t}^\nu)\sigma'(\lambda_{l,t}^{\nu'})\mathcal{E}_t^\nu\mathcal{E}_t^{\nu'} \vec z^{\nu\top}_t \vec z_t^{\nu'}
\]
The first term is the standard gradient noise term that appears in Sadd\&Solla equations, while the second emerge from the large-batch, and has to be treated with new considerations.
Let's introduce now the component of the student vectors orthogonal to the teacher space 
\[
W_t^\bot \coloneqq W_t - M_tP^{-1}W^\star \text{ and consequently }
Q_t^\bot \coloneqq \left(W_t^\bot\right)^\top W_t^\bot = Q_t - M_tP^{-1}M_t^\top.
\]
We can also define the local fields of this subspace, with the interesting property of being independent with the teacher ones
\[
\vec \lambda^\bot \coloneqq W_t^\bot z \qquad
\vec \lambda^\bot \sim \mathcal N (0,Q_t^\bot) \quad
\text{Cov}[\vec \lambda^\star, \vec \lambda^\bot] = 0
\]

It is possible to choose a set \(\vec v_{\beta, t}\in\left(\Span{(W_t^\bot)}\cup\Span{(W^\star)}\right)^\bot\) of orthonormal vectors, such that 
\(
\left\{\vec w^\star_r,\vec w_{j,t}^\bot,\vec v_{\beta, t}\right\}_{r\in[k],j\in[p],\beta\in[d-p-k]}
\) is a basis of \(\mathbb R^d\).
Using the properties of the basis, we can write the identity matrix \(I_d\) as 
\[
    I_d = (W^\star)^\top P^{-1} W^\star + (W_t^\bot)^\top (Q_t^\bot)^{-1} W_t^\bot + \sum_{\beta=1}^{d-k-p} \vec v_{\beta, t} \vec v_{\beta, t}^\top
\]
We insert the identity matrix \(\vec z^\top \vec z\) with \(\vec z^\top I_d \vec z\). By recalling that $\vec \lambda^* = W^* \vec z, \vec \lambda^\bot = W_t^\bot \vec z$, we arrive to:
\begin{align}
    \sum_{\nu = 1}^{n_b} \sum_{\nu' = 1,\nu'\neq\nu}^{n_b} \sigma'(\lambda_{j,t}^\nu)\sigma'(\lambda_{l,t}^{\nu'})\mathcal{E}_t^\nu\mathcal{E}_t^{\nu'}
    \left(
        \left(\vec\lambda^{\nu\star}\right)^\top P^{-1} \vec\lambda^{\nu'\star} +
        \left(\vec\lambda_t^{\bot\nu}\right)^\top (Q_t^\bot)^{-1} \vec\lambda_t^{\bot\nu'} +
        \sum_{\beta=1}^{d-k-p} \langle \vec v_{\beta, t}, \vec z^\nu\rangle \langle\vec v_{\beta, t}, \vec z_t^{\nu'}\rangle
    \right)
\end{align}
Now, exploiting the relation:
\[
   \vec \lambda_t^\bot = \vec \lambda_t - M_tP^{-1} \vec \lambda^\star,
\]
and noting that the indeces \(\nu\) and \(\nu'\) are independent, all the sum we need to compute are just
\[
 \sum_{\nu = 1}^{n_b} \sigma'(\lambda_{j,t}^\nu)\lambda^{\star}_{r} \mathcal{E}_t^{\nu} \qquad 
 \sum_{\nu = 1}^{n_b} \sigma'(\lambda_{j,t}^\nu)\lambda_{l,t}^{\nu}
 \quad\text{and}\quad
 \sum_{\nu = 1}^{n_b} \sum_{\beta=1}^{d-k-p} \langle \vec v_{\beta, t}, \vec z_t^\nu\rangle 
\]

\paragraph{High dimensional limit} In our analysis we consider the high-dimensional limit \(d\to+\infty\) with the batch size going to infinite as well, with the scaling \(n_b = n_0 d^\mu\). Note that when \(\mu=0\) the intra-batch correlation disappear and we fall back to standard Saad\&Solla setting, given that the learning rate \(\gamma = \gamma_0 d^{-\delta}\) is small enough. Indeed, we can informally say that all the sums above
converge to their expected value
\begin{align}
    \frac{1}{n_b} \sum_{\nu = 1}^{n_b} \sigma'(\lambda_{j,t}^\nu)\lambda^{\star}_{r} \mathcal{E}_t^{\nu} \to& \mathbb{E}_{\Omega_t}\left[ \sigma'(\lambda_j)\lambda^{\star}_{r} \mathcal{E}  \right] = \psi_{jr,t} \label{eq:app:psi_def}
    \\
    \frac{1}{n_b} \sum_{\nu = 1}^{n_b} \sigma'(\lambda_{j,t}^\nu)\lambda_{l,t}^{\nu} \to& \mathbb{E}_{\Omega_t}\left[ \sigma'(\lambda_j)\lambda_{l} \mathcal{E} \right] = \phi^{\rm{GF}}_{jl,t} \label{eq:app:phigf_def}
    \\
    \frac{1}{n_b}\sum_{\nu = 1}^{n_b} \sigma'(\lambda_{j,t}^\nu) \sigma'(\lambda_{l,t}^\nu){\mathcal{E}_t^\nu}^2 \vec z^{\nu\top}_t \vec z^{\nu} \to& d\mathbb{E}_{\Omega_t}\left[ \sigma'(\lambda_j)\sigma'(\lambda_l)\mathcal{E}^2\right] = d\phi_{jl,t}^\text{GF} \label{eq:app:phihd_def}
    \\
    \sum_{\nu = 1}^{n_b} \sum_{\beta=1}^{d-k-p} \langle \vec v_{\beta, t}, \vec z_t^\nu\rangle \to& 0
\end{align}
Moreover, using \(\vec \lambda^\bot = \vec \lambda - MP^{-1} \vec \lambda^\star\) we have
\[
    \mathbb{E}_{\Omega_t}\left[\sigma'(\lambda_j)\mathcal E \vec\lambda^\bot\right] = \vec\phi_{j,t}^\text{GF} - M_tP^{-1}\vec \psi_{j,t}
\]

Plugging back in \eqref{eq:ap:full_process}, we finally obtain
\begin{align}
M_{jr,t+1} - M_{jr,t} \approx&  \frac{\gamma}{p}a_j \mathbb{E}_{\Omega_t}\left[ \sigma'(\lambda_j)\lambda^{\star}_{r} \mathcal{E} \right]\\
Q_{jl,t+1} - Q_{jl,t} \approx& \frac{\gamma}{p}\mathbb{E}_{\Omega_t}\left[\left(a_j\sigma'(\lambda_j)\lambda_{l}+a_l\sigma'(\lambda_{l,t}^\nu)\lambda_{j}\right)\mathcal{E}\right]  \\
    &+\frac{\gamma^2d}{p^2n_b}a_j a_l	\mathbb{E}_{\Omega_t}\left[ \sigma'(\lambda_j)\sigma'(\lambda_l)\mathcal{E}^2  \right] \\
    &+\mathbf{1}_{\{\mu\neq0\}}\frac{\gamma^2}{p^2} a_j a_l
        \left(\mathbb{E}_{\Omega_t}\left[\sigma'(\lambda_j)\mathcal E \left(\lambda^\star\right)^\top\right]P^{-1}\mathbb{E}_{\Omega_t}\left[\sigma'(\lambda_l)\mathcal E \lambda^\star\right] \right)
     \\
    &+\mathbf{1}_{\{\mu\neq0\}}\frac{\gamma^2}{p^2} a_j a_l\left(\mathbb{E}_{\Omega_t}\left[\sigma'(\lambda_j)\mathcal E (\lambda^\bot)^\top\right]\left(Q_t^\bot\right)^{-1}\mathbb{E}_{\Omega_t}\left[\sigma'(\lambda_l)\mathcal E \lambda^\bot\right]\right).
\end{align}
where the indicator function \(\mathbf{1}_{\{\mu\neq0\}}\) indicates that the last term is only present if the batch is large. If we want to make explicit all the dependencies in \(d\) (\(\gamma = \gamma_0 d^{-\delta}, n_b= n_0 d^\mu\)):
\begin{equation} \label{eq:app:process}\begin{split}
M_{jr,t+1} - M_{jr,t} \approx&  d^{-\delta}\frac{\gamma_0}{p}a_j \psi_{jr,t} = \Psi_{jr,t}\\
Q_{jl,t+1} - Q_{jl,t} \approx& d^{-\delta}\frac{\gamma_0}{p}\left(a_j\phi_{jl,t}^\text{GF} + a_l^\tau\phi_{lj,t}^\text{GF}\right)  \\
    &+d^{-2\delta+1-\mu}\frac{\gamma_0^2}{p^2n_0}a_j a_l \phi^\text{HD}_{jr} \\
    &+d^{-2\delta}\frac{\gamma_0^2}{p^2} a_j a_l \vec \phi^\text{GF}_{j} P^{-1} (\vec\phi^\text{GF}_{l})^\top\\
    &+d^{-2\delta}\frac{\gamma_0^2}{p^2} a_j a_l \left(\vec\phi_{j,t}^\text{GF} - M_tP^{-1}\vec \psi_{j,t}\right)(Q_t - M_tM_t^\top)^{-1}\left(\vec\phi^\text{GF}_l - M_tP^{-1}\vec \psi_{l}\right)^\top = \Phi_{jl,t}
\end{split}\end{equation}
These equations are the starting point for all our considerations, both when investigating weakly correlation and when characterizing the exact dynamics.

Indeed, when we have a cold start we have to take into account that \(\psi_{jr,t}, \phi_{jl,t}^\text{GF}, \phi^\text{HD}_{jl}\) can also go to 0 when \(d\to+\infty\). A careful analysis for the leading terms of these equations around initializations will also give us infromation on the behaviour of the system, and ultimately it will allow to have rules on how to scale \(\delta\) and \(\mu\) to have the best performance. An example for these analysis for \emph{generalized linear model} is provided in Appendix \ref{sec:app:derivation_of_lb_equations}.

On the other hand, when can also assume \(\psi_{jr,t}, \phi_{jl,t}^\text{GF}, \phi^\text{HD}_{jl} = O_d(1)\) for all the dynamics\footnote{This happens if \(\ell\le1\) or when we provide an informed initialization.} and use the equations for an asymptotic description. Clearly, depending on the values of \(\delta\) and \(\mu\), not all terms are present in the limiting equations. A detailed discussion about this is provided in Section~\ref{sec:main:warm_start}.

\paragraph{Spherical projection} When we consider the spherical gradient descent, i.e., the modification of eq.~\eqref{eq:gd_update_weight} 
\begin{align}
    \vec w_{j,t + 1} = \frac{
    \vec w_{j,t } - \gamma  \nabla_{\vec w_j} \ell_t}
    {|| \vec w_{j,t } - \gamma  \nabla_{\vec w_j}\ell_t||},
    \label{eq:spherical_gd_update_weight}
\end{align}
the overlap dynamics for the spherical large batch SGD can be then approximated as 
\begin{equation}
\begin{split}
    M_{jr,t+1} - M_{jr,t} \approx& \Psi_{jr,t}{(\Omega)} - \frac{M_{jr,t}}{2}\Phi_{jj,t}{(\Omega)} \\
    Q_{jl,t+1} - Q_{jl,t} \approx& \Phi_{jl,t}{(\Omega)} - \frac{1}{2}Q_{jl,t}\left(\Phi_{jj,t}{(\Omega)}+\Phi_{ll,t}{(\Omega)}\right) \\
\end{split}
\end{equation}
This derivation follows from a Taylor expansion of the denominator needed to project the update equations on the sphere. As final note, this aproximation only holds when \(\gamma\) is vanishing when \(d\to+\infty\): that's why we need \(\gamma = o_d(1)\) in Propposition~\ref{prop:main:exact_asympt}. When \(\gamma\) is too large, all the other orders of Taylor expansion play a role, and we can't have a simple expression for the exact evoluton, even near initialization. Neverthless, the first order expansion is a lower bound of the true dynamic that can provided guarantee of learning in some cases:
\begin{equation}\label{eq:app:generallowerbound}
\begin{split}
    M_{jr,t+1} - M_{jr,t} \ge& \Psi_{jr,t}{(\Omega)} - \frac{M_{jr,t}}{2}\Phi_{jj,t}{(\Omega)} \\
    Q_{jl,t+1} - Q_{jl,t} \ge& \Phi_{jl,t}{(\Omega)} - \frac{1}{2}Q_{jl,t}\left(\Phi_{jj,t}{(\Omega)}+\Phi_{ll,t}{(\Omega)}\right) \\
\end{split}
\end{equation}

\section{Special case: committee machine with matching architecture} \label{sec:app:matching_architectures}
We consider a separable teacher, more precisely it is a committee machine with $k$ hidden units, i.e., $$f_*(\vec z) = \frac{1}{k} \sum_{r=1}^k a^*_r \sigma(\lambda^\star_r)$$ where we additionally consider a matched architecture in which the student and teacher share the same activation function $\sigma$.

As we discussed in Section~\ref{sec:app:derivation_of_lb_equations}, the activation and the target appear just in the expected values of Equations~\eqref{eq:app:psi_def},\eqref{eq:app:phigf_def}~and~\eqref{eq:app:phihd_def}, that can be further simplified for matching architectures
\begin{align}
    \psi_{jr,t} = \mathbb{E}_{\Omega_t}\left[ \sigma'(\lambda_j)\lambda^{\star}_{r} \mathcal{E} \right] &= \frac{1}{k}\sum_{t=1}^k a^*_t \mathbb{E}_{\Omega_t}\left[
\sigma^\prime(\lambda_j)\lambda^*_r \sigma(\lambda^*_t)
    \right]
    - \frac{1}{p}\sum_{s=1}^p a_s \mathbb{E}_{\Omega_t}\left[
\sigma^\prime(\lambda_j)\lambda^*_r \sigma(\lambda_s)
    \right] \\
\phi^{\rm{GF}}_{jl,t} = \mathbb{E}_{\Omega_t}\left[ \sigma'(\lambda_j)\lambda_{l} \mathcal{E} \right] &=  \frac{1}{k}\sum_{t=1}^k a^*_t \mathbb{E}_{\Omega_t}\left[
\sigma^\prime(\lambda_j)\lambda_l \sigma(\lambda^*_t)
    \right]
    - \frac{1}{p}\sum_{s=1}^p a_s \mathbb{E}_{\Omega_t}\left[
\sigma^\prime(\lambda_j)\lambda_l \sigma(\lambda_s)
    \right] \\ 
\phi^{\rm{HD}}_{jl,t} = \mathbb{E}_{\Omega_t}\left[ \sigma'(\lambda_j)\sigma'(\lambda_l)\mathcal{E}^2  \right] &= \frac{1}{k^2} \sum_{r, t=1}^{k} a^*_r a^*_{t} \mathbb{E}_{\Omega_t}\left[\sigma^{\prime}\left(\lambda_j\right) \sigma^{\prime}\left(\lambda_l\right) \sigma\left(\lambda_r^*\right) \sigma\left(\lambda_{t}^*\right)\right] \\
& +\frac{1}{p^2} \sum_{s,u=1}^p a_s a_{u}\mathbb{E}_{\Omega_t}\left[\sigma^{\prime}\left(\lambda_j\right) \sigma^{\prime}\left(\lambda_l\right) \sigma\left(\lambda_s\right) \sigma\left(\lambda_{u}\right)\right] \\
& -\frac{2}{pk} \sum_{s=1}^p \sum_{r=1}^k a^*_r a_s \mathbb{E}_{\Omega_t}\left[\sigma^{\prime}\left(\lambda_j\right) \sigma^{\prime}\left(\lambda_l\right) \sigma\left(\lambda_r^*\right) \sigma\left(\lambda_{s}\right)\right] \\
& +\Delta  \mathbb{E}_{\Omega_t}\left[\sigma^{\prime}\left(\lambda_j\right) \sigma^{\prime}\left(\lambda_l\right)\right]
\end{align}
In addition, we can also express the population risk as
\begin{equation}
\mathcal{R}_t= \frac 12 \mathbb E_{\Omega_t} \left[\mathcal{E}^2\right] = \frac\Delta2 + \frac{1}{p^2}\sum_{s,u}^p a_s a_u \mathbb E_{\Omega_t} \left[\sigma(\lambda_s)\sigma(\lambda_u)\right] +
\frac{1}{k^2}\sum_{r,t}^k a^\star_r a^\star_t \mathbb E_{\Omega_t} \left[\sigma(\lambda^\star_r)\sigma(\lambda^\star_t)\right]
-\frac{2}{pk} \sum_{s,r=1}^{p,k} a_s a^\star_r\mathbb E_{\Omega_t} \left[\sigma(\lambda_s)\sigma(\lambda^\star_r)\right].
\end{equation}
We introduce auxiliary functions to simplify the mathematical notations: 
\begin{align}
I_2(\omega_{\alpha\alpha},\omega_{\alpha\beta},\omega_{\beta\beta}) &=   \mathbb E\left[ \sigma(\lambda_\alpha)\sigma(\lambda_\beta)\right] \\
I_3(\omega_{\alpha\alpha},\omega_{\alpha\beta}, \omega_{\alpha\gamma},\omega_{\beta\beta}, \omega_{\beta\gamma},\omega_{\gamma\gamma}) &= \mathbb E\left[ \sigma^\prime(\lambda_\alpha) \lambda_\beta\sigma(\gamma)\right] 
\\
 I_4(
      \omega_{\alpha\alpha},
      \omega_{\alpha\beta},
      \omega_{\alpha\gamma},
      \omega_{\alpha\delta},
      \omega_{\beta\beta},
      \omega_{\beta\gamma},
      \omega_{\beta\delta},
      \omega_{\gamma\gamma},
      \omega_{\gamma\delta},
      \omega_{\delta\delta}
    ) &= \mathbb E\left[\sigma^\prime(\lambda_\alpha) \sigma^\prime(\lambda_\beta)\sigma(\lambda_\alpha)\sigma(\lambda_\beta)\right]  \\
I^\text{noise}_2(\omega_{\alpha\alpha},\omega_{\alpha\beta},\omega_{\beta\beta}) &= \mathbb E\left[\sigma^\prime(\lambda_\alpha) \sigma^\prime(\lambda_\beta)\right] .
\label{eq:define_Is}
\end{align}
\noindent where we introduced the correlation $\omega_{\alpha\beta} = \mathbb E [\lambda_\alpha \lambda_\beta]$, where $(\alpha , \beta)$ are indices running on either the teacher or the student components. Dropping the time index for clarity, we finally obtain:
\begin{align}
    \psi_{jr} =& \frac{1}{k}\sum_{t=1}^k a^*_t I_3(Q_{jj}, M_{jr}, M_{jt}, P_{rr}, P_{rt}, P_{tt}) - \frac{1}{p}\sum_{s=1}^p a_sI_3(Q_{jj}, M_{jr}, Q_{js}, P_{rr}, M_{sr}, Q_{ss}) \\
    \phi^{\rm{GF}}_{jl} =&  \frac{1}{k}\sum_{t=1}^k a^*_t I_3(Q_{jj}, Q_{jl}, M_{jt}, Q_{ll}, M_{lt}, P_{tt}) - \frac{1}{p}\sum_{s=1}^p a_s I_3(Q_{jj}, Q_{jl}, Q_{js}, Q_{ll}, Q_{ls}, Q_{ss}) \\
    \phi^{\rm{HD}}_{jr} =&
    \frac{1}{k^2} \sum_{r, t=1}^{k} a^*_r a^*_{t} I_4(Q_{jj}, Q_{jl}, M_{jr}, M_{jt}, Q_{ll}, M_{lr}, M_{lt}, P_{rr}, P_{rt}, P_{tt})\\
    & +\frac{1}{p^2} \sum_{s,u=1}^p a_s a_{u} I_4(Q_{jj}, Q_{jl}, Q_{js}, Q_{ju}, Q_{ll}, Q_{ls}, Q_{lu}, Q_{ss}, Q_{su}, Q_{uu})\\
    & -\frac{2}{pk} \sum_{s=1}^p \sum_{r=1}^k a^*_r a_s  I_4(Q_{jj}, Q_{jl}, Q_{js}, M_{jr}, Q_{ll}, Q_{ls}, M_{lr}, Q_{ss}, M_{sr}, P_{rr})\\
    & +\Delta  I^\text{noise}_2(Q_{jj},Q_{jl},Q_{ll})\\
    \mathcal{R} =& \frac\Delta2 + \frac{1}{p^2}\sum_{s,u}^p a_s a_u  I_2(Q_{ss},Q_{su},Q_{uu}) +
\frac{1}{k^2}\sum_{r,t}^k a^\star_r a^\star_t I_2(P_{rr},P_{rt},P_{tt})
\\
&-\frac{2}{pk} \sum_{s,r=1}^{p,k} a_s a^\star_r I_2(Q_{ss},M_{sr},P_{rr})
\end{align}
When analizing a matching architecture setting, we just need to specify \(I_2,I_3,I_4\) and \(I^\text{noise}_2\). In the following sections we provide the explicit expersion for all the case used in numerical simulation inside this paper.

\subsection{Analytic case $\sigma =\erf\left(\sfrac{\cdot}{\sqrt{2}}\right)$}
The expressions can be found in the appendix of \cite{veiga2022phase}.

\subsection{Analytic case $\sigma =\text{He}_2$}
\label{sec:app:H2_theory}
We report here the auxiliary functions: 
\begin{align}
&I_2(\omega_{\alpha\alpha},\omega_{\alpha\beta},\omega_{\beta\beta}) = \mathbb E\left[(\lambda_\alpha^2-1)(\lambda_\beta^2-1)\right] = \omega_{\alpha\alpha} \omega_{\beta\beta} + 2 \omega_{\alpha\beta}^2 - \omega_{\alpha\alpha} - \omega_{\beta\beta} + 1 \\
&I_3(\omega_{\alpha\alpha},\omega_{\alpha\beta}, \omega_{\alpha\gamma},\omega_{\beta\beta}, \omega_{\beta\gamma},\omega_{\gamma\gamma}) = 2\mathbb E \left[\lambda_\alpha\lambda_\beta(\lambda^2_\gamma-1)\right] = 2 \omega_{\alpha \beta} \omega_{\gamma \gamma}+4 \omega_{\alpha \gamma} \omega_{\beta \gamma} - 2\omega_{\alpha\beta} \\
 &I_4(
      \omega_{\alpha\alpha},
      \omega_{\alpha\beta},
      \omega_{\alpha\gamma},
      \omega_{\alpha\delta},
      \omega_{\beta\beta},
      \omega_{\beta\gamma},
      \omega_{\beta\delta},
      \omega_{\gamma\gamma},
      \omega_{\gamma\delta},
      \omega_{\delta\delta}
    ) = 4\mathbb E\left[\lambda_\alpha\lambda_\beta(\lambda^2_\gamma-1)(\lambda^2_\delta-1)\right]
    \\
&I^\text{noise}_2(\omega_{\alpha\beta}) = 4 \mathbb E \left[\lambda_\alpha\lambda_\beta\right] = 4 \omega_{\alpha \beta}
\end{align} 
We now work on the different terms: 
\begin{align} 
4 \mathbb{E}\left[\lambda^\alpha \lambda^\beta\left(\lambda^\gamma\right)^2\left(\lambda^\delta\right)^2\right]=& 4 \omega_{\alpha \beta} \omega_{\gamma \gamma} \omega_{\delta \delta}+8 \omega_{\alpha \beta} \omega_{\gamma \delta}^2+8 \omega_{\alpha \gamma} \omega_{\beta \gamma} \omega_{\delta \delta^{+}} \\
& 16 \omega_{\alpha \gamma} \omega_{\beta \delta} \omega_{\gamma \delta}+16 \omega_{\alpha \delta} \omega_{\beta \gamma} \omega_{\gamma \delta}+8 \omega_{\alpha \delta} \omega_{\beta \delta} \omega_{\gamma \gamma}  \\
4\mathbb{E} [\lambda_\alpha \lambda_\beta \lambda_\gamma^2] &= 4\omega_{\alpha \beta} \omega_{\gamma \gamma} + 8 \omega_{\alpha \gamma} \omega_{\beta \gamma}\\
4\mathbb{E}[\lambda_\alpha \lambda_\beta \lambda_\delta^2] &= 4\omega_{\alpha \beta} \omega_{\delta \delta} + 8 \omega_{\alpha \delta} \omega_{\beta \delta} \\
4 \mathbb{E}[\lambda_\alpha \lambda_\beta] &= 4\omega_{\alpha \beta}
\end{align}

And then we arrive to:
\begin{align}
    I_4 &= 4 \omega_{\alpha \beta} \omega_{\gamma \gamma} \omega_{\delta \delta}+8 \omega_{\alpha \beta} \omega_{\gamma \delta}^2+8 \omega_{\alpha \gamma} \omega_{\beta \gamma} \omega_{\delta \delta} \\
& 16 \omega_{\alpha \gamma} \omega_{\beta \delta} \omega_{\gamma \delta}+16 \omega_{\alpha \delta} \omega_{\beta \gamma} \omega_{\gamma \delta}+8 \omega_{\alpha \delta} \omega_{\beta \delta} \omega_{\gamma \gamma} \\
& -4\omega_{\alpha \beta} \omega_{\gamma \gamma} - 8 \omega_{\alpha \gamma} \omega_{\beta \gamma} -  4\omega_{\alpha \beta} \omega_{\delta \delta} - 8 \omega_{\alpha \delta} \omega_{\beta \delta}  + 4 \omega_{\alpha \beta}
\end{align}

\subsection{Analytic case $\sigma = \text{He}_3$}
\label{sec:app:theory_H3}
We report the auxiliary function for this case below. 
\[\begin{split}
I^\text{noise}_2(\omega_{\alpha\alpha},\omega_{\alpha\beta},\omega_{\beta\beta}) \coloneqq& \mathbb E \left[(3\lambda_\alpha^2-3)(3\lambda_\beta^2-3)\right] \\
=& 9 - 9\omega_{\alpha\alpha} + 18\omega_{\alpha\beta}^2 - 9\omega_{\beta\beta} + 9\omega_{\alpha\alpha}\omega_{\beta\beta}
\end{split}\]
\[\begin{split}
I_2(\omega_{\alpha\alpha},\omega_{\alpha\beta},\omega_{\beta\beta}) \coloneqq& \mathbb E \left[(\lambda_\alpha^3-3\lambda_\alpha)(\lambda_\beta^3-3\lambda_\beta)\right] \\
=& 9\omega_{\alpha\beta} - 9\omega_{\alpha\alpha}\omega_{\alpha\beta} + 6\omega_{\alpha\beta}^3 - 9\omega_{\alpha\beta}\omega_{\beta\beta} + 9\omega_{\alpha\alpha}\omega_{\alpha\beta}\omega_{\beta\beta}
\end{split}\]
\begin{equation}\begin{split}
I_3(\omega_{\alpha\alpha},\omega_{\alpha\beta}, \omega_{\alpha\gamma},\omega_{\beta\beta}, \omega_{\beta\gamma},\omega_{\gamma\gamma}) \coloneqq& \mathbb E \left[(3\lambda_\alpha^2-3)\lambda_\beta(\lambda^3_\gamma-3\lambda_\gamma)\right] \\
=& -18\omega_{\alpha\beta}\omega_{\alpha\gamma} + 9\omega_{\beta\gamma} - 9\omega_{\alpha\alpha}\omega_{\beta\gamma} + 18\omega_{\alpha\gamma}^2\omega_{\beta\gamma} + \\
& 18\omega_{\alpha\beta}\omega_{\alpha\gamma}\omega_{\gamma\gamma} - 9\omega_{\beta\gamma}\omega_{\gamma\gamma} + 9\omega_{\alpha\alpha}\omega_{\beta\gamma}\omega_{\gamma\gamma}
\end{split}\end{equation}
\begin{equation}\begin{split}
    I_4(\cdots) \coloneqq& \mathbb E\left[(3\lambda_\alpha^2-3)(3\lambda_\beta^2-3)(\lambda_\gamma^3-3\lambda_\gamma)(\lambda_\delta^3-3\lambda_\delta)\right] \\
    =& -162\omega_{\alpha\gamma}\omega_{\alpha\delta} + 162\omega_{\alpha\gamma}\omega_{\alpha\delta}\omega_{\beta\beta} + 324\omega_{\alpha\beta}\omega_{\alpha\delta}\omega_{\beta\gamma} - 324\omega_{\alpha\gamma}\omega_{\alpha\delta}\omega_{\beta\gamma}^2 + 324\omega_{\alpha\beta}\omega_{\alpha\gamma}\omega_{\beta\delta} - 162\omega_{\beta\gamma}\omega_{\beta\delta} + \\
    & 162\omega_{\alpha\alpha}\omega_{\beta\gamma}\omega_{\beta\delta} - 324\omega_{\alpha\gamma}^2\omega_{\beta\gamma}\omega_{\beta\delta} - 324\omega_{\alpha\delta}^2\omega_{\beta\gamma}\omega_{\beta\delta} - 324\omega_{\alpha\gamma}\omega_{\alpha\delta}\omega_{\beta\delta}^2 + 162\omega_{\alpha\gamma}\omega_{\alpha\delta}\omega_{\gamma\gamma} - 162\omega_{\alpha\gamma}\omega_{\alpha\delta}\omega_{\beta\beta}\omega_{\gamma\gamma} - \\
    & 324\omega_{\alpha\beta}\omega_{\alpha\delta}\omega_{\beta\gamma}\omega_{\gamma\gamma} - 324\omega_{\alpha\beta}\omega_{\alpha\gamma}\omega_{\beta\delta}\omega_{\gamma\gamma} + 162\omega_{\beta\gamma}\omega_{\beta\delta}\omega_{\gamma\gamma} - 162\omega_{\alpha\alpha}\omega_{\beta\gamma}\omega_{\beta\delta}\omega_{\gamma\gamma} + 324\omega_{\alpha\delta}^2\omega_{\beta\gamma}\omega_{\beta\delta}\omega_{\gamma\gamma} + \\
    & 324\omega_{\alpha\gamma}\omega_{\alpha\delta}\omega_{\beta\delta}^2\omega_{\gamma\gamma} + 81\omega_{\gamma\delta} - 81\omega_{\alpha\alpha}\omega_{\gamma\delta} + 162\omega_{\alpha\beta}^2\omega_{\gamma\delta} + 162\omega_{\alpha\gamma}^2\omega_{\gamma\delta} + 162\omega_{\alpha\delta}^2\omega_{\gamma\delta} - 81\omega_{\beta\beta}\omega_{\gamma\delta} + 81\omega_{\alpha\alpha}\omega_{\beta\beta}\omega_{\gamma\delta} - \\
    & 162\omega_{\alpha\gamma}^2\omega_{\beta\beta}\omega_{\gamma\delta} - 162\omega_{\alpha\delta}^2\omega_{\beta\beta}\omega_{\gamma\delta} - 648\omega_{\alpha\beta}\omega_{\alpha\gamma}\omega_{\beta\gamma}\omega_{\gamma\delta} + 162\omega_{\beta\gamma}^2\omega_{\gamma\delta} - 162\omega_{\alpha\alpha}\omega_{\beta\gamma}^2\omega_{\gamma\delta} + 324\omega_{\alpha\delta}^2\omega_{\beta\gamma}^2\omega_{\gamma\delta} - \\
    & 648\omega_{\alpha\beta}\omega_{\alpha\delta}\omega_{\beta\delta}\omega_{\gamma\delta} + 1296\omega_{\alpha\gamma}\omega_{\alpha\delta}\omega_{\beta\gamma}\omega_{\beta\delta}\omega_{\gamma\delta} + 162\omega_{\beta\delta}^2\omega_{\gamma\delta} - 162\omega_{\alpha\alpha}\omega_{\beta\delta}^2\omega_{\gamma\delta} + 324\omega_{\alpha\gamma}^2\omega_{\beta\delta}^2\omega_{\gamma\delta} - \\
    & 81\omega_{\gamma\gamma}\omega_{\gamma\delta} + 81\omega_{\alpha\alpha}\omega_{\gamma\gamma}\omega_{\gamma\delta} - 162\omega_{\alpha\beta}^2\omega_{\gamma\gamma}\omega_{\gamma\delta} - 162\omega_{\alpha\delta}^2\omega_{\gamma\gamma}\omega_{\gamma\delta} + 81\omega_{\beta\beta}\omega_{\gamma\gamma}\omega_{\gamma\delta} - 81\omega_{\alpha\alpha}\omega_{\beta\beta}\omega_{\gamma\gamma}\omega_{\gamma\delta} + \\
    & 162\omega_{\alpha\delta}^2\omega_{\beta\beta}\omega_{\gamma\gamma}\omega_{\gamma\delta} + 648\omega_{\alpha\beta}\omega_{\alpha\delta}\omega_{\beta\delta}\omega_{\gamma\gamma}\omega_{\gamma\delta} - 162\omega_{\beta\delta}^2\omega_{\gamma\gamma}\omega_{\gamma\delta} + 162\omega_{\alpha\alpha}\omega_{\beta\delta}^2\omega_{\gamma\gamma}\omega_{\gamma\delta} - \\
    & 324\omega_{\alpha\gamma}\omega_{\alpha\delta}\omega_{\gamma\delta}^2 + 324\omega_{\alpha\gamma}\omega_{\alpha\delta}\omega_{\beta\beta}\omega_{\gamma\delta}^2 + 648\omega_{\alpha\beta}\omega_{\alpha\delta}\omega_{\beta\gamma}\omega_{\gamma\delta}^2 + 648\omega_{\alpha\beta}\omega_{\alpha\gamma}\omega_{\beta\delta}\omega_{\gamma\delta}^2 - 324\omega_{\beta\gamma}\omega_{\beta\delta}\omega_{\gamma\delta}^2 + \\
    & 324\omega_{\alpha\alpha}\omega_{\beta\gamma}\omega_{\beta\delta}\omega_{\gamma\delta}^2 + 54\omega_{\gamma\delta}^3 - 54\omega_{\alpha\alpha}\omega_{\gamma\delta}^3 + 108\omega_{\alpha\beta}^2\omega_{\gamma\delta}^3 - 54\omega_{\beta\beta}\omega_{\gamma\delta}^3 + 54\omega_{\alpha\alpha}\omega_{\beta\beta}\omega_{\gamma\delta}^3 + \\
    & 162\omega_{\alpha\gamma}\omega_{\alpha\delta}\omega_{\delta\delta} - 162\omega_{\alpha\gamma}\omega_{\alpha\delta}\omega_{\beta\beta}\omega_{\delta\delta} - 324\omega_{\alpha\beta}\omega_{\alpha\delta}\omega_{\beta\gamma}\omega_{\delta\delta} + 324\omega_{\alpha\gamma}\omega_{\alpha\delta}\omega_{\beta\gamma}^2\omega_{\delta\delta} - \\
    & 324\omega_{\alpha\beta}\omega_{\alpha\gamma}\omega_{\beta\delta}\omega_{\delta\delta} + 162\omega_{\beta\gamma}\omega_{\beta\delta}\omega_{\delta\delta} - 162\omega_{\alpha\alpha}\omega_{\beta\gamma}\omega_{\beta\delta}\omega_{\delta\delta} + 324\omega_{\alpha\gamma}^2\omega_{\beta\gamma}\omega_{\beta\delta}\omega_{\delta\delta} - \\
    & 162\omega_{\alpha\gamma}\omega_{\alpha\delta}\omega_{\gamma\gamma}\omega_{\delta\delta} + 162\omega_{\alpha\gamma}\omega_{\alpha\delta}\omega_{\beta\beta}\omega_{\gamma\gamma}\omega_{\delta\delta} + 324\omega_{\alpha\beta}\omega_{\alpha\delta}\omega_{\beta\gamma}\omega_{\gamma\gamma}\omega_{\delta\delta} + \\
    & 324\omega_{\alpha\beta}\omega_{\alpha\gamma}\omega_{\beta\delta}\omega_{\gamma\gamma}\omega_{\delta\delta} - 162\omega_{\beta\gamma}\omega_{\beta\delta}\omega_{\gamma\gamma}\omega_{\delta\delta} + 162\omega_{\alpha\alpha}\omega_{\beta\gamma}\omega_{\beta\delta}\omega_{\gamma\gamma}\omega_{\delta\delta} - 81\omega_{\gamma\delta}\omega_{\delta\delta} + \\
    & 81\omega_{\alpha\alpha}\omega_{\gamma\delta}\omega_{\delta\delta} - 162\omega_{\alpha\beta}^2\omega_{\gamma\delta}\omega_{\delta\delta} - 162\omega_{\alpha\gamma}^2\omega_{\gamma\delta}\omega_{\delta\delta} + 81\omega_{\beta\beta}\omega_{\gamma\delta}\omega_{\delta\delta} - 81\omega_{\alpha\alpha}\omega_{\beta\beta}\omega_{\gamma\delta}\omega_{\delta\delta} + \\
    & 162\omega_{\alpha\gamma}^2\omega_{\beta\beta}\omega_{\gamma\delta}\omega_{\delta\delta} + 648\omega_{\alpha\beta}\omega_{\alpha\gamma}\omega_{\beta\gamma}\omega_{\gamma\delta}\omega_{\delta\delta} - 162\omega_{\beta\gamma}^2\omega_{\gamma\delta}\omega_{\delta\delta} + 162\omega_{\alpha\alpha}\omega_{\beta\gamma}^2\omega_{\gamma\delta}\omega_{\delta\delta} + \\
    & 81\omega_{\gamma\gamma}\omega_{\gamma\delta}\omega_{\delta\delta} - 81\omega_{\alpha\alpha}\omega_{\gamma\gamma}\omega_{\gamma\delta}\omega_{\delta\delta} + 162\omega_{\alpha\beta}^2\omega_{\gamma\gamma}\omega_{\gamma\delta}\omega_{\delta\delta} - 81\omega_{\beta\beta}\omega_{\gamma\gamma}\omega_{\gamma\delta}\omega_{\delta\delta} + \\
    & 81\omega_{\alpha\alpha}\omega_{\beta\beta}\omega_{\gamma\gamma}\omega_{\gamma\delta}\omega_{\delta\delta}
\end{split}
\end{equation}
\section{Weak recovery with Generalized Linear Models} \label{app:sec:weakrecoveryglm}
In this section, we restrict our analysis to matching architectures with \(p=k=1\), i.e. \emph{Generalized Linear Models} (GLMs). Moreover, we consider as activation function the Hermite polynomials \(\sigma = \text{He}_\ell\), so that we can have control on the information exponent of the problem.
Finally, the training algorithm is \emph{projected SGD}, given by Equation~\eqref{eq:main:projected_sgd}.
We will also assume that \(a=a_\star=1\) throughout all the dynamics.

Let us start by noticing that the set of sufficient statistics reduces to just one single parameter \(m=\langle\vec w,\vec w_\star\rangle\).
Retracing backward all the steps of the Sections \ref{sec:app:matching_architectures}~and~\ref{sec:app:derivation_of_lb_equations} up to \eqref{eq:app:generallowerbound}, we can obtain the lower bound for the update of \(m\). As examples the explicit equation for \(\sigma = \text{He}_2\) is
\[\begin{split}
    m_{t+1}-m_t \ge& \gamma_0 d^{-\delta}\Bigg[4m_t - 4m_t^3 - d^{-\delta}\gamma_0\mathbf{1}_{\{\mu\neq0\}}\left(8m_t-8m_t^3\right) +\\ 
        &+d^{-\delta+1-\mu}\frac{\gamma_0}{n_0}\left(24m_t-24m_t^3+2m_t^2\Delta\right)\Bigg],
\end{split}\]
while for \(\sigma = \text{He}_3\) is
\[\begin{split}
    m_{t+1}-m_t \ge& \gamma_0 d^{-\delta}\Bigg[18m_t^2 - 18m_t^4 - d^{-\delta}\gamma_0\mathbf{1}_{\{\mu\neq0\}}\left(162m_t + 324m_t^4 - 162m_t^5\right) +\\ 
&+d^{-\delta+1-\mu}\frac{\gamma_0}{n_0}\left(-1728m_t - 648m_t^3 + 3348m_t^4-972m_t^5-9\Delta m_t^3\right)\Bigg]. 
\end{split}\]
In general, for an Hermite polynomial activation \(\text{He}_\ell\), the equation of the evolution of \(m\) around \(m=0\) is given by
\begin{equation} \label{eq:app:m_expansion}
    m_{t+1}-m_t \ge d^{-\delta}\beta_\ell m^{\ell-1} 
        -d^{-\delta} \left(d^{-\delta+1-\mu}\alpha_\ell + d^{-\delta}\mathbf{1}_{\{\mu\neq0\}}\phi_\ell   \right)m
\end{equation}
where we fixed \(\gamma_0=n_0=1\) for simplicity; \(\alpha_\ell, \beta_\ell, \phi_\ell\) are constants. For computing the full equations for any generic \(\ell\), with the constants values, we refer to the Mathematica notebook published in the repository of this work.

At initialization, \(m_0 = \sfrac{1}{\sqrt{d}}\). The crucial observation is that a sufficient condition to escape initialization is to have equation~\eqref{eq:app:m_expansion} being  \emph{expansive}, namely \(\Delta m>0\) for \(m\) close to zero. This can be true if and only if \(\left(d^{-\delta+1-\mu}\alpha_\ell + d^{-\delta}\mathbf{1}_{\{\mu\neq0\}}\phi_\ell   \right)m\) is negligible when compared to \(\beta_\ell m^{\ell-1}\), so that the equation for \(m\) is lower-bounded by
\begin{equation} \label{eq:app:m_expansive}
     \frac{\Delta m}{\Delta t} \ge \beta_\ell m^{\ell-1} + \text{h.o.t} \quad\text{with}\quad \Delta t = d^{-\delta}
\end{equation}
Assuming that \(\gamma = o_d(1)\), the bound becomes tight and we can also derive some sharp characterization of the escaping time. By simple arguments on differential equations, we can claim that the order of magnitude of steps needed to escape the initial mediocrity is given by
\[
 T = \begin{cases}
     O_d\left(\frac{1}{\Delta t}\right) & \ell=1 \\
     O_d\left(\frac{\log m_0}{\Delta t}\right) & \ell=2 \\
     O_d\left(\frac{1} {m_0^{\ell-2}\Delta t}\right) & \ell\ge3 \\
 \end{cases},
\]
remembering that \(m_0=\sfrac{1}{\sqrt{d}}\) and \(\Delta t = d^{-\delta}\), these lead to
\begin{equation} \label{eq:app:escapingtimes}
\log_d T \sim \begin{cases}
    \max(\delta,0) & \ell=1 \\
    \log\log d + \delta & \ell=2 \\
    \delta - 1 +\sfrac{\ell}{2} & \ell\ge3 \\
 \end{cases}.
\end{equation}

It's clear that to escape as fast as possible, we want \(\delta\) to be the smallest possible, or in other words, having the learning rate as large as possible. Obviously, \(\delta\) is constrained by the values that make equation~\eqref{eq:app:m_expansive} true (or equivalently by the assumptions of the formal proof in Appendix~\ref{sec:app:proofs}).
The phase diagram of the allowed value of \(\delta\) and \(\mu\) is summarized in Figure~\ref{fig:app:deltamu_phasediagram}: the green region is where the equations for \(m\) is expansive, the red and the yellow region is where the equations in attractive, so there is no escaping, the purple region is where we can't do expansion because the learning rate is too large and the process diverge. Figure~\ref{fig:cold_start_pd} in the main text shows the same result in terms of \(T\) and \(n_b\), when \(\ell\ge3\).
\begin{figure}
    \centering
    \includegraphics[width=0.9\textwidth]{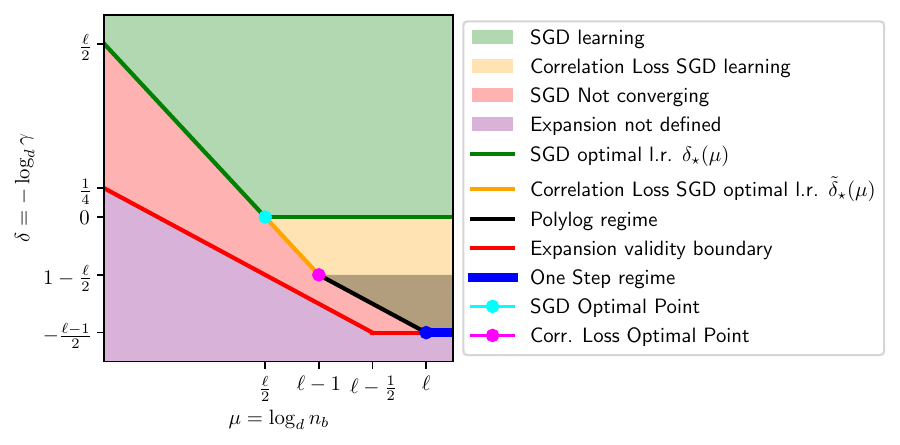}
    \caption{\textbf{Phase diagram for the learning rate:} The plot identifies different learning behaviors of standard SGD and \emph{Correlation Loss SGD} for different values of learning rate and batch size when considering randomly initialized networks, i.e. $m_0 = O(\sfrac{1}{\sqrt{d}})$.}
    \label{fig:app:deltamu_phasediagram}
\end{figure}

\subsection{Correlations loss SGD}
When using \emph{Correlation Loss SGD}, Equation~\eqref{eq:app:m_expansion} rewrite as 
\[
    \frac{\Delta m}{\Delta t} \ge \beta_\ell m^{\ell-1} 
        -d^{-\delta+1-\mu}\alpha_\ell m
\]
effectively removing a constraint on the possible values of \(\delta\). The modified version of SGD can use smaller values of \(\delta\) for escaping the initial condition, reaching regions in the phase diagram that are not allowed for SGD: this is colored in yellow in Figures~\ref{fig:app:deltamu_phasediagram}~and~\ref{fig:cold_start_pd}. Of course, if the learning rate becomes too large, all the theory does not work anymore (purple region in the diagram). In Figure~\ref{fig:cold_start_pd} we show that the number of steps needed to weakly recover can be pushed down to be smaller than any power scaling with \(d\) (black and blue line on the x-axis). The picture becomes clearer if we look at the same diagram in terms of \((\mu,\delta)\): Correlation Loss SGD can be used with learning very large learning rates (\(1-\sfrac\ell2>\delta>-\sfrac{(\ell-1)}{2}\)) such that the escaping times is \(T=O(\mathrm{polylog}(d))\), as proved in Theorem~\ref{thm:main:no_yhat_weak_recovery}.
We believe that the true number of steps is actually \(T=O(\mathrm{log}(d))\), but we could not find any formal proof; in Section~\ref{app:sec:polylog}, we were able to show that for \(\ell=2\) we have \(T=O(\mathrm{log}(d))\), relying the result on numerical integration of our asymptotic theory. Lastly, if the learning rate is of order \(\gamma=O(d^{-\delta})=O(d^{\sfrac{\ell-1}{2}})\) we recover the result of \cite{{dandi2023twolayer}}: the target can be weakly recovered in just one step, when the batch size is \(n_b>O(d^\ell)\).

\subsection{Simple example: retrieving \cite{arous2021online}}
In this section, we want to show how to find the same result presented in \cite{arous2021online} starting from our formalism.
There, online one-pass SGD is considered, meaning \(n_b=1 \implies \mu = 0\) in our context. Moreover, a vanishing learning rate is assumed, which implies \(\delta>0\), and all the bounds for the evolutions of \(m\) are tight. The condition for expansiveness of equation~\eqref{eq:app:m_expansion} becomes
\[
m_0^{\ell-1} >\!\!> d^{-\delta+1}m_0 \implies (\ell-1)\log_d m_0 \ge -\delta + 1 + \log_d m_0 \implies \delta \ge 1 + (2-\ell)\log_d m_0
\]
Plugging in \(m_0 = \sfrac{1}{\sqrt{d}}\)  we finally get \(\delta \ge \sfrac{\ell}{2}\), where the equality is the best possible value of the learning rate in order to make the escaping faster. Note that for \(\ell=1\) we are also bounded from Lemma~\ref{lem:app:boundongamma}, so \(\delta\ge1\). Combining with \eqref{eq:app:escapingtimes}, finally gives us the the minimal number of steps needed
\begin{equation}
    T \sim \begin{cases}
        d & \ell =1 \\
        d\log d & \ell=2 \\
        d^{\ell-1} & \ell\ge3 \\
 \end{cases}.
\end{equation}
This result matches \cite{arous2021online}.
\subsection{Extension to $p>1$}
The extension to general two-layer network student functions ($p>1$), while keeping always the target fixed to be a single-index one, can be readily done by performing an analysis similar to the above. The considerations on the weak recovery trade-offs done in the previous sections are not changed upon re-scaling the learning rate with the hidden layer size $p=O(1)$, i.e. $\sfrac{\gamma_{\rm 2LNN}}{p} = \gamma_{\rm GLM}$. Therefore, the scaling laws detailed in the phase diagram (Fig.~\ref{fig:cold_start_pd}) are not modified, and only prefactors, i.e. quantity not scaling with the input dimension, change with respect to the $p=1$ case. We illustrate this phenomenon numerically in Fig.~\ref{fig:app:k1p4singleindex}. We leave the detailed theoretical analysis of the $p>1$ case for future work, with particular attention to the limit $p \to \infty$ which we believe is an interesting avenue of future research.

\begin{figure}
    \centering
    \includegraphics[width=.49\textwidth]{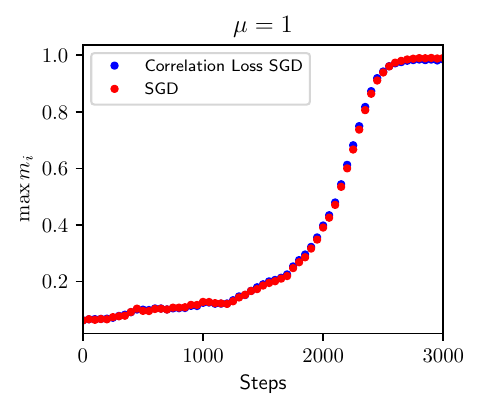}
    \includegraphics[width=.49\textwidth]{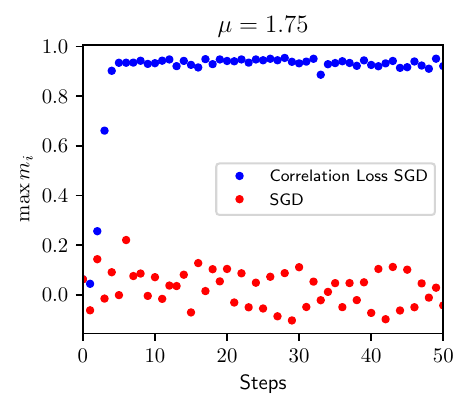}
    \caption{learning single-index teacher with a wide student, when information exponent is \(\ell=3\): \(f^\star(\vec{x})=\text{He}_3(\vec{w}^\star\cdot\vec{x}), f(\vec{x})=\sfrac14\sum_{i=1}^4\text{He}_3(\vec{w}_i\cdot\vec{x})\).
    Our theory extends to this case, showing that when \(\mu>\sfrac\ell2\) only correlation loss can weakly recover the target.
    (\(d=256, \gamma = \gamma_0\cdot p n_b d^{-\sfrac\ell/2}\))
    }
    \label{fig:app:k1p4singleindex}
\end{figure}
\section{Additional numerical investigation}
\label{sec:app:hyperparams}
In this appendix, we provide additional details on the numerical implementations presented in the main text, along with further explorations. The code to reproduce representative figures is available in \href{https://github.com/IdePHICS/batch-size-time-complexity-tradeoffs}{https://github.com/IdePHICS/batch-size-time-complexity-tradeoffs}.

\subsection{Cold start for multi-index models} \label{app:multiindex_model}
The theoretical considerations for weak recovery under cold starts presented in Theorems~\ref{thm:main:sgd_weak_recovery}\&\ref{thm:main:no_yhat_weak_recovery} are proven rigorously just for one-hidden neuron network learning single-index targets (\(p=k=1\)); this section aims to provide arguments to generalize this to the multi-index case. 

Note that for single-index models the initial saddle is the only critical point where the algorithm can get stuck during the dynamics, while this is not true in general for multi-index settings. Indeed, after having weakly recovered a subspace of the span of the target weights, the learning dynamics can encounter another saddle of the loss function; this behavior is known as saddle-to-saddle dynamics \cite{jacot2021saddle}. In this manuscript, we focus on escaping from the saddle at initialization, leaving further explorations of the dynamics to future work. We follow 
\cite{abbe2023sgd, dandi2023twolayer} where the authors generalize the concept of Information Exponent (defined in.~\eqref{def:main:information_exponent}) to the multi-index setting (See Definition $1$ of \cite{abbe2023sgd} and Definition $3$ of \cite{dandi2023twolayer}), let us call this quantity the \textit{Leap Index} of the target.
We expect that, as long as the dynamics around the saddle at initialization is analyzed, one can substitute the Information Exponent ($\ell$) of the teacher in the single-index phase diagram in Fig.~\ref{fig:cold_start_pd} with the Leap Index of the target. We explore the Time~/~Complexity tradeoffs in Figure~\ref{fig:app:multiindex} for a fixed teacher function with Leap Index equal to $3$: we observe a relevant decrease in the iterations needed to weakly recover the target subspace as the batch size is increased.  

\begin{figure}[t]
    \centering
    \includegraphics[width=.5\textwidth]{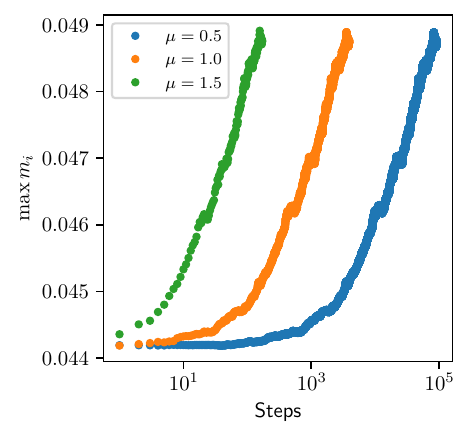}
    \caption{\textbf{{Multi-index large-batch benefits:}} Comparison between the performance of plain SGD learning multi-index model, for different values of \(\mu\). The target is \(h^\star(z_1,z_2,z_3)=\tanh{(z_1z_2z_3)}\), while the student is a wide 2-layer network \(f(\vec{x})=\frac1p\sum^p_{i=1}\tanh(\vec{w}_i\cdot\vec{x})\) (hence \(\ell=3,p=30, k=3, d=512\)). Using a larger batch speeds up the weak-correlation time even when the target is multi-index, and it is learned with a non-matching architecture.
    }
\label{fig:app:multiindex}
\end{figure}


\subsection{Behavior of \emph{Spherical SGD}}
In many theoretical work \cite{arous2021online,abbe2023sgd}, 
the algorithm used during training uses the \emph{spherical gradient} instead of the simple one. The update rule used instead of Equation~\eqref{eq:gd_update_weight} is
\begin{align}
    \vec w_{j,t+1} =  \frac{\vec{w}_{j,t} - \gamma \left(I_d-\vec{w}_{j,t}\vec{w}_{j,t}^\top\right)\nabla_{\vec w_{j,t}}\ell_t}{\norm{\vec{w}_{j,t} - \gamma \left(I_d-\vec{w}_{j,t}\vec{w}_{j,t}^\top\right)\nabla_{\vec w_{j,t}}\ell_t}} \qquad \forall t \in [T], \, \forall j \in [p]
    \label{eq:app:gd_update_weights}
\end{align}
In practice, only the gradient component orthogonal to the weights is taken into account. This algorithm is particularly convenient for theoretical analysis because it is easier to find a lower bound for the evolution of \(m_t\), since it is always true that 
\[
    \norm{\vec{w}_{j,t} - \gamma \left(I_d-\vec{w}_{j,t}\vec{w}_{j,t}^\top\right)\nabla_{\vec w_{j,t}}\ell_t} \ge 1,
\]
while its analogous for Projected SGD does not hold.

In this section, we want to show that \emph{Spherical SGD} is behaving like \emph{Correlation Loss SGD} when \(\gamma\) is not vanishing, namely that is possible to escape mediocrity when the batch size is sufficiently large. For small batch size, Projected SGD and Spherical SGD coincide, while when \(\gamma<0\) their behaviors are drastically different, and only the latter is able to escape mediocrity taking advantage of the large learning rate; a gap between the two is already noticeable at the \emph{Optimal Point}, where they both escape but the spherical is slightly faster.
Finally, note that is is possible to introduce a \emph{Correlation Loss Spherical SGD}, by changing the loss in the same way as the usual \emph{Correlation Loss SGD}. There is no practical difference between the two algorithms when working with correlation loss.

\begin{figure}[t]
    \centering
    \includegraphics[width=\textwidth]{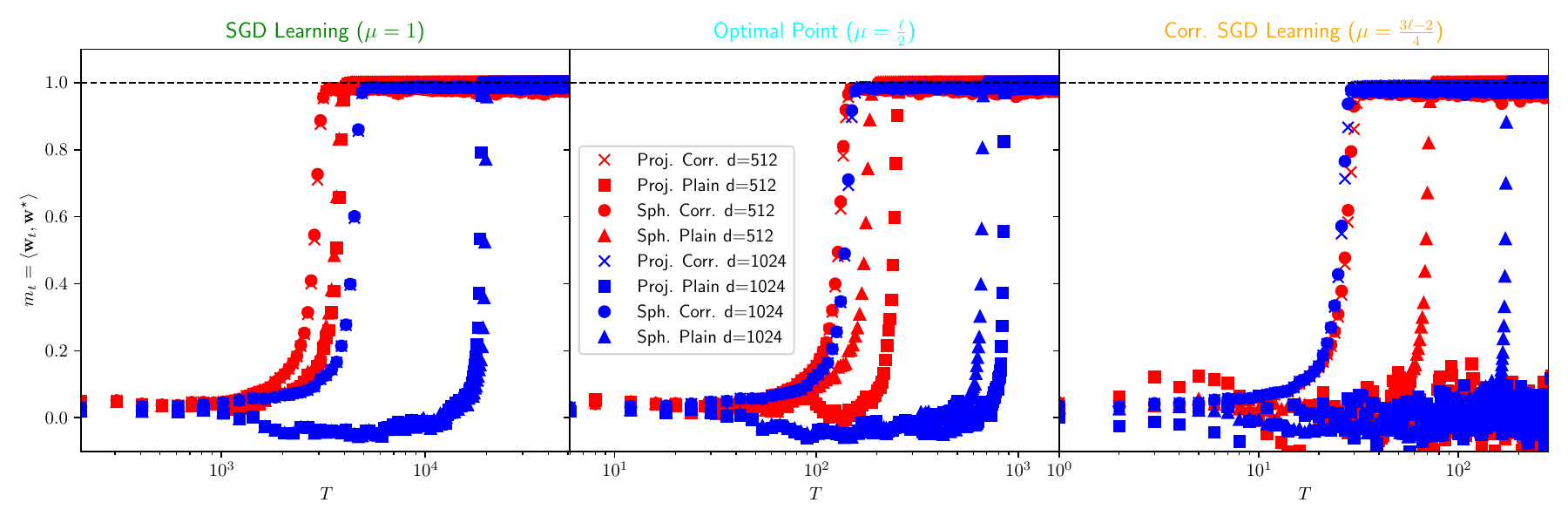}
    \caption{\textbf{{Correlation Loss SGD} weak recovery:} Comparison between the performance of plain SGD, the Correlation Loss SGD and Spherical SGD, in different regions of the phase diagram, and for different sizes \(d\).
    The plot shows the test error as a function of the optimization steps. Both the teacher and the student activation functions are fixed to \(\sigma = h^\star =\text{He}_3\), so the information exponent is \(\ell=3\). In all the three plots we vary the value of \(\mu\), while \(\delta = \mu - \sfrac{\ell}{2}\).
    Spherical SGD learns even in regions forbidden for plain SGD, as the Correlation Loss does. Note that the Spherical Correlation Loss is equivalent to the Projected Correlation Loss in all the regimes.
    }
\label{fig:app:spherical}
\end{figure}

\subsection{\emph{Adaptive SGD}: combining Correlation Loss SGD with plain SGD}

Despite these benefits, the correlation loss is not a good choice to fully learn the target. In this subsection, we explore the idea of combining the two algorithms to escape fast with correlation loss, and then reach the global minimum with the MSE loss.
We will call the combination of these two algorithms \emph{Adaptive SGD}.

We are going to test in the simplest case possible: GLM with \(\text{He}_3\) as activation function (we remark that there is no benefit in using Correlation Loss SGD over plain SGD when \(\ell\le2\)). If we run the algorithm for multi-index models, it would help to escape the initial saddle, but the algorithm may get stuck in another critical point that is not the global minimum. The study on how to escape fast from a critical point other than the initial one goes beyond the scope of this paper. Our \emph{Adaptive SGD} procedures works as follows:
\begin{enumerate}
    \item Make a Correlation Loss SGD step;
    \item If the Loss is smaller than 60\% of the initial loss, jump to Step~3, otherwise go back to Step~1;
    \item Reduce the learning rate of a factor \(0.995\) and do a Standard SGD step;
    \item If converged stop, otherwise go back to Step~3.
\end{enumerate}
The learning rate is progressively reduced because the plain SGD requires a lower learning rate compared to the one used by correlation loss to escape fast. Certainly, one can design a much more powerful algorithm than the one we present, but the goal here is just to show that the combination of the two is beneficial, and not to find the possible one.
\begin{figure}
    \centering
    \includegraphics{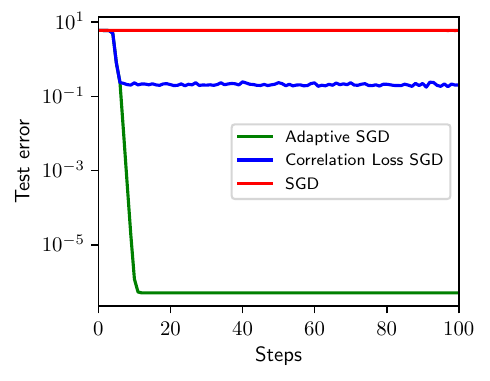}
    \caption{\textbf{Adaptive SGD:} The plot compares the performance of SGD and Correlation loss SGD, algorithms with Adaptive SGD; this protocol consists of first using correlation loss SGD to achieve weak recovery, and then switch to adaptive SGD for learning the target. (\(\ell=3,\mu=1.85, \delta=\mu-\frac\ell2, \Delta=10^{-6}\)).}
    \label{fig:app:adaptive}
\end{figure}
Figure~\ref{fig:app:adaptive} shows how the Adaptive SGD Algorithm is the best one when fully learning the target.

\subsection{Polylog regime example} \label{app:sec:polylog}
We showed in Section~\ref{sec:main:cold_start} that it is possible to push down the number of steps needed to weakly recover the target until it is growing less than any power law. In order to achieve this, we need to run \emph{Correlation Loss SGD} with \(n_b > d^{\ell-1}\) and \(\gamma > O(d^{1-\sfrac\ell2})\), as pictured by Figure~\ref{fig:app:deltamu_phasediagram}.
Proposition~\ref{prop:main:exact_asympt} shows that our theory for cold start, based on expansion of the process \eqref{eq:app:process}, is not valid, among other conditions, when \(\delta < 0\).
Therefore, the only case where we can simultaneously observe the \emph{polylog regime} and have an exact asymptotic description for the full dynamics is when \(\ell=2, \gamma = O(1)\) and \(n_b=O(d)\). Let's stick for simplicity with a GLM whose activation is \(\text{He}_2\). Note that the total sample complexity is always \(N = n_b T = O(d \log d)\).

\begin{figure}
    \centering
    \includegraphics[width=\textwidth]{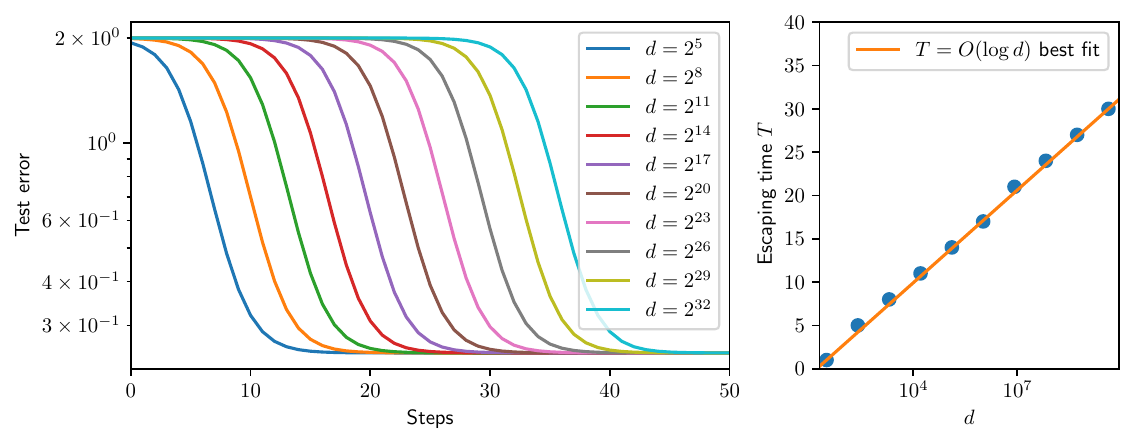}
    \caption{\textbf{Phase retrieval with large batch size:} Numerical integration of the process \eqref{eq:app:process}, for  \(f=f^\star=\text{He}_2, \gamma = O(1)\) and \(n_b=O(d)\). The escaping time dependence on the number of time steps is a \(T=O(\log d)\): we claim this to be valid for all the \emph{polylog regime}.}
    \label{fig:app:subpower}
\end{figure}
Figure~\ref{fig:app:subpower} shows a numerical test of our theory in this particular case. We see that as \(d\) grows, also the time needed to escape initial conditions grows. In the right part of the Figure we show that the exact dependence is \(T = O(\log d)\), that is indeed a polylog law.

\subsection{Large-batch corrections to asymptotic dynamics}
Although disappearing when taking the limit, the terms of evolution process \eqref{eq:app:process} coming from intra-batch correlation are useful for providing a better description at large but finite \(d\). Effectively, they are behaving as a first correction to the asymptotic limit.

In this section, we aim to provide numerical arguments about the importance of intra-batch correlations at finite \(d\). We stick with the GLM setting, with \(\erf\) as activation function. Note that since the information exponent of this target is 1, there is no mediocrity at initialization, we can set \(m=\vec w^\top \vec w ^\star = 0\) without falling in the \emph{cold start regime}.
\begin{figure}
    \centering
    \includegraphics[height=0.35\textwidth]{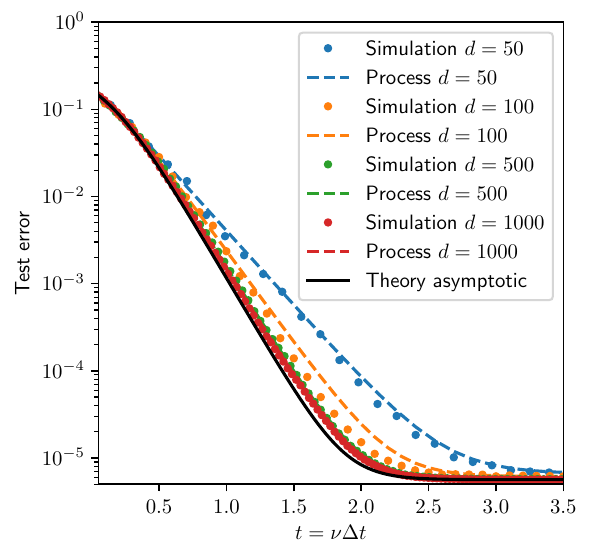}
    \includegraphics[height=0.35\textwidth]{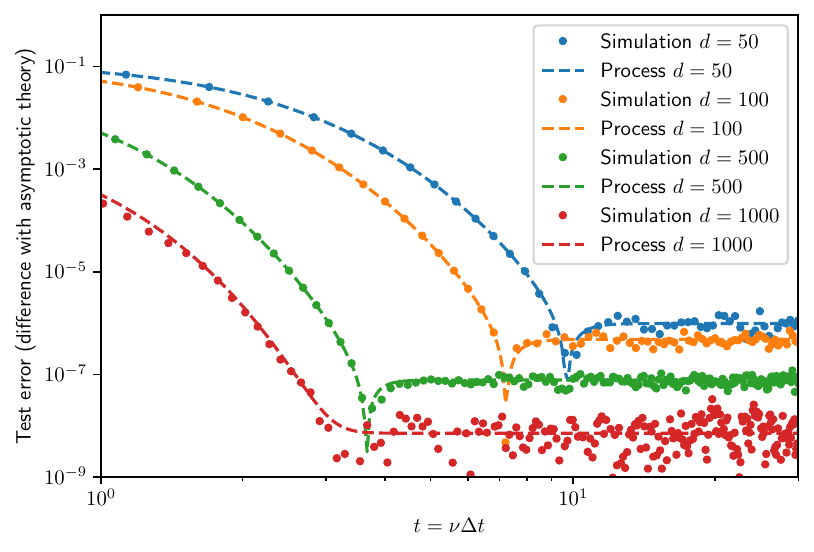}
    \caption{\textbf{Non asymptotic corrections:} Comparison between simulations (dots), exact asymptotic solution of ODE (continuous black line), and exact solution including the subleading large-batch corrections (dashed line). As expected, as \(d\to+\infty\) the simulations are getting closer and closer to the asymptotic solution; on the other hand, taking into account the batch correlations allows to have a better description of the dynamic even a small \(d\).}
    \label{fig:app:finitesizeasym}
\end{figure}
Figure~\ref{fig:app:finitesizeasym} shows simulations for different values of \(d\) (dots), accompanied by the full process dynamic that includes the intra-batch correlation terms (dashed lines); the asymptotic solution of the differential equations \eqref{eq:spherical_closed_form_ode} is the continuous black line. To enlighten the process even more, we also shows the difference between the asymptotic solution, at the actual finite \(d\) one on the right part of the figure. We see that the full process solution always match with the actual project SGD simulation; most importantly, when \(d\) grows the simulations are getting closer and closer to the asymptotic solution, confirming that the large batch plays no effect in high-dimension.

\end{document}